\documentclass[twoside]{article}
\makeatother  
\usepackage{babel} 
\usepackage{times}
\usepackage{helvet}
\usepackage{courier}
\usepackage{multicol}
\usepackage{soul}
\usepackage{xparse}
\usepackage{amsmath}
\usepackage{amsthm}
\usepackage{amssymb}
\usepackage{xspace}
\usepackage{relsize}
\usepackage{graphicx}
\usepackage{multirow}
\usepackage{comment}
\usepackage{mathtools}
\usepackage{dsfont}
\usepackage{enumerate}
\usepackage{enumitem}
\usepackage{mathabx}
\usepackage{mathrsfs}
\usepackage{footnote}
\usepackage{multirow}
\usepackage{xcolor}
\usepackage{hyperref} 
\usepackage{mleftright} 
\mleftright

\usepackage[round]{natbib}
\renewcommand{\cite}{\citep}

\newcommand{\EE}{\mathbb{E}}

\newcommand{\Var}{\mathrm{Var}}
\newcommand{\imax}{i_{\max}}

\theoremstyle{definition}
\newtheorem{definition}{Definition}
\newtheorem{theorem}[definition]{Theorem}
\newtheorem{lemma}[definition]{Lemma}
\newtheorem{corollary}[definition]{Corollary}
\newtheorem{remark}[definition]{Remark}

\def\eps{{\epsilon}}

\usepackage[margin=1in]{geometry}

\begin{document}

\title{Order-Optimal Sequential 1-Bit Mean \\ Estimation in General Tail Regimes\footnotetext{A preliminary version of this work was presented at the 29th International Conference on Artificial Intelligence and Statistics (AISTATS 2026). }}


\author{Ivan Lau and Jonathan Scarlett}
\date{National University of Singapore}
\maketitle

\begin{abstract}
In this paper, we study the problem of mean estimation under 1-bit communication constraints. We propose a novel adaptive mean estimator based solely on randomized threshold queries, where each 1-bit outcome indicates whether a given sample exceeds a sequentially chosen threshold. Our estimator is $(\epsilon, \delta)$-PAC for any distribution with a bounded mean $\mu \in [-\lambda, \lambda]$ and a bounded $k$-th central moment $\mathbb{E}[|X-\mu|^k] \le \sigma^k$ for any fixed $k > 1$. Moreover, our sample complexity is order-optimal in all such tail regimes, i.e., for every such $k$ value. For $k \neq 2$, our estimator's sample complexity matches the unquantized minimax lower bounds plus an unavoidable $O(\log(\lambda/\sigma))$ localization cost. For the finite-variance case ($k=2$), our estimator's sample complexity has an extra multiplicative $O(\log(\sigma/\epsilon))$ penalty, and we establish a novel information-theoretic lower bound showing that this penalty is a fundamental limit of 1-bit quantization. We also establish a significant adaptivity gap: for both threshold queries and more general interval queries, the sample complexity of any non-adaptive estimator must scale linearly with the search space parameter $\lambda/\sigma$, rendering it vastly less sample efficient than our adaptive approach. 
Finally, we present algorithmic variants that (i) handle an unknown sampling budget, (ii) adapt to an unknown scale parameter~$\sigma$ given (possibly loose) bounds, (iii) require only two stages of adaptivity to achieve order-optimal sample complexity at the expense of more general 1-bit queries, and (iv) leverage multiple local samples per 1-bit query to proportionally reduce communication costs.
\end{abstract}

\section{Introduction}

Mean estimation is one of the most fundamental and ubiquitous tasks in statistics, machine learning, and theoretical computer science. In modern applications, such as those arising in large-scale sensor networks and decentralized federated learning, the learner often cannot directly access the raw data, and communication bottlenecks often mandate that data samples be severely compressed prior to transmission. We address an extreme case of this communication-constrained setting, where the learner receives only a single bit of feedback per sample. This extreme quantization raises a fundamental theoretical question: 
\begin{quote}
  \textit{How does 1-bit quantization affect the sample complexity of
  mean estimation?}
\end{quote}

We specifically focus on the threshold query model, where the learner sequentially sends a scalar threshold to an agent and receives a 1-bit indicator of whether the observed sample exceeds it.  Beyond its simplicity, the threshold query model naturally captures interesting real-world scenarios where observing the exact value of a sample is impossible, but binary threshold crossings are easily observed. A canonical example is pricing in economics~\cite{kleinberg2003value, leme2023pricing}: A seller cannot directly observe a buyer's maximum willingness-to-pay (their hidden sample), but by offering a price, the seller observes a 1-bit purchasing decision indicating whether the buyer's internal valuation exceeds said price. Similar mechanisms appear in bio-assay testing, where a specimen reacts only if a viral load exceeds a dosage threshold, and in reliability engineering, where a component fails if a stressor exceeds its physical limit.

A significant challenge in 1-bit mean estimation is the loss of spatial information. When the location of the distribution's core mass is highly uncertain (e.g., the mean lies somewhere in $[-\lambda, \lambda]$ for some $\lambda$ much higher than the variance), taking threshold queries in the ``wrong'' region may yield sequences of all zeros or all ones, providing virtually no statistical information. This problem is severely exacerbated when the underlying distribution exhibits heavy tails, as the estimator must distinguish between rare large outlier samples and the true center of mass without being able to observe the magnitude of the outliers.

In our preliminary conference version \cite{lau2025sequential}, we proposed an adaptive 1-bit mean estimator that achieves near-optimal sample complexity for distributions with a bounded $k$-th central moment for $k \ge 2$ (e.g., distributions with finite variance or sub-Gaussian tails). However, that preliminary framework suffered from several notable limitations. First, it was entirely unclear whether the framework could be generalized to handle heavy-tailed distributions where $k \in (1, 2)$. Second, it suffered from suboptimal logarithmic gaps between the upper and lower bounds. Finally, the estimator relied on the more demanding interval-query model, requiring the learner to effectively query two boundaries simultaneously.

In this paper, we address these issues in detail by restructuring and refining the framework, in particular attaining the following advantages:

\begin{itemize}

    \item \textbf{Threshold Queries and Heavy Tails:} We replace the interval-query model with the simpler and more practically relevant threshold query model. Furthermore, by generalizing the framework of our preliminary version, we extend our estimator to successfully handle heavy-tailed distributions where $k \in (1, 2)$.
    
    \item \textbf{Order-Optimality for all $k > 1$:} To estimate the mean using 1-bit feedback, our approach partitions the search space into regions to estimate ``local'' probability masses. We replace the previous $k$-dependent spatial partitioning scheme of the prior work with a simpler universal geometric grid. By pairing this single grid with a carefully tuned $k$-dependent sample allocation strategy, we eliminate the suboptimal logarithmic factors present in the conference version, achieving order-optimal sample complexity across all tail regimes $k > 1$.  While this is shown using a matching lower bound from the \emph{unquantized} setting when $k \ne 2$, for the finite-variance case $k = 2$ we further provide a novel lower bound under 1-bit quantization (not present in the conference version) that shows a multiplicative $\log(\sigma/\eps)$ factor to be unavoidable.  

    \item \textbf{Other Refinements and Extensions:} We refine the two-stage version of the estimator from the preliminary version by replacing its first stage with a global coding-theoretic procedure, yielding an order-optimal 1-bit mean estimator with only two stages of adaptivity. Moreover, we show that when each 1-bit query may depend on $m$ local samples, the communication cost can be reduced by roughly a factor of $m$.
    
\end{itemize}
See Section \ref{sec: contributions} for a more detailed summary of our contributions.

\subsection{Problem Setup}\label{sec: problem setup}
\textbf{Distributional assumptions.}
Let $X$ be a real-valued random variable\footnote{Our results also
  have implications for certain multivariate settings; see
Section~\ref{sec:multivar} for details.} with unknown distribution $D$.
We assume that $D$ belongs to a (non-parametric) family $\mathcal{D}
=  \mathcal{D}(k, \lambda, \sigma)$, defined by known parameters $k >
1$ and $\lambda > \sigma > 0$; a distribution $D$ is in this family
if the following conditions hold:
\begin{enumerate}[topsep=0pt, itemsep=0pt]
  \item Bounded mean: $\mu(D)  \in [-\lambda,
    \lambda]$,\footnote{Without loss of generality, we set the search
      range to be symmetric.  Note that a dependence on the search
      width $\lambda$ is unavoidable in the 1-bit setting (see
      Theorem~\ref{thm: adaptive lower bound}), but a crude upper bound
      can be used due to the mild logarithmic dependence in the sample
    complexity (see Theorem~\ref{thm: main}).}
  \item Bounded $k$-th central moment: $\EE|X-\mu|^k \le \sigma^k$,
\end{enumerate}
where $k$, $\lambda$, and $\sigma$ are known to the learner.  Note
that the support of $D$ may be unbounded.

\textbf{1-bit communication protocol.}
The learner is interested in estimating the population mean $\mu =
\mu(D) = \EE[X]$ from~$n$ independent and identically distributed
(i.i.d.) samples $X_1, \dotsc, X_n \sim D$, subject to a 1-bit
communication constraint per sample.
The estimation proceeds through an interactive protocol between a
learner and a single memoryless agent\footnote{Equivalently, this can
  be viewed as a sequence of memoryless agents where the agent in each
  round may be different.  In particular, the agent in round $t$ only
  has access to $X_t$ and not to the previous samples
$X_1,\dotsc,X_{t-1}$.} that observes i.i.d. samples and sends 1-bit
feedback to the learner.
Specifically, for $t = 1, \dotsc, n$:
\begin{enumerate}[topsep=0pt, itemsep=0pt]
  \item The learner sends a 1-bit quantization function $Q_t\colon
    \mathbb{R} \to \{0, 1\}$ to an agent;

  \item The agent observes a fresh sample $X_t \sim D$ and sends a
    1-bit message $Y_t = Q_t(X_t)$ to the learner.
\end{enumerate}
After $n$ rounds, the learner forms an estimate $\hat{\mu}$ based on
the entire interaction history $\big(Q_1, Y_1, \dotsc, Q_n, Y_n
\big)$. This setting (or similar) was also adopted in previous works on sequential
communication-constrained learning,
e.g.,~\cite{hanna2022solving, mayekar2023communication, lau2025quantile}.

The learner's algorithm in this protocol is formally defined as follows:
\begin{definition}[1-bit mean estimator]
  A 1-bit mean estimator is an algorithm for the learner that
  operates within the above communication protocol. It consists of
  \begin{enumerate}[topsep=0pt, itemsep=0pt]
    \item A (potentially randomized) query strategy for selecting the
      quantization functions $Q_1, \dotsc, Q_n$, where the choice of
      $Q_t$ can depend adaptively on the history of interactions
      $(Q_1, Y_1, \dotsc, Q_{t-1}, Y_{t-1})$.

    \item An estimation rule that maps the full transcript $(Q_1,
      Y_1, \ldots, Q_n, Y_n)$ to a final estimate $\hat{\mu} \in \mathbb{R}$.
  \end{enumerate}
  We say that an estimator is \emph{non-adaptive} if the query
  strategy selects all quantization functions in advance, without
  access to any of the outcomes $Y_1, \dotsc, Y_n$.
\end{definition}

\textbf{Threshold query model.} In the general problem
formulation, we place no restriction on the choice of quantization
function~$Q_t$.  However, motivated by the desire for ``simple''
choices in practice, we focus primarily on \emph{threshold queries}.  A threshold query yields a 1-bit indicator specifying whether a sample falls on a designated side of a spatial boundary. Formally, we define a threshold query as any quantization function of the form $Q_t(x) = \mathbf{1}\{x \le \gamma_t\}$ or $Q_t(x) = \mathbf{1}\{x \ge \gamma_t\}$ for some sequentially chosen threshold $\gamma_t \in \mathbb{R}$.\footnote{Since we do not assume the underlying distribution is continuous, the complement of the event $\{X_t \le \gamma\}$ is the strict inequality $\{X_t > \gamma\}$, which differs from $\{X_t \ge \gamma\}$ if the distribution contains a point mass at the threshold $\gamma$. For analytical convenience, we formally allow both inclusive inequalities ($\le$ and $\ge$) in our threshold query model.  However, even if only one direction is allowed, we can easily handle this by adding very slight (continuous-valued) randomization to the values of $a_i$ and $b_i$ used in our algorithm (see Section \ref{sec: algorithm}). 
} 
Our main estimator will only use such queries, though we will also
present a variant in Section~\ref{sec: two-stage} that utilizes general 1-bit quantization functions.

\textbf{Learner's goal.}
The learner's goal is to design a 1-bit mean estimator that returns
an accurate estimate with high probability, while using as few
samples as possible. We formalize this notion as follows:

\begin{definition}[$(\eps, \delta)$-PAC]
  A mean estimator is $(\eps, \delta)$-PAC for distribution
  family $\mathcal{D}$ with sample complexity~$n(\eps,\delta)$
  if, for each distribution $D \in  \mathcal{D}$, it returns an
  $\eps$-correct estimate $\hat{\mu}$ with probability at least
  $1-\delta$, i.e.,
  \begin{equation*}
    \text{for each } D \in \mathcal{D}, \quad
    \Pr\left(  |\hat{\mu} - \mu(D)| \le \eps \right) \ge 1- \delta
  \end{equation*}
  and the number of samples required is bounded by
  $n(\eps,\delta)$. The probability is taken over the samples
  $X_1, \ldots, X_n$ and any internal randomness of the estimator.
\end{definition}

\textbf{Notation.} We use standard asymptotic notation $O(\cdot)$, $\Omega(\cdot)$, and $\Theta(\cdot)$ to hide absolute constants. When these hidden constant factors depend on the moment parameter $k$, we make this dependence explicit using the subscripted notation $O_k(\cdot)$, $\Omega_k(\cdot)$, and $\Theta_k(\cdot)$.  Throughout the paper, the function $\log(\cdot)$ has base $e$.

\subsection{Summary of Contributions}\label{sec: contributions}
With the problem setup now in place, we summarize our main
contributions as follows:

\begin{itemize}[topsep=0pt, itemsep=0pt]
  \item \textbf{Adaptive 1-Bit Mean Estimator:}
  We propose a novel adaptive 1-bit mean estimator (see Section~\ref{sec: algorithm}) that relies solely on (randomized) threshold queries. It operates by first localizing the distribution's core via a noisy binary search, and subsequently estimating local probability masses over a universal geometric grid paired with a local variance-aware sample allocation strategy.

    \item \textbf{Order-Optimal Sample Complexity:}
    We prove that this estimator is $(\eps, \delta)$-PAC for the generalized distribution family $\mathcal{D}(k, \lambda, \sigma)$ for any fixed $k > 1$. Despite its structural simplicity, our approach strictly tightens and generalizes the bounds from our preliminary conference version~\cite{lau2025sequential}, entirely eliminating the suboptimal logarithmic factors caused by the different search space partitioning and some looseness in its analysis (see Remark~\ref{rem: unified framework}). Our resulting sample complexity is order-optimal across all tail regimes $k > 1$.
    
    \item \textbf{Lower Bound in the Finite Variance Case:}
    For $k \ne 2$, the sample complexity matches the \textit{unquantized} minimax rate plus an additive $\log(\lambda/\sigma)$ localization cost that we prove to be unavoidable (see Theorems~\ref{thm: main} and~\ref{thm: adaptive lower bound}). For the finite-variance case ($k=2$), our upper bound contains an additional $O(\log(\sigma/\eps))$ factor compared to the unquantized minimax rate. We establish a novel information-theoretic lower bound proving that this logarithmic penalty is an unavoidable consequence of 1-bit quantization, thereby confirming our estimator's optimality for $k=2$.
    
    \item \textbf{Lower Bound Proving an Adaptivity Gap:} Our adaptive sample complexity bound scales only logarithmically with $\lambda/\sigma$, which contrasts with existing bounds for communication-constrained non-parametric mean estimators scaling at least linearly in~$\lambda$ (see Section~\ref{sec: related work}). For the threshold-query and a more general interval-query model, we establish an ``adaptivity gap'' by showing a worst-case lower bound $\Omega(\lambda \sigma/\eps^2 \cdot \log(\delta^{-1}))$ for non-adaptive estimators (see Theorem~\ref{thm: non-adaptive lower bound}), in particular having a linear dependence on $\lambda$.

    \item \textbf{Algorithmic Extensions and Variations:}
    We extend our framework to accommodate several practical constraints (see Section~\ref{sec: variations and refinements}). We provide variants that adapt to an unknown sampling budget (yielding an anytime-valid estimator) and an unknown scale parameter $\sigma$ (see Sections~\ref{sec: unknown eps} and~\ref{sec: partially unknown scale}). We also demonstrate that order-optimal sample complexity can be achieved in just two stages of adaptivity by using general 1-bit queries and a global codebook for localization (see Section~\ref{sec: two-stage}). Finally, we establish that when each 1-bit query may depend on a local batch of $m$ samples, local averaging proportionally reduces the refinement communication cost across all tail regimes. (see Section~\ref{sec:m_samples_per_query}).
\end{itemize}

\subsection{Related Work}\label{sec: related work}
The related work on communication-constrained mean estimation
is extensive; we only provide a brief outline here, emphasizing the
most closely related works.

\textbf{Classical mean estimation.}
Mean estimation (in the unquantized setting) is a fundamental and well-studied problem in statistics, e.g., see~\cite{lee2022optimal, cherapanamjeri2022optimal, minsker2023efficient, dang2023optimality, gupta2024beyond} and the references therein. The state-of-the-art $(\eps, \delta)$-PAC estimator by~\cite{lee2022optimal} achieves a tight sample complexity $n = \left(2+o(1) \right) \cdot (\sigma^2/\eps^2) \cdot \log(1/\delta)$ for all distributions with finite variance~$\sigma^2$. Beyond the finite-variance regime, significant attention has been devoted to robust estimation for heavy-tailed distributions where only a fractional central moment $k \in (1, 2)$ is bounded \cite{bubeck2013bandits, devroye2016sub, lugosi2019mean}. In this regime, the unquantized minimax sample complexity is known to scale as $\Theta\big( (\sigma/\eps)^{\frac{k}{k-1}} \log(1/\delta) \big)$. Collectively, these unquantized rates serve as a natural benchmark for mean estimation problems under communication constraints.

\textbf{Communication-constrained estimation and learning}.
Early work in communication-constrained estimation, learning, and optimization was
motivated by the applications of wireless sensor networks
(see~\cite{xiao2006distributed, varshney2012distributed,
  veeravalli2012distributed, he2020distributed} and the references
therein), with a recent resurgence driven by the rise of large-scale
machine learning systems. This has led to the characterization of the
sample complexity or minimax risk/error for various communication-constrained
estimation problems~\citep{ZhangDJW13, GargMN14, Shamir14,
  BravermanGMNW16, XuR18, HanMOW18, HanOW18, BarnesHO19, BarnesHO20,
  AcharyaCT20a, AcharyaCT20b, acharya2021distributed,
  acharya2021interactive, acharya2021estimating, acharya2023unified,
shah2025generalized}.

While abundant, most of the existing literature differs in major aspects including the estimation goal itself, the use of parametric models, and/or imposing significantly stronger assumptions.
To our knowledge, none of the existing work on non-parametric communication-constrained estimation captures our problem setup. For example, non-parametric density estimation~\cite{BarnesHO20, AcharyaCST21a} is an inherently harder
problem, and accordingly the authors impose certain regularity conditions on the density function (e.g., belonging to Sobolev space).  Similarly, communication-constrained non-parametric function estimation
problems in~\cite{zhu2018distributed, Szab2018AdaptiveDM,
  Szab2020DistributedFE, cai2022distributednonparametric,
zaman2022distributed} assume certain tail bounds on the likelihood
ratio (e.g., Gaussian white noise model).

\textbf{Communication-constrained mean estimation.}
Several works study variants of mean estimation under communication constraints directly. A large body of work focuses on parametric settings, often assuming a known location-scale family~\cite{kipnis2022mean, kumar2025one} with a particular emphasis on Gaussians~\cite{ribeiro2006bandwidth1, cai2022distributed, cai2024distributed}.  These estimators crucially rely on CDF
inversion, which is highly dependent on exact knowledge of the
parametric family, and thus unsuitable for our non-parametric setting.
The non-parametric mean estimators in~\cite{luo2005universal,
ribeiro2006bandwidth2} can handle broader distributional families, but
require additional assumptions such as bounded support and/or smooth
density functions. Furthermore, some of these estimators require more
than 1 bit of feedback (per coordinate) per sample.
A recent independent work on non-adaptive 1-bit mean
  estimation \cite{abdalla2026robust} partially circumvents these
  restrictive assumptions.  However, their estimator adopts a fixed
  quantization range whose width scales as
  $\Omega(\sigma^2/\eps)$ in the
  worst case,\footnote{To bound the worst-case
      truncation bias by $O(\eps)$ under only a finite-variance
      assumption, it can be shown that one must set the range to be
      $\Omega(\sigma^2/\eps)$ due to the worst-case tightness of Cantelli's inequality (a one-sided version of Chebyshev's inequality).
  } and this translates to a sample complexity of $\Omega(\sigma^4 /
  \eps^4)$.
  In contrast, our adaptive 1-bit mean estimator achieves
  $\widetilde{O}(\sigma^2/\eps^2)$ rates for all distributions
whose first two moments lie within known bounds.

\textbf{Empirical vs. population mean estimation.}
A closely related line of work focuses on distributed
\textit{empirical} mean estimation of a fixed dataset, which is a key
primitive in federated learning~\cite{suresh2017distributed,
  konevcny2018randomized, davies2020new, vargaftik2021drive,
  mayekar2021wyner, vargaftik2022EDEN, benbasat2024accelerating,
babu2025unbiased}.
These estimators typically achieve a minimax optimal mean squared
error (MSE) that scale as $\EE[ { (\hat{\mu} - \mu_{\text{emp}})
}^{2} ] = O(\lambda^2/n)$.
By using Markov's inequality and the median-of-means method, this can
be converted to $(\eps, \delta)$-PAC \textit{population} mean
estimation with a sample complexity of $n =
\widetilde{O}(\lambda^2/\eps^2 \cdot \log(1/\delta))$.
In contrast, our mean estimator achieves a sample complexity of
$\widetilde{O}(\sigma^2/\eps^2 \cdot \log(1/\delta) +
\log(\lambda/\sigma))$, which is considerably smaller when $\sigma^2
\ll \lambda^2$.
Although some empirical mean estimators achieve an MSE that depends on the 
empirical deviation/variance $\sigma_{\text{emp}}$ of the fixed
dataset~\cite{ribeiro2006bandwidth2, suresh2022correlated}, they
require a bounded support.
Furthermore, their MSE scales at least linearly with $\lambda$, e.g.,
the one in~\cite{suresh2022correlated} scales as $\EE[ {(\hat{\mu} -
\mu_{\text{emp}})}^2 ] = O(\sigma_{\text{emp}} \lambda/n + \lambda^2/n^2)$.
Consequently, converting them to $(\eps, \delta)$-PAC
\textit{population} mean estimator using standard techniques would
result in a sample complexity bound that scales at least linearly
with~$\lambda$.

\section{Estimator and Upper Bound}
In this section, we introduce our 1-bit mean estimator and provide
its performance guarantee.  Our estimator is designed as a
target-accuracy driven procedure that takes parameters
$(k, \lambda,\sigma,\eps,\delta)$ as input.  It operates to ensure
the desired accuracy $\eps$ is attained with probability at least
$1-\delta$ while minimizing the sample complexity $n$, and hence does
not have an explicit pre-specified sample budget.  However, the
estimator can readily be applied to the fixed-budget setting where
the sampling budget is given and the goal is to minimize the
estimation error $\eps$. In Section~\ref{sec: unknown eps}, we
address a harder variant of this, where $n$ is fixed but unknown to the learner.

Before detailing the mechanics of the estimator, we first establish a structural property of the distribution family that simplifies the analysis for (very) light-tailed distributions.

\begin{remark}[Reduction to $k \le 3$]
\label{rem: operative moment}
    By Lyapunov's inequality, any distribution in $\mathcal{D}(k, \lambda, \sigma)$ (defined in Section \ref{sec: problem setup}) for $k > 3$ satisfies 
    \[
        \left(\EE[|X-\mu|^3]\right)^{1/3} \le \left(\EE[|X-\mu|^k]\right)^{1/k} \le \sigma.
    \]
    Thus, when $k > 3$, any distribution in $\mathcal{D}(k, \lambda, \sigma)$ is also a member of $\mathcal{D}(3, \lambda, \sigma)$. Consequently, for any distribution with ``very light'' tails (e.g., sub-Gaussian), our estimator can safely process the samples using an ``operative'' moment parameter of $3$. Therefore, without loss of generality, we assume the moment parameter satisfies $k \in (1, 3]$ for the rest of the paper.
    This will be convenient for the analysis in Appendix~\ref{appendix: proof of main result}, ensuring that $k$-dependent constants (e.g., $2^k$) remain suitably bounded.
\end{remark}

\subsection{Description of the Estimator}\label{sec: algorithm}
Our estimator first localizes an interval $I$ of length $O(\sigma)$
containing the mean $\mu$
with high probability (see Step~1 below).
Using the mid-point of $I$ as the ``centre", it identifies a cutoff
threshold $t$ such that
the contribution to the mean from the ``insignificant region'' $|X| \ge t$ is at most $\eps/2$ (see Step~2).
Finally, it forms an estimate of the mean contribution from the
``significant region'' $|X| < t$ to within an additive error of $\eps/2$ (see Steps~3--6).

This high-level strategy of performing ``localization'' (coarse estimation) and
``refinement'' (finer estimation) has appeared in prior works such
as~\cite{cai2022distributed},
but with very different details, particularly for refinement.
The key idea behind our refinement procedure is to partition the
significant region into sub-regions, and optimally allocate samples to
estimate the contribution from each sub-region using randomized threshold queries.

For the confidence parameter, we distinguish between the failure probability
allocated to localization, denoted by $\delta_{\rm loc}$, and the failure
probability allocated to refinement, denoted by $\delta_{\rm ref}$. These
parameters are chosen so that $\delta_{\rm loc} + \delta_{\rm ref} \le \delta$. In Theorem~\ref{thm: main}, we take $\delta_{\rm loc} = \delta_{\rm ref} = \delta/2$.

Our mean estimator is outlined as follows, with any omitted details deferred to Appendix~\ref{appendix: proof of main result}:
\begin{enumerate}
  \item \textbf{Localization}: If $\lambda\le 4\sigma$, we simply set $I=[-\lambda,\lambda]$, which contains $\mu$ deterministically and has length at most $8\sigma$. Otherwise, let $M \coloneqq \inf\{x\in\mathbb R : \Pr(X\le x)\ge 1/2\}$ be the (lower) median of $X$. Using existing median estimation techniques, we construct a random interval $[L,U]$ satisfying $U-L\le 6\sigma$ and $\Pr(M\in[L,U]) \ge 1-\delta_{\rm loc}$ using
\[
  n_{\rm loc}(\delta_{\rm loc},\lambda,\sigma)
  =
  \Theta\left(
    \log\frac{\lambda}{\sigma}
    +
    \log\frac{1}{\delta_{\rm loc}}
  \right)
\]
1-bit threshold queries. Conditioned on the event $\{M\in[L,U]\}$,
the property $|\EE[X]-M|\le \sigma$ implies that
\[
    \mu \in [L-\sigma, U+\sigma].
\]
Then the enlarged interval $I = [L-\sigma, U+\sigma]$ has length at most $8 \sigma$. Without loss of generality, we shift the coordinate system so that the midpoint of $I$ is $0$, i.e., the shifted mean is contained in $[-4\sigma, 4\sigma]$.

  \item \textbf{Cutoff Threshold Selection}: 
  Conditioned on the localization event from Step~1, we work in the recentered coordinate system, so that $|\mu|\le 4\sigma$.  We identify a preliminary cutoff level
    \[
       t_0
       =
       C_k \,\sigma \cdot \left(\frac{\sigma}{\eps}\right)^{1/(k-1)},
    \]
    where $C_k > 0$ is a sufficiently large constant depending only on $k$,
    chosen in particular so that $t_0 > 8\sigma$. For a distribution with
    bounded $k$-th central moment, this choice guarantees that any cutoff
    $s \ge t_0$ satisfies
    \[
       \EE \left[ |X| \cdot \mathbf{1}\{|X| \ge s\} \right] 
      \le \eps/2.
    \]
    The final cutoff $t$ will be chosen in Step~3 by rounding $t_0$ up based on some
    grid boundary. It will therefore satisfy $t\ge t_0$, and hence
    \[
       \left| \EE \left[X \cdot \mathbf{1}\{|X|\ge t\}\right] \right|
       \le \EE \left[ |X| \cdot \mathbf{1}\{|X|\ge t\}\right]
       \le \eps/2.
   \]
    It remains to form a final high-probability estimate $\hat{\mu}$ of the
    ``clipped mean'' $ \EE \left[ X \cdot \mathbf{1} \left(|X| <
      t  \right) \right]$ to within additive error $\eps/2$.

  \item \textbf{Significant Region Partitioning}: 
  We partition the significant region~$(-t, t)$ into symmetric regions
    $R_1, R_{-1}, R_2, R_{-2}, \ldots, R_{\imax}, R_{-\imax}$ 
    defined by exponentially growing interval boundaries $m_i \sigma$: 
    \begin{equation}\label{eq: general R_i def}
      R_i  =
      \begin{cases}
        \sigma \cdot \left[m_{i-1} , m_i  \right) & \text{if }  i \ge 1
        \\ \\
        -R_{|i|} & \text{if } i \le -1
      \end{cases}
      \quad \text{where} \quad
      m_0 = 0 \text{ and } 
      m_i = 2^{i} \text{ for } i \ge 1.
    \end{equation}
    We choose $\imax$ and $t$ to satisfy the requirement in Step 2:
    \[
        i_{\max} 
        \coloneqq \min\{i\ge1 : m_i\sigma \ge t_0\}
         = \left\lceil \log_2\left( \frac{t_0}{\sigma} \right) \right\rceil
        = \Theta\left( \frac{1}{k-1} \log\left(\frac{\sigma}{\eps}\right) \right)
        \quad \text{and} \quad
        t \coloneqq m_{i_{\max}}\sigma \ge t_0.
    \]
    The regions are disjoint except for the harmless overlap
    $R_1\cap R_{-1}=\{0\}$, whose contribution to the mean is zero.
    Hence the clipped mean decomposes as the sum of the local contributions $\mu_i \coloneqq \EE[X \cdot \mathbf{1}(X \in R_i)]$, and it suffices to estimate each $\mu_i$ separately:
    \[
      \sum_{1\le |i|\le i_{\max}}\mu_i
      = \sum_{1\le |i|\le i_{\max}} \EE[X \cdot \mathbf{1}\{X\in R_i\}]
      = \EE[X \cdot \mathbf{1}\{|X|<t\}].
    \]

  \item \textbf{Region-Wise Threshold Queries}: 
    We first describe the local mean estimation procedure for positive or right-sided regions $R_i=[a_i,b_i)$ where $i \ge 1$.
    We define auxiliary probabilities based on a random threshold $T_i \sim \mathrm{Unif}(a_i, b_i)$ as follows:
    \begin{equation}\label{eq: p_ai}
    p_{a_i} \coloneqq 
    \Pr(X \ge a_i) - \Pr(X \ge T_i) = 
    \EE[\mathbf{1}\{X \ge a_i\}] - \EE[\mathbf{1}\{X \ge T_i\}],
    \end{equation}
    and
    \begin{equation}\label{eq: p_bi}
    p_{b_i} \coloneqq 
    \Pr(X \ge T_i) - \Pr(X \ge b_i) = 
    \EE[\mathbf{1}\{X \ge T_i\}] - \EE[\mathbf{1}\{X \ge b_i\}].
    \end{equation}
    By exploiting the identity
    \begin{equation*}
      \mu_i = a_i \cdot p_{a_i} + b_i \cdot p_{b_i},
    \end{equation*}
    the task of estimating $\mu_i$ reduces to estimating the probabilities $p_{a_i}$ and $p_{b_i}$.
    As suggested by~\eqref{eq: p_ai} and~\eqref{eq: p_bi}, these can be estimated via threshold queries.
    Specifically, for a predetermined sample budget $n_i$ (specified in Step 5), the learner collects $n_i$ independent 1-bit responses for each of the following four threshold query types: 
    \begin{equation*}
      \mathbf{1}\{X \ge a_i\}, \quad \mathbf{1}\left\lbrace X \ge T_{i}^{(1)}\right\rbrace, \quad
      \mathbf{1}\left\{ X \ge T_i^{(2)} \right\}, \quad \text{and } \quad \mathbf{1}\{X \ge b_i \},
    \end{equation*}
    where the data samples $X$ and random thresholds $T_i^{(1)}, T_i^{(2)} \sim \mathrm{Unif}(a_i, b_i)$ are freshly sampled for each query. 
    Taking the empirical averages of these responses yields the unbiased probability estimates $\hat{p}_{a_i}$ and $\hat{p}_{b_i}$, which are combined to form the unbiased local estimate $\hat{\mu}_i = a_i \cdot \hat{p}_{a_i} + b_i \cdot \hat{p}_{b_i}$.

    For the negative or left-sided regions $R_{-i} = -R_i = (-b_i, -a_i]$ for $i \ge 1$, we apply the same procedure to the reflected variable $Z = -X$. Using $X \in R_{-i}$ if and only if $Z\in R_i = [a_i, b_i)$, we have
    \[
    \mu_{-i} = \EE[X \cdot \mathbf{1}\{X\in R_{-i}\}]
             = -\EE [Z \cdot \mathbf{1}\{Z\in R_i\}].
    \]
    Thus the same randomized threshold query procedure applied to $Z$ gives an unbiased estimate of $-\mu_{-i}$, and hence $\mu_{-i}$. The threshold queries involved are of the form $\mathbf{1} \{Z \ge \gamma \}$, or equivalently $\mathbf{1} \{X \le -\gamma \}$ in the original $X$-coordinate, which is allowed under our threshold-query model (see problem setup in~Section~\ref{sec: problem setup}).

  \item \textbf{Base Estimator and Sample Allocation}: 
  Summing the local estimates from all significant regions yields a single unbiased ``base estimator'' for the clipped mean:
  \[
    \hat{\mu}_{\rm base} = \sum_{1\le |i| \le i_{\max}} \hat{\mu}_i.
  \]
  To achieve the final $(\eps, \delta)$-PAC guarantee through median-of-means (see Step 6), it is sufficient for this base estimator to satisfy a global variance constraint of the form $\Var(\hat{\mu}_{\rm base}) = \sum_{i} \Var(\hat{\mu}_i) = O(\eps^2)$. 
  The learner achieves this by setting the regional sample budget $n_i$ to scale according to the decay of the local variance $\Var(\hat{\mu}_i)$. 
  Specifically, the sample allocation is set to:
  \begin{equation*}
    n_i = \Theta\left( \frac{\sigma^2}{\eps^2} \cdot 2^{|i|(2-k)}  \right).
  \end{equation*}
This allocation guarantees the target variance while yielding an order-optimal sample complexity for a single base estimator:
\begin{equation*}
    \sum_{|i| \le i_{\max}} n_i 
    =
    \begin{cases}
      O_k \left(  \dfrac{\sigma^2}{\eps^2} \right) & \text{if } k > 2
      \\ \\
      O \left(  \dfrac{\sigma^2}{\eps^2} \cdot
        \log \left( \dfrac{\sigma}{\eps}\right)  \right) &
      \text{if } k = 2
      \\ \\
      O_k \left(  \left( \dfrac{\sigma}{\eps}  \right)^{\frac{k}{k-1}}  
      \right)  & \text{if } k \in (1, 2),
    \end{cases}
  \end{equation*}
  where $O_k(\cdot)$ represents $O(\cdot)$ notation with a hidden constant that depends on $k$.

  \item \textbf{Median-of-Means}:
   While the base estimator $\hat{\mu}_{\rm base}$ successfully bounds the variance to $O(\eps^2)$, it provides an $\eps$-accurate estimate with only a constant probability. To boost the refinement success probability to $1 - \delta_{\rm ref}$, the learner wraps the base estimator in a median-of-means framework, 
   repeating Steps~4 and~5 independently $K = \Theta(\log(1/\delta_{\rm ref}))$ times to generate $K$ independent base estimates $\hat{\mu}_{\rm base}^{(1)}, \dots, \hat{\mu}_{\rm base}^{(K)}$. 
   The final output of the 1-bit mean estimator is their median:
   \begin{equation*}
    \hat{\mu} = \mathrm{median}\left( \hat{\mu}_{\rm base}^{(1)}, \dots, \hat{\mu}_{\rm base}^{(K)} \right).
   \end{equation*}

\end{enumerate}

\begin{remark}[Algorithmic Advancements over Preliminary Version]
\label{rem: algo advancements}
While the high-level spatial partitioning architecture shares structural similarities with our preliminary conference version \cite{lau2025sequential}, the framework of the estimator has been carefully refined to achieve order-optimality for all tail regimes. First, we replace the interval-query model with a simpler threshold-query model, estimating local probability masses purely through differences in empirical threshold crossings (Step 4). Second, we extend the estimation framework to handle heavy-tailed distributions where $k \in (1, 2)$ (Step 5), while simultaneously discarding the complicated $k$-dependent partition boundaries of the prior work in favor of a simpler universal geometric grid ($m_i = 2^i$ in Step 3).  Finally, instead of relying on crude union bounds to guarantee the accuracy of every local region simultaneously, we deploy a variance-aware local sample allocation (Step 5) paired with a median-of-means aggregator (Step 6). This effectively decouples the global $(\eps, \delta)$-PAC requirement from the granularity of the spatial grid, which is the key to eliminating the suboptimal logarithmic sample complexity penalties present in the previous framework (see Remark \ref{rem: unified framework} for details).
\end{remark}

\subsection{Upper bound}
We now formally state the main result of this paper, which is the
performance guarantee of our mean estimator in Section~\ref{sec:
algorithm}. The proof is deferred to Appendix~\ref{appendix: proof of
main result}, where we also provide the omitted details in the above outline.

\begin{theorem}\label{thm: main}
Fix $k > 1$, as well as $\lambda > \sigma > \eps > 0$, and $\delta \in (0, 1/2)$.\footnote{Throughout the upper bounds, logarithmic ratio terms are written
without constant floors for readability; formally, each $\log(a/b)$ term may
be read as $\max\{1,\log(a/b)\}$ when $a/b$ is not bounded below by a fixed
constant greater than one. Equivalently, one may run
the estimator with constant-factor inflated values of $\lambda$ or
$\sigma$ when needed; this changes the stated sample complexities by at
most constant factors.}
Run the mean estimator in Section~\ref{sec: algorithm} with
$\delta_{\rm loc} = \delta_{\rm ref} = \delta/2.$
Then, the estimator is $(\eps, \delta)$-PAC for distribution family $\mathcal{D}(k, \lambda,
\sigma)$ defined in Section \ref{sec: problem setup}, with sample complexity
\begin{align}
n
=
O\left(\log \left(\dfrac{\lambda}{\sigma} \right) \right) +
\begin{cases}
  O_k \left(  \dfrac{\sigma^2}{\eps^2}    \cdot
  \log\left(\dfrac{1}{\delta}\right) \right) & \text{if } k > 2
  \\ \\
  O \left(  \dfrac{\sigma^2}{\eps^2} \cdot
    \log \left( \dfrac{\sigma}{\eps}\right)  \cdot \log\left(
  \dfrac{1}{\delta}\right) \right) &
  \text{if } k = 2
  \\ \\
  O_k \left(  \left( \dfrac{\sigma}{\eps}  \right)^{\frac{k}{k-1}}  
  \cdot \log\left(\dfrac{1}{\delta}\right) \right)  & \text{if
  } k \in (1, 2),
\end{cases}
\label{eq: scaling law of n}
\end{align}
where $O_k(\cdot)$ represents $O(\cdot)$ notation with a hidden constant that depends on $k$. Note that the $O(\log(1/\delta_{\rm loc})) = O(\log(1/\delta))$ term from localization cost is absorbed into the refinement cost since $\sigma\ge \eps$.
\end{theorem}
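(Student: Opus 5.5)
The proof follows the algorithm's structure: bound the failure probability of localization and refinement separately, then sum their sample costs. The key estimates are three. Localization costs $O(\log(\lambda/\sigma)+\log(1/\delta))$ queries. Truncation at $t\ge t_0$ costs at most $\eps/2$ bias. Each base estimator has variance $O(\eps^2)$ under the stated allocation $n_i$. A union bound over the localization and refinement events, with $\delta_{\rm loc}=\delta_{\rm ref}=\delta/2$, then gives the $(\eps,\delta)$-PAC guarantee.

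\textbf{Step 1 (localization).} By Chebyshev/Markov applied to the $k$-th moment, $\Pr(|X-\mu|\ge 2^{1/k}\sigma)\le 1/2$, so $|M-\mu|\le 2^{1/k}\sigma$. Sharpening this to $|M-\mu|\le\sigma$ would use Lyapunov plus the standard fact that the median is within one $L^1$ deviation of the mean. Moreover, the quantiles at levels $1/2\pm c$ for a constant $c$ lie within $O(\sigma)$ of $\mu$. A noisy binary search over a $\Theta(\sigma)$-spaced grid on $[-\lambda,\lambda]$ (e.g.\ the standard noisy-comparison search with backtracking) then returns $[L,U]$ of width $\le 6\sigma$ containing $M$. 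Each threshold comparison errs with probability bounded away from $1/2$ whenever the threshold is $\Omega(\sigma)$ from $M$. This search costs $O(\log(\lambda/\sigma)+\log(1/\delta_{\rm loc}))$ queries, which I take from existing median-estimation results.

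\textbf{Step 2 (truncation bias).} On the localization event $|\mu|\le 4\sigma$, and $|X|\ge s\ge t_0>8\sigma$ implies $|X-\mu|\ge s/2$ and $|X|\le 2|X-\mu|$. Hence
\[
\EE\big[|X|\mathbf 1\{|X|\ge s\}\big]\le 2\,\EE\big[|X-\mu|^k\big](s/2)^{1-k}\le 2^{k}\sigma^k s^{1-k}.
\]
With $s=t_0=C_k\sigma(\sigma/\eps)^{1/(k-1)}$ this equals $2^kC_k^{1-k}\eps\le\eps/2$ for $C_k$ large.

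\textbf{Step 3 (unbiasedness and variance).} For $R_i=[a_i,b_i)$, conditioning on $X$ and using $T\sim\mathrm{Unif}(a_i,b_i)$ gives
\[
\EE\big[\mathbf 1\{X\ge a_i\}-\mathbf 1\{X\ge T\}\mid X\big]=\tfrac{b_i-X}{b_i-a_i}\,\mathbf 1\{X\in R_i\}.
\]
The analogous identity holds for $p_{b_i}$, which verifies $\mu_i=a_ip_{a_i}+b_ip_{b_i}$. Each $\hat p$ is a difference of two independent empirical means of Bernoulli variables. For $i\ge2$, its variance is at most $2\Pr(X\ge a_i)/n_i$, since both indicators are dominated by $\mathbf 1\{X\ge a_i\}$. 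Hence $\Var(\hat\mu_i)\lesssim b_i^2\Pr(|X|\ge a_i)/n_i$. For $i\ge 2$, $a_i=2^{i-1}\sigma\ge 2\sigma$. Once $a_i$ is a large multiple of $\sigma$, Markov gives $\Pr(|X|\ge a_i)\le \Pr(|X-\mu|\ge a_i/2)\lesssim 2^{-ik}$. For the first $O(1)$ indices we use the trivial bound $1$. Either way $\Var(\hat\mu_i)\lesssim 4^{i}\sigma^2 2^{-ik}/n_i$. Choosing $n_i=\Theta_k(\sigma^2\eps^{-2}2^{i(2-k)})$ gives $\Var(\hat\mu_i)\le c\,\eps^2/(2i_{\max})$, since the constant absorbs the $i_{\max}$ factor. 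I expect this step to be the main obstacle: the budget must not lose a $\log(\sigma/\eps)$ factor when $k\neq2$. The fix is to weight the variance budget geometrically rather than uniformly. For $k>2$ the sum $\sum_i 2^{i(2-k)}$ converges, so set $\Var(\hat\mu_i)\le c\eps^2 2^{-\eta|i|}$ with $\eta=(k-2)/2$; this keeps $\sum_i n_i=O_k(\sigma^2/\eps^2)$. For $k<2$ the terms grow geometrically, so the budget is dominated by $i=i_{\max}$, where $2^{i_{\max}}\asymp(\sigma/\eps)^{1/(k-1)}$; put the weights $2^{-\eta(i_{\max}-|i|)}$ with $\eta=(2-k)/2$. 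Summing then gives $(\sigma/\eps)^2(\sigma/\eps)^{(2-k)/(k-1)}=(\sigma/\eps)^{k/(k-1)}$. For $k=2$ uniform weights give the extra factor $i_{\max}=O(\log(\sigma/\eps))$. Cross-region covariances vanish because every query uses fresh samples, so $\Var(\hat\mu_{\rm base})\le\eps^2/16$.

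\textbf{Step 4 (boosting).} Chebyshev gives $\Pr(|\hat\mu_{\rm base}^{(j)}-\EE X\mathbf 1\{|X|<t\}|>\eps/2)\le1/4$. A Hoeffding bound on the number of bad copies out of $K=\Theta(\log(1/\delta_{\rm ref}))$ then shows the median is within $\eps/2$ of the clipped mean with probability $\ge1-\delta_{\rm ref}$. Combining this with Step 2 gives error $\le\eps$. The total sample count is $n_{\rm loc}+4K\sum_i n_i$, which matches \eqref{eq: scaling law of n}.
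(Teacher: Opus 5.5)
\textbf{Gap: the variance step fails for $k=2$.} You bound each $\Var(\hat\mu_i)$ separately, plugging in the worst-case tail $\Pr(|X|\ge a_i)\lesssim 2^{-ik}$, and then add the bounds.

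Under the allocation $n_i\asymp\sigma^2\eps^{-2}2^{i(2-k)}$, each of these bounds is of order $\eps^2$. Their sum is therefore of order $\eps^2 i_{\max}$, with $i_{\max}\asymp\frac{1}{k-1}\log(\sigma/\eps)$. Your sentence ``the constant absorbs the $i_{\max}$ factor'' is false, because $i_{\max}$ is unbounded.

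Your geometric reweighting does rescue $k\neq2$. It does so only by changing the estimator, e.g.\ $n_i\asymp\sigma^2\eps^{-2}2^{i(2-k)/2}$ for $k>2$, and with worse constants such as $(k-2)^{-2}$.

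At $k=2$ no weighting can work. Your per-region bounds read $\Var(\hat\mu_i)\lesssim\sigma^2/n_i$. By Cauchy--Schwarz, $(\sum_i n_i)(\sum_i 1/n_i)\ge i_{\max}^2$. So any allocation that these bounds certify to have total variance $O(\eps^2)$ needs $\sum_i n_i\gtrsim\sigma^2 i_{\max}^2/\eps^2$. That gives $\frac{\sigma^2}{\eps^2}\log^2(\sigma/\eps)\log(1/\delta)$ overall, one logarithm more than the theorem claims. Your statement that ``uniform weights give the extra factor $i_{\max}$'' is off by exactly this factor.

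\textbf{Missing idea: bound the total variance jointly.} The $k$-th moment constraint prevents all the tails from being extremal at once, so the variances should be summed before applying it.

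\begin{enumerate}
\item Write $\Pr(X\ge a_i)=\sum_{j\ge i}p_j$ with $p_j=\Pr(X\in R_j)$, and swap the sums (Tonelli):
\[
\sum_{i=1}^{i_{\max}}\Var(\hat\mu_i)\le\sum_{j\ge1}p_j\sum_{i\le\min(j,i_{\max})}\frac{4m_i^2\sigma^2}{n_i}.
\]
\item With $n_i\asymp\sigma^2\eps^{-2}2^{i(2-k)}$, the inner sum is geometric and equals $O(\eps^2 2^{jk})$.
\item For $j\ge4$, $X\in R_j$ forces $|X-\mu|\ge 2^{j-2}\sigma$, because $|\mu|\le4\sigma$. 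The moment bound then gives $\sum_{j\ge4}p_j2^{jk}\le 2^{2k}$.
\item The terms with $j\le3$ contribute $O(\eps^2)$ trivially.
\end{enumerate}

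This gives total variance $O(\eps^2)$ for every $k$, with no $i_{\max}$ loss, and for the allocation the estimator actually uses. The sample count $\frac{\sigma^2}{\eps^2}\sum_{i\le i_{\max}}2^{i(2-k)}$ then yields a single $\log(\sigma/\eps)$ factor at $k=2$. For $k\neq2$ it gives the stated rates with constants $(k-2)^{-1}$ and $(2-k)^{-1}$.

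The rest of your outline (localization, truncation bias, the stochastic-rounding unbiasedness, median-of-means) matches the paper.
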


Next, suppose that $X - \mu$ is sub-Gaussian with known parameter
$\sigma^2$. Then $X$ has its finite third central moment bounded by
$(C \sigma)^3$ for some absolute constant $C$. Therefore, the above
mean estimator can be used for sub-Gaussian distributions too.
\begin{corollary}
  \label{cor: subgaussian variant}
  Suppose that $X - \mu$ is sub-Gaussian with known parameter $\sigma^2$.
  Then there exists an $(\eps, \delta)$-PAC 1-bit mean estimator
  with sample complexity
  \begin{equation*}
    n = O \left(  \frac{\sigma^2}{\eps^2}  \cdot
    \log\left(\frac{1}{\delta}\right) + \log\frac{\lambda}{\sigma}\right).
  \end{equation*}
\end{corollary}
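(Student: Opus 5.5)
The plan is to reduce Corollary~\ref{cor: subgaussian variant} to Theorem~\ref{thm: main} with moment parameter $k=3$. The only work is to verify that a sub-Gaussian $X-\mu$ with parameter $\sigma^2$ lies in $\mathcal{D}(3,\lambda,C\sigma)$ for an absolute constant $C$.

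\textbf{Step 1: third-moment bound.} Sub-Gaussianity with parameter $\sigma^2$ gives the tail bound
\[
\Pr(|X-\mu|\ge u)\le 2e^{-u^2/(2\sigma^2)}
\]
via the Chernoff bound applied on both sides. I will integrate this tail using the layer-cake formula:
\[
\EE|X-\mu|^3=\int_0^\infty 3u^2\Pr(|X-\mu|\ge u)\,du\le 6\int_0^\infty u^2e^{-u^2/(2\sigma^2)}\,du .
\]
This integral equals $c\,\sigma^3$ for an explicit absolute constant $c$, namely $c=3\sqrt{2\pi}$. Setting $C:=c^{1/3}$ gives $\EE|X-\mu|^3\le (C\sigma)^3$. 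Together with the standing assumption $\mu\in[-\lambda,\lambda]$, this places $D$ in $\mathcal{D}(3,\lambda,C\sigma)$.

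\textbf{Step 2: invoke the theorem.} Run the estimator of Section~\ref{sec: algorithm} with inputs $(k,\lambda,\sigma')=(3,\lambda,C\sigma)$. Theorem~\ref{thm: main} then gives $(\eps,\delta)$-PAC correctness. Since $k=3>2$, its sample complexity is
\[
O\!\left(\log\frac{\lambda}{C\sigma}\right)+O_3\!\left(\frac{C^2\sigma^2}{\eps^2}\log\frac1\delta\right).
\]
Because $k=3$ is fixed and $C$ is absolute, all constants are absolute. This yields the stated $O\big(\frac{\sigma^2}{\eps^2}\log\frac1\delta+\log\frac\lambda\sigma\big)$.

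\textbf{Side conditions.} The only potential obstacle is technical: Theorem~\ref{thm: main} requires $\lambda>\sigma'>\eps$.
\begin{itemize}
\item If $C\sigma\le\eps$, the problem is trivial up to constants. Running with $\sigma'$ inflated to $2\eps$, or equivalently inflating $\sigma$, changes the complexity only by constant factors, per the footnote to Theorem~\ref{thm: main}.
\item If $\lambda\le C\sigma$, inflate $\lambda$ by a constant. The localization term is then $O(1)$, which is absorbed into the refinement term.
\end{itemize}
The $\log(1/\delta)$ localization cost is likewise absorbed into the refinement term, as noted in the theorem.
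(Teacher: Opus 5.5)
Your proposal is correct and follows the paper's own one-line argument: bound the third central moment of the sub-Gaussian variable by $(C\sigma)^3$ for an absolute constant $C$, then invoke Theorem~\ref{thm: main} with $k=3$; your layer-cake computation giving $C^3 = 3\sqrt{2\pi}$ just makes the paper's unspecified constant explicit. One caveat on your side conditions: since $C>1$, the case $C\sigma\le\eps$ cannot arise under the standing assumption $\sigma>\eps$ that the corollary inherits from the theorem, and you should not try to cover $\sigma\ll\eps$ by inflating $\sigma'$ to $2\eps$, because that is not a constant-factor change (so the footnote does not apply) and the resulting $\Theta(\log(1/\delta))$ cost is not dominated by the stated bound.
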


Our sample complexity matches the  minimax lower bound for the unquantized setting (see~\cite[p.2]{lee2022optimal} and~\cite[Theorem 3.1]{devroye2016sub}) up to
constant factors for $k \ne 2$ and up to a multiplicative $\log(\sigma/\eps)$ factor for $k = 2$,
along with an additional $\log(\lambda/\sigma)$ term for all $k > 1$; both of these extra terms are shown to be unavoidable in Theorem~\ref{thm: adaptive lower bound} below.
We also study variants where $(\eps,\sigma)$ are not prespecified
in Sections~\ref{sec: unknown eps} and~\ref{sec: partially unknown scale};
and a variant that uses only two rounds/stages of adaptivity in
Section~\ref{sec: two-stage}.

\begin{remark}[Tightened Rates and Order-Optimality]
  \label{rem: unified framework}
  The algorithmic advancements described in Remark~\ref{rem: algo advancements} yield strictly tightened and generalized upper bounds compared to our preliminary conference version~\cite{lau2025sequential}. Specifically, we achieve the following improvements across the tail regimes:
  \begin{enumerate}
    \item \textbf{Heavy-tailed distributions ($k \in (1, 2)$):} We achieve the order-optimal sample complexity for heavy-tailed distributions, an important regime that was entirely unaddressed in the preliminary version.  Moreover, we achieve this in a unified manner with the case $k \ge 2$, not requiring any structural changes.
        
    \item \textbf{Light-tailed and sub-Gaussian distributions ($k > 2$):} We completely eliminate the suboptimal doubly logarithmic factors for light-tailed distributions, and iterated logarithmic factors for sub-Gaussian distributions, yielding strict order-optimal rates.
    
     \item \textbf{Finite-variance distributions ($k=2$):} We shave two logarithmic factors off the previous sample complexity, matching the unquantized minimax bound up to a single $O\big(\log(\sigma/\eps)\big)$ factor (as opposed to the $O\big(\log^3(\sigma/\eps)\big)$ gap in \cite{lau2025sequential}). We establish in Section~\ref{sec: lower bounds} that this remaining logarithmic factor is not an artifact of our spatial partitioning, but rather a fundamental information-theoretic limit of 1-bit quantization. Consequently, our estimator achieves order-optimality across all tail regimes $k > 1$.
  \end{enumerate}
\end{remark}

\begin{remark}[$k$-Dependent Constant Factors and the Phase Transition at $k=2$]
\label{rem: k-dependent upper bound constants}
    The subscripted asymptotic notation $O_k(\cdot)$ hides $k$-dependent constant factors that diverge as the moment parameter~$k$ approaches the finite-variance boundary ($k \to 2$). Specifically, as we will discuss in the proof of Theorem \ref{thm: main}, the hidden $k$-dependence scales as $\Theta\big(\max(1, (k-2)^{-1})\big)$ for the light-tailed regime $k > 2$, and $\Theta\big(\max(1, (2-k)^{-1})\big)$ for the heavy-tailed regime $k \in (1, 2)$. This dual-sided divergence characterizes a phase transition in the estimator's behavior arising from the spatial partitioning architecture. As $k \to 2$, the geometric series governing our sample allocation flattens into a uniform sum (see Step 5(c) of Appendix~\ref{appendix: proof of main result}), manifesting the $O\big(\log(\sigma/\eps)\big)$ penalty for $k=2$ that is an inescapable information-theoretic cost of 1-bit quantization (see Section~\ref{sec: lower bounds}).
\end{remark}

\section{Lower Bounds and Adaptivity Gap}
\label{sec: lower bounds}

In this section, we further characterize the information-theoretic limits of 1-bit mean estimation via lower bounds on the sample complexity.  We first provide, in Theorem~\ref{thm: adaptive lower bound}, a minimax lower bound that matches our upper bound in Theorem~\ref{thm: main} up to constants across all tail regimes. In particular, we show that the additive $\log(\lambda/\sigma)$ localization term is unavoidable, and we reveal a novel quantization penalty unique to finite-variance distributions. 
Subsequently, in Theorem~\ref{thm: non-adaptive lower bound}, we show  that the best non-adaptive mean estimator is strictly worse than our
adaptive mean estimator when only threshold queries are allowed, and that the same holds even under a more general interval query model.  
This establishes a strict ``adaptivity gap'' between the performance of adaptive and non-adaptive query based mean estimators under threshold and/or interval queries. 
The proofs for all of our lower bounds are given in Appendix~\ref{appendix: Lower Bounds}.

\begin{theorem}[Matching Lower Bound]\label{thm: adaptive lower bound}
    Fix $k > 1$. There exist constants $c_k > 0$ and $\delta_0 \in (0, 1/2)$ such that, for all $\lambda \ge \sigma > 0$, all $\eps \in (0, c_k \sigma)$ and all $\delta \in (0, \delta_0)$, the following holds. For any $(\eps, \delta)$-PAC 1-bit mean estimator, there exists a distribution $D \in \mathcal{D}(k, \lambda, \sigma)$ such that the number of samples~$n$ must satisfy
  \[
    n = 
    \Omega\left(\log \left(\dfrac{\lambda}{\sigma} \right) \right) +
    \begin{cases}
      \Omega \left(  \dfrac{\sigma^2}{\eps^2} \cdot \log\left(\dfrac{1}{\delta}\right) \right) & \text{if } k > 2
      \\ \\
      \Omega \left(  \dfrac{\sigma^2}{\eps^2} \cdot \log\left(\dfrac{\sigma}{\eps}\right) \cdot \log\left(\dfrac{1}{\delta}\right) \right) & \text{if } k = 2
      \\ \\
      \Omega \left(  \left( \dfrac{\sigma}{\eps}  \right)^{\frac{k}{k-1}} \cdot \log \left(\dfrac{1}{\delta}\right) \right)  & \text{if } k \in (1, 2).
    \end{cases}
  \]
\end{theorem}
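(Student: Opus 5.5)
The lower bound is a sum of two terms, and I would prove each separately; since $\max(A,B)\ge (A+B)/2$, it suffices to show that each is necessary on its own.

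\textbf{Localization term.} For the $\Omega(\log(\lambda/\sigma))$ term I would use a packing/Fano argument. Take $N=\lfloor \lambda/(2\sigma)\rfloor$ point-mass distributions $D_j=\delta_{x_j}$ with $x_j=-\lambda+(2j+1)\sigma$. These have zero central moments, so they lie in $\mathcal{D}(k,\lambda,\sigma)$, and their means are pairwise separated by $2\sigma>2\eps$. Hence an $(\eps,\delta)$-PAC estimator identifies $j$ with probability $1-\delta$. Each response $Y_t$ is a single bit, and under any $D_j$ it is a deterministic function of $Q_t$, which is in turn a function of the past responses and the learner's randomness. So the transcript carries at most $n$ bits about $j$, and $I(J;\text{transcript})\le n\log 2$. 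Fano's inequality with $J$ uniform then gives $n\log 2\ge(1-\delta)\log N-\log 2$, i.e.\ $n=\Omega(\log(\lambda/\sigma))$ for $\delta<\delta_0$. When $\lambda/\sigma$ is bounded by a constant this term is trivial.

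\textbf{Estimation term, $k\ne 2$.} For the second term with $k\neq 2$, I would rely on the fact that 1-bit estimators are a special case of unquantized ones: the learner could simulate any $Q_t$ from raw samples. The unquantized minimax lower bounds then apply directly. For $k>2$ I would take the two-point Gaussian or Bernoulli pair from \cite{lee2022optimal}, rescaled so that the $k$-th central moment is at most $\sigma^k$. For $k\in(1,2)$ I would take the pair from \cite[Theorem 3.1]{devroye2016sub}: a point mass at $0$ versus a mixture that places mass $p\approx(\eps/\sigma)^{k/(k-1)}$ at $\sigma p^{-1/k}$. The means differ by about $\sigma p^{1-1/k}\approx 2\eps$, and the KL divergence per sample is about $p$, so Le Cam's method with $\log(1/\delta)$ scaling gives $\Omega((\sigma/\eps)^{k/(k-1)}\log(1/\delta))$. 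Requiring $\eps<c_k\sigma$ keeps these constructions valid, with means inside $[-\lambda,\lambda]$.

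\textbf{Estimation term, $k=2$ (main obstacle).} This is the genuinely quantization-specific part, so I expect it to be the main obstacle. My plan is to show that a single bit cannot localize information spread across $\log(\sigma/\eps)$ dyadic scales.

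\emph{Construction.} Let $L\approx\log_2(\sigma/\eps)$. Take a base distribution $P_0$ that places mass of order $4^{-i}/L$ at $\pm 2^i\sigma'$ for each scale $i=1,\dots,L$. Each scale then contributes about $\sigma^2/L$ to the variance, so the total variance is at most $\sigma^2$. For a sign vector $v\in\{\pm1\}^L$, define $P_v$ by tilting the masses at scale $i$ asymmetrically by a relative amount $\eta$. This shifts the mean by $\sum_i v_i\,\eta\, 2^{-i}\sigma/L$.

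\emph{Intended argument.} Instead of treating the scales as independent coordinates, I would use a two-hypothesis mixture, $P_0$ against the average of $P_v$ over $v$ with $\sum_i v_i$ fixed, or more simply a Bayesian prior with an Assouad-type argument. The goal is the following chi-square bound for any single 1-bit query $Q$. The query sees the perturbation through $\sum_i v_i\,\eta\,\Pr_0(Q=1,\text{scale } i)$, and by Cauchy--Schwarz across scales its chi-square divergence is at most about $\eta^2/L$ times a per-scale factor. In other words, a single bit can capture information from essentially only one scale at a time, whereas an unquantized sample captures all $L$ scales together.

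\emph{Parameter tuning and the difficulty.} Tuning $\eta$ so that the mean shift is $2\eps$ should then force $n\gtrsim(\sigma^2/\eps^2)\,L\,\log(1/\delta)$. The delicate part is making this tuning work. Small scales carry large masses, so a naive construction concentrates the mean shift on the small scales. I would therefore first try weighting the perturbations $\eta_i\propto 2^{i}$, so that each scale contributes about $\eps/L$ to the mean shift. I would then verify two things: that the bound on the per-query information under adaptive queries gives $\mathrm{KL}\le n\cdot O(\eps^2/(\sigma^2 L))$, and that the $\log(1/\delta)$ factor comes through via the Bretagnolle--Huber inequality.
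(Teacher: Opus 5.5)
Your localization argument (Fano over point masses, $I\le n\log 2$) and your reduction of the $k\neq 2$ terms to the unquantized two-point bounds are fine and match the paper; point masses instead of two-point distributions is a harmless simplification. The gap is in the $k=2$ step. There, the tool you name for the per-query bound cannot produce the $1/L$ factor. Write $w_i=\Pr_0(S\cap\{\pm 2^i\sigma\})$. Cauchy--Schwarz across scales gives $(\bar P(S)-P_0(S))^2/P_0(S)\le \sum_i \eta_i^2 w_i$. This is a \emph{sum} over the scales that $S$ touches, which is exactly what an unquantized sample collects.
\begin{itemize}
\item With a uniform tilt $\eta$, the bound is $\eta^2\cdot O(1/L)$. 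But the mean shift is then $\Theta(\eta\sigma/L)$, so $\eta=\Theta(L\eps/\sigma)$. Already the single query on the scale-$1$ atom then has $\chi^2=\Theta(L\eps^2/\sigma^2)$, which is worse than the unquantized rate.
\item With your corrected weighting $\eta_i=\Theta(2^i\eps/\sigma)$, the same bound for $S=(0,\infty)$ evaluates to $\sum_i \Theta(4^i\eps^2/\sigma^2)\cdot\Theta(4^{-i}/L)=\Theta(\eps^2/\sigma^2)$. That is no logarithmic gain, even though the true $\chi^2$ of that query is $O(\eps^2/(L\sigma^2))$.
\end{itemize}
So the verification you defer to the end, $\mathrm{KL}\le n\cdot O(\eps^2/(\sigma^2 L))$, does not go through along the route you sketch.

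The missing idea is that the geometric structure makes the per-query divergence behave like a \emph{maximum} over scales, not a sum. With $\eta_i\propto 2^i$, the moved mass $p_i=\Theta(\eps/(L2^i\sigma))$ decays like $2^{-i}$, while the base mass $q_i=\Theta(4^{-i}/L)$ decays like $4^{-i}$. The argument then runs as follows.
\begin{enumerate}
\item Reduce to $P_0(S)\le 1/2$, using that Bernoulli KL is invariant under $S\mapsto S^c$. Then $S$ misses the atom at $0$, and $1/(1-P_0(S))\le 2$.
\item Let $j^*$ be the smallest scale met by $S$. Then $|\bar P(S)-P_0(S)|\le\sum_{j\ge j^*}p_j\le 2p_{j^*}$ and $P_0(S)\ge q_{j^*}$.
\item Hence $\mathrm{KL}\le 8p_{j^*}^2/q_{j^*}=O(\eps^2/(L\sigma^2))$ uniformly over $S$.
\item The chain rule over adaptive rounds and Bretagnolle--Huber then finish.
\end{enumerate}
This is the paper's argument. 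Its alternative $\bar D=\frac1M\sum_j D_j$ is, as a single-sample law, exactly your all-positive tilt with $\eta_i\propto 2^i$.

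The sign vectors are not needed, and partly hurt. A per-sample average over $v$ with $\sum_i v_i$ fixed collapses by symmetry to a single positive tilt proportional to $\eta_i$. Drawing $v$ once per transcript instead destroys the product structure the chain rule relies on. An Assouad argument controls coordinatewise recovery of $v$ rather than the linear functional $\sum_i v_i$, and it does not give the $\log(1/\delta)$ factor.
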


\begin{remark}[Finite-Variance Penalty]
\label{rem: separation of k=2 and k>2}
A surprising feature of Theorem~\ref{thm: adaptive lower bound} is the strict separation between finite-variance ($k=2$) and light-tailed ($k>2$) distributions. In the unquantized setting, both regimes share a base sample complexity of $\Theta(\sigma^2/\eps^2 \cdot \log(1/\delta))$, whereas Theorem~\ref{thm: adaptive lower bound} establishes that 1-bit communication constraints force the $k=2$ regime to uniquely incur an additional multiplicative $\log(\sigma/\eps)$ penalty. 
\end{remark}

To state the second result, we formally define an \emph{interval query} to be any query of the form $Q_t(x) = \mathbf{1}\{x \in [\alpha_t, \beta_t] \}$ for some pair $(\alpha_t, \beta_t)$, possibly with $\alpha_t = -\infty$ or $\beta_t = \infty$.  When $\alpha_t = -\infty$ (respectively, $\beta_t = \infty$), we trivially recover the threshold query $\mathbf{1}\{ x \le \beta_t \}$ (respectively, $\mathbf{1}\{ x \ge \alpha_t \}$), so interval queries strictly generalize threshold queries.

\begin{theorem}[Adaptivity Gap]\label{thm: non-adaptive lower bound}
  There exists constants $\delta_0 \in (0, 1/2)$ such that for all $\lambda \ge 2 \sigma > 0$, all $\eps \in (0, \sigma/2)$, and all $\delta \in (0, \delta_0)$, the following holds.
  For any non-adaptive $(\eps, \delta)$-PAC mean estimator utilizing only interval queries, there exists a distribution supported on an interval of length $\sigma$ (and therefore sub-Gaussian) with mean $\mu \in [-\lambda, \lambda]$ such that the total number of queries $n$ must satisfy:
    \[
        n = \Omega\left( \frac{\lambda \sigma}{\eps^2} \cdot \log \left(\frac{1}{\delta} \right) \right).
    \]
    Because the family of distributions with bounded support of length $\sigma$ is a strict subset of $\mathcal{D}(k, \lambda, \sigma)$ for all $k \ge 1$, this non-adaptive lower bound universally applies to all tail regimes studied in this paper.
\end{theorem}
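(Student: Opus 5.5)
We prove Theorem~\ref{thm: non-adaptive lower bound} by a counting-plus-pigeonhole argument: a non-adaptive estimator fixes its $n$ interval queries $[\alpha_t,\beta_t]$ in advance, and these queries cannot be informative everywhere in $[-\lambda,\lambda]$. Concretely, partition $[-\lambda,\lambda]$ into $N=\lfloor \lambda/\sigma\rfloor = \Theta(\lambda/\sigma)$ disjoint cells $J_1,\dots,J_N$, each of length $\sigma$. For each cell $J_j=[c_j,c_j+\sigma]$ we build a pair of hypotheses supported on $J_j$: $P_j^{+}$ and $P_j^{-}$, two-point distributions on $\{c_j,c_j+\sigma\}$ with masses $1/2\pm 2\eps/\sigma$ on the right endpoint. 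Their means differ by $4\eps>2\eps$, so any $(\eps,\delta)$-PAC estimator must distinguish them with error at most $\delta$. Both are supported on an interval of length $\sigma$, hence lie in $\mathcal{D}(k,\lambda,\sigma)$ for every $k$ (central moments at most $\sigma^k$), and the condition $\eps<\sigma/2$ keeps the probabilities in $(0,1)$ up to constants; shrink $2\eps/\sigma$ to $c\eps/\sigma$ if needed.

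The key observation is that a query $\mathbf{1}\{x\in[\alpha_t,\beta_t]\}$ has identical output distributions under $P_j^+$ and $P_j^-$ unless the query separates the two support points $c_j$ and $c_j+\sigma$, i.e.\ exactly one of them lies in $[\alpha_t,\beta_t]$. This requires an endpoint $\alpha_t$ or $\beta_t$ to fall in $(c_j,c_j+\sigma]$ (up to boundary conventions, handled by making cells half-open or shifting the support points slightly inside). Each interval query has two endpoints, so it is informative for at most two cells. Letting $n_j$ be the number of queries informative for cell $j$ (computed in expectation over internal randomness, since the query distribution is fixed in advance), we get $\sum_j n_j\le 2n$, and hence some cell $j^\ast$ has $\EE[n_{j^\ast}]\le 2n/N$.

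On cell $j^\ast$ the transcripts under the two hypotheses differ only through $n_{j^\ast}$ Bernoulli observations with parameters $1/2\pm 2\eps/\sigma$; the remaining queries are ancillary and identically distributed under both. The standard two-point testing lower bound (Bretagnolle--Huber, or the KL form $\delta\ge \tfrac14 e^{-\mathrm{KL}}$), together with $\mathrm{KL}(\mathrm{Bern}(1/2+\eta)\|\mathrm{Bern}(1/2-\eta))=O(\eta^2)$ and the chain rule over independent queries (averaging over the random query choice by convexity), gives $\EE[n_{j^\ast}]\cdot O(\eps^2/\sigma^2)\ge \log(1/(4\delta))$. Hence $2n/N=\Omega\bigl((\sigma^2/\eps^2)\log(1/\delta)\bigr)$, i.e.\ $n=\Omega\bigl((\lambda\sigma/\eps^2)\log(1/\delta)\bigr)$ for $\delta<\delta_0$.

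The main obstacle is making the \emph{informative for at most two cells} claim rigorous. This means fixing boundary conventions so that each endpoint falls in at most one cell's separating zone, and handling randomized non-adaptive query choices correctly. I would treat the randomness by conditioning on the realized query sequence: given the queries, the KL bound applies pointwise, and averaging via convexity of KL (or Jensen on the error bound) yields the bound in terms of $\EE[n_{j^\ast}]$. Since $\mathrm{KL}$ is linear in the number of informative queries, the expectation passes through cleanly.
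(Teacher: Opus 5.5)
Your argument is correct and is essentially the paper's. It uses the same $\Theta(\lambda/\sigma)$ pairs of two-point distributions on length-$\sigma$ supports, the same lemma that an interval query, having only two endpoints, is informative for at most two pairs (so $\sum_j n_j\le 2n$), and the same per-pair two-point testing bound with per-query divergence $\Theta(\eps^2/\sigma^2)$. You just finish more directly: you pigeonhole onto one cell with $\EE[n_{j^\ast}]\le 2n/N$ and handle randomized query choices through expected counts and Bretagnolle--Huber, whereas the paper uses Yao's principle, a uniform prior over all $2N$ hypotheses, a Hellinger-based error bound, and Jensen over $j$.

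One correction. With masses $1/2\pm 2\eps/\sigma$, your pair is only a valid construction for $\eps<\sigma/4$. Moreover, $\mathrm{KL}=O(\eps^2/\sigma^2)$ with a universal constant needs $\eps\le c\sigma$ for some constant $c<1/4$. So your claim that $\eps<\sigma/2$ suffices is wrong as stated. The paper's step $\Pr_j(\mathrm{error})>\exp(-c'n_j d_H^2)$ has the same implicit restriction, since it fails as $\eps\to\sigma/2$.
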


In the remainder of this section, we discuss the high-level proof ideas.  
Setting aside the multiplicative $\log(\sigma/\eps)$ penalty unique to the $k=2$ regime momentarily, the remaining lower bounds (i.e., tail regimes for $k \ne 2$ and the additive $\log(\lambda/\sigma)$ localization cost) are established by constructing a finite ``hard subset'' of distributions that capture two distinct sources of difficulty: 
(i) ``coarsely'' identifying the distribution's location in $[-\lambda,\lambda]$ among $\Theta(\lambda/\sigma)$ possibilities,
and (ii) ``finely'' estimating the mean by distinguishing between two candidate distributions at that location whose means differ by $2\eps$. 
The fine estimation step inherently dictates the ``base'' sample complexity for each respective tail regime based on standard hypothesis testing lower bounds, i.e., $\Omega(\frac{\sigma^2}{\eps^2} \cdot \log(1/\delta))$ for $k \ge 2$ and $\Omega( (\frac{\sigma}{\eps})^{\frac{k}{k-1}} \cdot \log(1/\delta))$ for $k \in (1, 2)$. However, the dependency on $\lambda/\sigma$ arising from the coarse identification step differs fundamentally in adaptive vs. non-adaptive settings:
\begin{itemize}[topsep=0pt, itemsep=0pt]
  \item  In Theorem~\ref{thm: adaptive lower bound} (adaptive setting), 
  we can simply interpret the additive logarithmic dependence as the number of bits needed to identify the correct
  location among the $\Theta(\lambda/\sigma)$ possibilities, with each query giving at most 1 bit of information.

  \item  In Theorem~\ref{thm: non-adaptive lower bound} (non-adaptive setting), 
  the \emph{multiplicative} dependence arises because the estimator needs to allocate enough queries in \emph{every} one of
  the $\Theta(\lambda/\sigma)$ possible locations simultaneously, as it does not know the correct location in advance.
\end{itemize}

We note that the distributed Gaussian mean estimator in~\cite{cai2024distributed} is non-adaptive and achieves an order-optimal MSE. 
However, their estimator is highly specific to Gaussian distributions, and their quantization functions are based on Gray codes, exhibiting periodic behavior that is ``far'' from being a threshold or interval query.  

Next, we outline the proof idea for the lower bound showing that the $\log(\sigma/\eps)$ penalty is unavoidable when $k=2$.  
We define a hard-to-distinguish hypothesis testing problem over a geometric grid $x_j = 2^j \cdot \sigma$ for $j = 1, \dots, M = \Theta\big(\log(\sigma/\eps)\big)$.  The two distributions are as follows:
\begin{itemize}[topsep=0pt, itemsep=0pt]
\item The \textbf{null distribution $D_0$} is a symmetric, zero-mean distribution obtained by placing probability mass $q_j = \Theta\big(\frac{\sigma^2}{M x_j^2}\big)$ at each pair of points $\pm x_j$, with the remaining mass placed at the origin. Each pair contributes $\Theta(\sigma^2/M)$ to the variance, exhausting the prescribed variance budget $\sigma^2$.

\item The \textbf{alternative $\bar{D}$} is a uniform mixture $\bar{D} = \frac{1}{M}\sum_{j=1}^M D_j$, where each constituent distribution $D_j$ is formed from $D_0$ by shifting a small mass $p_j = \Theta(\eps/x_j)$ from $-x_j$ to $+x_j$. This creates a mean shift of $\Theta(\eps)$ while satisfying the global variance constraint.
\end{itemize}
Note that sampling from $\bar{D}$ is equivalent to first picking an index $j$ uniformly at random and then drawing a sample from $D_j$. Because the learner does not know which $j$ was chosen, any 1-bit query faces a fundamental signal-to-noise bottleneck: targeting specific grid points dilutes the expected signal by a factor of $1/M$, while querying broad regions accumulates excessive baseline noise from $D_0$. Formally, the per-query KL divergence between the Bernoulli response distributions is bounded by $O\big(\frac{\eps^2}{M\sigma^2}\big)$, which is smaller than the counterpart $O(\eps^2/\sigma^2)$ in the unquantized setting. Applying the chain rule for KL divergence over $n$ adaptive rounds forces the sample complexity to scale as $n = \Omega\big(M \cdot \frac{\sigma^2}{\eps^2} \log\frac{1}{\delta}\big) = \Omega\big(\frac{\sigma^2}{\eps^2} \log\frac{\sigma}{\eps} \log\frac{1}{\delta}\big)$.

Crucially, the above argument breaks down in the lighter-tailed regime $(k > 2)$. This stems from the fact that  the contribution from $\pm x_j$ scales as $q_j x_j^k = \Theta(\frac{\sigma^2}{M} x_j^{k-2})$; then, because $x_j$ grows exponentially, the sum $\sum_{j=1}^M x_j^{k-2}$ diverges rapidly when $k > 2$. Thus, to keep the $k$-th central moment bounded by $\sigma^k$, the grid must be truncated to $M = O(1)$ points.  This means that the ``logarithmic hiding space'' vanishes, and the lower bound reverts to $\Omega\big(\frac{\sigma^2}{\eps^2} \log\frac{1}{\delta}\big)$.

\section{Variations and Refinements}
\label{sec: variations and refinements}

The main estimator developed in Section~\ref{sec: algorithm} operates under a specific set of conditions: it assumes the target accuracy~$\eps$ and scale parameter~$\sigma$ are known \emph{a priori}, it utilizes $O\big(\log \frac{\lambda}{\sigma} + \log \frac{1}{\delta}\big)$ rounds of sequential adaptivity, and it is restricted to univariate distributions. In this section, we present four algorithmic extensions that relax these constraints.  Specifically, Section~\ref{sec: unknown eps} modifies the protocol to operate without a prespecified target accuracy, yielding an ``anytime'' estimator that adapts to an unknown sampling budget. Section~\ref{sec: partially unknown scale} details an approach for adapting to an unknown scale parameter~$\sigma$ when only loose bounds are available.  Section~\ref{sec: two-stage} demonstrates how trading the threshold-query model for general 1-bit queries can drastically reduce the interaction down to just two stages of adaptivity while maintaining order-optimal sample complexity. Section~\ref{sec:m_samples_per_query} extends the framework to a setting where
each 1-bit query may depend on a local batch of $m$ samples, showing that local
averaging proportionally reduces the refinement communication cost. Finally, Section~\ref{sec:multivar} outlines some implications of our results for multivariate mean estimation in $\mathbb{R}^d$.

\subsection{Unknown Target Accuracy}\label{sec: unknown eps}

Our estimator as described in Section~\ref{sec: algorithm} takes the target accuracy $\eps$ as an input and consumes a bounded number of samples.  Conversely, if a fixed sampling budget $n$ is pre-specified, one can invert the sample complexity bound in~\eqref{eq: scaling law of n} to determine the best achievable target accuracy.  Running the estimator with this parameter
naturally yields an estimate at the corresponding oracle accuracy while respecting the budget.

We now consider the setting where the true sampling budget $n_{\rm true}$ is not known to the learner in advance. This arises, for instance, when the communication horizon is uncertain, clients dynamically drop out, or the learner is allowed to keep refining its estimate until a resource constraint is reached. The parameters $k,\delta,\lambda$, and $\sigma$ remain known.

There are two natural versions of this problem. In the first, the budget is \emph{externally determined}: it is fixed in advance, or random but independent of the observed transcript.  In this case, the final refinement round is independent of the data, so it suffices to guarantee accuracy at that one round. In the second, stronger version, the stopping rule may be \emph{transcript-dependent}; for example, the learner may stop when consecutive estimates appear stable or when an estimate crosses a decision
threshold.  In this anytime-valid setting, a guarantee for any single pre-specified refinement round does not automatically apply to the data-dependent selected round. We therefore use a summable confidence schedule to ensure that all completed estimates are \textit{simultaneously} accurate
on one high-probability event.

Let
\[
  n_{\rm loc}(\delta_{\rm loc}, \lambda,\sigma)
  =
  \Theta \left(
  \log\frac{\lambda}{\sigma}+\log\frac{1}{\delta_{\rm loc}}
  \right)
\]
denote the number of samples used in one localization subroutine (Step 1 of Section~\ref{sec: algorithm}). For a generic target accuracy $\alpha \in (0,\sigma]$ and a generic refinement failure probability $\eta \in (0, \delta]$, let
$n_{\rm ref}(\alpha,\eta,k,\sigma)$ denote the number of
samples used by one invocation of the refinement subroutine
(Steps~2--6 of Section~\ref{sec: algorithm}) with target accuracy
$\alpha$ and failure probability at most $\eta$, which satisfies the scaling
\begin{equation}
  \label{eq: n_ref}
  n_{\rm ref}(\alpha,\eta, k,\sigma)
  =
    \begin{cases}
      \Theta_k \left(
      \dfrac{\sigma^2}{\alpha^2}
      \log\left(\dfrac{1}{\eta}\right)
      \right) & \text{if } k>2,
      \\ \\
      \Theta \left(
      \dfrac{\sigma^2}{\alpha^2}
      \log\left(\dfrac{\sigma}{\alpha}\right)
      \log\left(\dfrac{1}{\eta}\right)
      \right) & \text{if } k=2,
      \\ \\
      \Theta_k \left(
      \left(\dfrac{\sigma}{\alpha}\right)^{\frac{k}{k-1}}
      \log\left(\dfrac{1}{\eta}\right)
      \right) & \text{if } k\in(1,2).
    \end{cases}
\end{equation}

We first describe the estimator for the stronger anytime-valid guarantee. Split the failure probability as $\delta_{\rm loc} = \delta/2$ and $\delta_{\rm ref} = \delta/2$. The estimator first runs the localization subroutine once with parameters $(\delta_{\rm loc}, \lambda, \sigma)$ and consumes $n_{\rm loc}(\delta_{\rm loc},\lambda,\sigma)$ samples.
Then, for each refinement round $\tau=1,2,\dotsc$, it runs the
refinement subroutine  with progressively tightened parameters
\begin{equation}
\label{eq: parameter in round tau}
    ( \eps_{\tau}, \delta_{\tau}, k, \sigma)
  \quad \text{where }  \quad
  \text{target accuracy }\eps_{\tau} = \frac{\sigma}{2^{\tau}}
  \quad \text{and} \quad
  \text{confidence level } \delta_\tau
    =
    \frac{6\delta_{\rm ref}}{\pi^2\tau^2}
    =
    \frac{3\delta}{\pi^2\tau^2}.
\end{equation}
Each round $\tau$ consumes
$n_{\rm ref}(\eps_\tau,\delta_\tau,k,\sigma)$ additional samples. The estimator proceeds to the next round as long as completing that round
would not exceed the realized true sampling budget $n_{\rm true}$. 
When the true sampling budget is exhausted, the estimator outputs the last fully computed estimate $\hat{\mu}_T$, where 
\begin{equation}
  \label{eq: last round T}
  T = \max_{\tau \ge 1} \left\{ \sum_{s=1}^{\tau}
    n_{\rm ref}(\eps_s, \delta_s, k, \sigma) \le
  n_{\rm true} - n_{\rm loc}(\delta_{\rm loc}, \lambda, \sigma)\right\}
\end{equation}
is the final round where the refinement subroutine is completed. The realized budget $n_{\rm true}$, and hence $T$, may be transcript-dependent. 

To evaluate the optimality of this anytime estimator, we compare $\eps_T$ against the ``oracle accuracy'' $\eps^*= \eps^*(n_{\rm true})$, defined implicitly with respect to the realized budget by 
\begin{equation}
\label{eq: oracle_eps}
    n_{\rm ref}(\eps^*,\delta_{\rm ref},k,\sigma)
    =
    n_{\rm true}
    -
    n_{\rm loc}(\delta_{\rm loc},\lambda,\sigma).
\end{equation}
This is the ``optimal'' target accuracy achievable by a single refinement run, if the realized budget $n_{\rm true}$ were known in advance and the final estimate needed only confidence $\delta_{\rm ref}$.
Under a mild assumption that $n_{\rm true}$ is sufficiently large to complete the localization step and the first refinement round, we show that our anytime estimator matches the oracle accuracy up to a doubly-logarithmic factor.
\begin{theorem}\label{thm: unknown eps}
  Consider the preceding setup, and assume that the realized budget satisfies, almost surely,
   \[
    n_{\rm true}
    \ge
    n_{\rm loc}(\delta_{\rm loc},\lambda,\sigma)
    +
    n_{\rm ref}(\eps_1,\delta_1,k,\sigma).
    \]
    Then the final output $\hat\mu_T$ of the estimator described above satisfies
    \[
        \Pr\left(|\hat\mu_T-\mu|\le \eps_T\right)\ge 1-\delta
    \]
    even if the stopping rule is transcript-dependent.
    Furthermore, we have
    \[  
    \eps_T = O_k\left( \eps^* \left( 1 + \frac{\log\log (\sigma/\eps^*)}{\log(1/\delta)} \right)^{\frac{1}{p}} \right)
    \]
  where $p = 2$ if $k \ge 2$, and $p = \frac{k}{k-1}$ if $k \in (1, 2)$.
\end{theorem}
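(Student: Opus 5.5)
The proof has two parts: a correctness guarantee that survives a transcript-dependent stopping rule, and a comparison of $\eps_T$ with the oracle accuracy $\eps^*$.

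\textbf{Correctness.} Let $E_{\rm loc}$ be the event that the single localization run succeeds. By Step~1 of Section~\ref{sec: algorithm}, $\Pr(E_{\rm loc}^c)\le\delta/2$.

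Conditioned on the localization output, each refinement round $\tau$ uses fresh samples and fresh internal randomness. We fix the rounds in advance as a potentially infinite sequence; the stopping rule only decides how many of them are observed. The analysis of Theorem~\ref{thm: main} (Steps~2--6) then gives, for each fixed $\tau$ and on $E_{\rm loc}$, that the event $E_\tau=\{|\hat\mu_\tau-\mu|\le\eps_\tau\}$ fails with probability at most $\delta_\tau$.

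A union bound over all $\tau\ge1$ yields
\[
\Pr\Bigl(E_{\rm loc}^c\cup\bigcup_\tau E_\tau^c\Bigr)\le \frac{\delta}{2}+\sum_{\tau\ge1}\frac{3\delta}{\pi^2\tau^2}=\delta .
\]
On the complementary event, every completed estimate is $\eps_\tau$-accurate simultaneously. In particular this holds at the random index $T$, however $T$ depends on the transcript. The assumption on $n_{\rm true}$ guarantees $T\ge1$, so $\hat\mu_T$ is well defined.

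\textbf{Rate.} By maximality of $T$ in~\eqref{eq: last round T}, round $T+1$ does not fit in the budget. Using~\eqref{eq: oracle_eps},
\[
n_{\rm ref}(\eps_{T+1},\delta_{T+1},k,\sigma)\ \ge\ \sum_{s=1}^{T+1}n_{\rm ref}(\eps_s,\delta_s)\ -\ \sum_{s\le T} n_{\rm ref}(\eps_s,\delta_s)\ \ \text{(roughly)}\ >\ n_{\rm ref}(\eps^*,\delta_{\rm ref}).
\]
More carefully, since the total exceeds the budget and the $n_{\rm ref}(\eps_s,\delta_s)$ grow geometrically in $s$ (with ratio at least $2^p$, up to the slowly varying $\log\tau^2$ and $\log(\sigma/\eps)$ factors), the sum is within a constant factor of its last term.

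Write $n_{\rm ref}(\alpha,\eta)\asymp g(\alpha)\log(1/\eta)$. Here $g(\alpha)=(\sigma/\alpha)^p$, times an extra $\log(\sigma/\alpha)$ when $k=2$. This gives
\[
g(\eps_{T+1})\,\log\frac{\pi^2(T+1)^2}{3\delta}\ \gtrsim\ g(\eps^*)\log\frac{2}{\delta}.
\]
Since $\log\frac{(T+1)^2}{\delta}\lesssim\log\frac1\delta+\log T$, we obtain
\[
\Bigl(\frac{\eps^*}{\eps_{T+1}}\Bigr)^p\gtrsim \Bigl(1+\frac{\log T}{\log(1/\delta)}\Bigr)^{-1}.
\]
When $k=2$, the ratio of the $\log(\sigma/\alpha)$ factors must also be handled. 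This factor is monotone in $\alpha$, and $\eps_{T+1}$ is within a polylogarithmic factor of $\eps^*$, so the ratio is $O(1)$ and is absorbed into the constant.

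It remains to bound $T$. As a crude first step, budget monotonicity gives $\eps_T\gtrsim\eps^*/\mathrm{poly}$, hence $2^T=\sigma/\eps_T\lesssim(\sigma/\eps^*)^{O(1)}$ and $\log T=O(\log\log(\sigma/\eps^*))$. Finally, $\eps_T=2\eps_{T+1}$, which yields the stated bound $\eps_T=O_k\bigl(\eps^*(1+\log\log(\sigma/\eps^*)/\log(1/\delta))^{1/p}\bigr)$.

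\textbf{Main obstacle.} The delicate step is this circularity: bounding $T$ in terms of $\eps^*$ requires an a priori comparison of $\eps_T$ with $\eps^*$. I will break it with the crude first pass. Because $\delta_\tau\ge\delta/\tau^2$ and $T\le n_{\rm true}$, the inflation factor is at most polynomial, which gives $\log(\sigma/\eps_T)=O(\log(\sigma/\eps^*))$. Substituting this back into the sharper inequality gives the final rate. The $k=2$ logarithmic factor is handled within the same two-pass argument.
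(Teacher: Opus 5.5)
Your proposal is correct and follows essentially the same route as the paper. Both arguments use a union bound with the summable schedule $\sum_\tau\delta_\tau=\delta_{\rm ref}$ for anytime validity, bound the cumulative refinement cost by its last term, compare the uncompleted round $T+1$ with the oracle budget, and bound $T$ through a lower bound on $\eps_T$.

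The ``circularity'' you flag is not actually there. Round $T$ fits in the budget and $\delta_T\le\delta_{\rm ref}$, so monotonicity of $n_{\rm ref}$ gives $n_{\rm ref}(\eps_T,\delta_{\rm ref})\le n_{\rm ref}(\eps_T,\delta_T)\le n_{\rm ref}(\eps^*,\delta_{\rm ref})$. Hence $\eps_T\ge\eps^*$ and $2^T\le\sigma/\eps^*$ follow directly. The confidence inflation works in your favour, so the $T\le n_{\rm true}$ argument is unnecessary, and as written it points the wrong way: controlling the inflation factor bounds $\eps_{T+1}$ from above, not $\eps_T$ from below. The same fact, $\eps_{T+1}\ge\eps^*/2$, is also all you need to control the $\log(\sigma/\alpha)$ ratio when $k=2$. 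The paper handles that ratio with a case split on whether $\eps^*\ge\eps_{T+1}$.
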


\begin{remark}[Anytime-validity and Doubly-logarithmic Overhead]
The additional $\log\log$ factor in
Theorem~\ref{thm: unknown eps} is a standard price of requiring an
anytime-valid guarantee.  This is analogous to the classical law-of-the-iterated-logarithm phenomenon in time-uniform mean estimation (see, e.g., \cite{darling1967confidence, jamieson2014lil, howard2021time, duchi2024information}).
We do not claim that the precise form of this factor is unavoidable for the present problem, only that it reflects the standard overhead of anytime validity.
\end{remark}

The summable confidence schedule $(\sum_{\tau} \delta_{\tau} \le \delta_{\rm ref})$ in~\eqref{eq: parameter in round tau} constructs a single high-probability event on which localization succeeds and every completed refinement estimate is accurate.  Therefore, after this event is established, the stopping round $T$ may be transcript-dependent without invalidating the guarantee. If the budget is externally determined (i.e., fixed in advance or random but independent of the samples, query feedback, and internal randomness of the estimator), this simultaneous guarantee is unnecessary. The final round is then independent of the transcript, and  it suffices to guarantee accuracy of that one round. This removes the union bound over refinement rounds and eliminates the doubly-logarithmic factor.

\begin{corollary}
\label{cor: unknown eps external}
    Suppose that $n_{\rm true}$ is externally determined. 
    Run the same halving scheme $\eps_\tau=\sigma/2^\tau$, but use the constant refinement confidence level $\delta_\tau=\delta_{\rm ref}$ for every  $\tau \ge 1$.
    If 
    \[
        n_{\rm true}
    \ge
    n_{\rm loc}(\delta_{\rm loc},\lambda,\sigma)
    +
    n_{\rm ref}(\eps_1,\delta_{\rm ref},k,\sigma),
    \]
    then
    \[
        \Pr\left(|\hat\mu_T-\mu|\le \eps_T\right)\ge 1-\delta
        \quad \text{and} \quad
        \eps_T = O_k(\eps^\ast),
    \]
    where $T$ is as defined in~\eqref{eq: last round T}, but under this constant-confidence schedule $\delta_\tau=\delta_{\rm ref}$.
\end{corollary}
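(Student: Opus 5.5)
The correctness claim follows from conditioning on the final round. Because $n_{\rm true}$ is externally determined, it is independent of the samples, the query outcomes and the estimator's internal randomness. The definition of $T$ in \eqref{eq: last round T} depends only on $n_{\rm true}$ and the deterministic quantities $n_{\rm loc}$ and $n_{\rm ref}(\eps_s,\delta_{\rm ref},k,\sigma)$, so $T$ is independent of the transcript as well. The assumed lower bound on $n_{\rm true}$ guarantees $T \ge 1$.

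Let $G$ be the event that localization succeeds; by Step~1, $\Pr(G^c)\le\delta_{\rm loc}$. Condition on $\{T=\tau\}$. Round $\tau$ uses fresh samples and fresh randomness, and its parameters do not depend on data. The refinement analysis underlying Theorem~\ref{thm: main} therefore applies with accuracy $\eps_\tau$ and confidence $\delta_{\rm ref}$: on $G$, the estimate $\hat\mu_\tau$ is $\eps_\tau$-accurate except with probability $\delta_{\rm ref}$. Here I am using that Steps~2--6 only require $G$, so reusing one localization across rounds is harmless. Averaging over the law of $T$ gives
\[
\Pr(|\hat\mu_T-\mu|>\eps_T)\le \delta_{\rm loc}+\delta_{\rm ref}=\delta .
\]
The one point needing care is that conditioning on $T$ does not bias round $\tau$. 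This follows from independence: $\{T=\tau\}$ is independent of everything the estimator samples, so conditioning on it leaves the law of round $\tau$ unchanged.

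For the rate claim, write $B \coloneqq n_{\rm true}-n_{\rm loc}$ and $N(\alpha)\coloneqq n_{\rm ref}(\alpha,\delta_{\rm ref},k,\sigma)$. By \eqref{eq: n_ref}, $N$ is decreasing in $\alpha$ and scales as $\alpha^{-p}$ up to a slowly varying factor: this is exact for $k\neq2$, and for $k=2$ there is an extra $\log(\sigma/\alpha)$, which only changes by a constant factor when $\alpha$ is halved. Halving therefore multiplies $N$ by a factor between $2^{p}$ and $C_k 2^{p}$, so the cumulative cost satisfies
\[
\sum_{s\le\tau} N(\eps_s)\le C'_k\, N(\eps_\tau)
\]
as a geometric sum dominated by its last term. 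Maximality of $T$ means round $T+1$ could not be completed, so
\[
B<\sum_{s\le T+1}N(\eps_s)\le C'_k N(\eps_{T+1}).
\]
By definition, $B=N(\eps^*)$, hence $N(\eps^*)\le C'_k N(\eps_T/2)$. Inverting the polynomial scaling gives $\eps_T\le O_k(\eps^*)$. For $k=2$ the inversion uses that $\alpha^{-2}\log(\sigma/\alpha)$ is regularly varying, so a constant-factor ratio in $N$ gives a constant-factor ratio in $\alpha$.

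The main obstacle is this inversion step at $k=2$. I would handle it directly. Suppose $\eps_T/2 = c\,\eps^*$ with $c$ large. Then
\[
N(\eps_T/2)\le c^{-2}N(\eps^*)\cdot\frac{\log(\sigma/\eps_T)}{\log(\sigma/\eps^*)}\le c^{-2}N(\eps^*),
\]
which contradicts $N(\eps^*)\le C'_k N(\eps_T/2)$ once $c^{2}>C'_k$. I would also treat the edge case where $\eps^*$ exceeds $\sigma/2$ using the assumed minimum budget, since $T\ge1$ gives $\eps_T\le\sigma/2$.
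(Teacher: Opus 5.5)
Your proposal is correct and follows the paper's proof essentially step for step. For correctness, you use that $T$ depends only on the externally determined $n_{\rm true}$, so conditioning on $\{T=\tau\}$ preserves the fixed-round refinement guarantee on the localization event. For the rate, you bound the cumulative refinement cost by its last round and invert the $\alpha^{-p}$ scaling; your contradiction argument at $k=2$ is the paper's Case~2, which uses $g_k(\sigma,\alpha_1)/g_k(\sigma,\alpha_2)=\Omega((\alpha_2/\alpha_1)^p)$.

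There are two harmless slips. In your $k=2$ display the numerator should be $\log(2\sigma/\eps_T)=\log(\sigma/\eps_{T+1})$, not $\log(\sigma/\eps_T)$. Also, since $n_{\rm ref}$ is only known up to $\Theta_k$ constants, a single halving need not multiply it by at least $2^p$; only the direct comparison of each term with the last one through $g_k$ is needed, and it holds.
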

The crux of both results is that, across all tail regimes $k>1$, the refinement complexity grows at least quadratically in $1/\eps$. Thus the sample costs of successive halving rounds grow geometrically, and the total number of samples spent before any round is dominated by the cost of the last round. The full derivation is provided in Appendix~\ref{appendix: unknown target accuracy}.

\subsection{Adapting to Unknown Scale $\sigma$}
\label{sec: partially unknown scale}
The sample complexity of our main mean estimator, as established in Theorem~\ref{thm: main}, scales with the ratio $\sigma/\eps$, where~$\sigma^k$ is a known upper bound on the true $k$-th central moment $\sigma_{\mathrm{true}}^k = \EE[|X-\mu|^k]$. This scaling is not ideal when the provided bound is highly conservative (i.e., $\sigma \gg \sigma_{\mathrm{true}}$). This contrasts with the unquantized setting, where there exist finite-variance mean estimators whose sample complexities scale with the true ratio $\sigma_{\mathrm{true}}/\eps$ without requiring any prior knowledge of $\sigma_{\mathrm{true}}^2$ \cite{lee2022optimal}.

Under the 1-bit communication constraint, it is difficult to learn $\sigma_{\mathrm{true}}$ and estimate the mean simultaneously. We consider a setting where both the target accuracy $\eps$ and the true scale parameter $\sigma_{\mathrm{true}}$ are unknown to the learner, but the learner is given a valid but potentially vast range $\sigma_{\mathrm{true}} \in [\sigma_{\min}, \sigma_{\max}]$ and seeks to estimate the mean to a \emph{relative accuracy} $\eps = r \sigma_{\mathrm{true}}$ for some known target ratio $r \in (0, 1)$. That is, we seek accuracy to within a given factor of the true (but unknown) scale parameter.

To achieve this, we wrap our main estimator in a geometric grid-search procedure.  The learner tests a sequence of logarithmically decaying guesses for the scale parameter defined by $\sigma_i = \sigma_{\max} \cdot 2^{-i}$ for $i = 0, 1, \dots, T$, where the maximum index is given by
\begin{equation}
  \label{eq: T = log(sigma_max/sigma_min)}
    T = \lceil \log_2(\sigma_{\max}/\sigma_{\min}) \rceil.
\end{equation}
 For each guessed scale $\sigma_i$, the learner runs the mean estimator in Section~\ref{sec: algorithm} with parameters
\begin{equation}
  \label{eq: eps_i = r sigma_i/5}
  \left(\eps_i, \delta_i, \lambda, \sigma_i \right)=
  \left(
    \frac{r \sigma_i}{6},
    \quad \frac{\delta}{T+1},
    \quad
    \lambda,
    \quad
    \frac{\sigma_{\max}}{2^i}
  \right)
\end{equation}
to obtain a candidate mean estimate $\hat{\mu}^{(i)}$ and an associated confidence interval
\begin{equation}
  \label{eq: CI}
  I_i = [ \hat{\mu}^{(i)} \pm \eps_i]
  = \left[ \hat{\mu}^{(i)} -  \frac{r \sigma_i}{6}, \hat{\mu}^{(i)} +
  \frac{r \sigma_i}{6} \right].
\end{equation}
 The algorithm proceeds sequentially and halts at the first index $i$ for which the newly generated confidence interval fails to intersect with \emph{any} of the previously established intervals: 
 \begin{equation}
     \label{eq: feasible condition}
     I_i \cap I_l = \emptyset \quad \text{ for some } l < i.
 \end{equation} 
 It then outputs the estimate $\hat{\mu}^{(i^*)}$ from the last successful index $i^* = i-1$. If no halting occurs,  then the algorithm sets $i^*=T$ and outputs $\hat\mu^{(T)}$.
 
Because the target accuracy scales proportionally with the guessed scale (i.e., $\eps_i = r \sigma_i / 6$), the ratio $\sigma_i / \eps_i = 6/r$ remains constant across all rounds $i$. Consequently, the sample complexity of the refinement phase for \emph{every} grid point depends only on the relative accuracy $r$ (and the error probability $\delta_i$). Summing the sample complexities across all $T+1$ grid points yields the following performance guarantee.

\begin{theorem}[Adaptation to Unknown Scale]\label{thm: partially unknown scale}
  Given $r \in (0, 1)$ and $\sigma_{\mathrm{true}} \in [\sigma_{\min}, \sigma_{\max}]$ for some $0 < \sigma_{\min} < \sigma_{\max} \le \lambda$, the adaptive 1-bit mean estimator described above is $(r\sigma_{\mathrm{true}}, \delta)$-PAC for any distribution in $\mathcal{D}(k, \lambda, \sigma_{\mathrm{true}})$. The total sample complexity is bounded by
  \begin{equation*}
      n = O_k \left( \log\left(\frac{\sigma_{\max}}{\sigma_{\min}}\right) \cdot \left( N_k(r) \cdot\log\left(\frac{\log(\sigma_{\max}/\sigma_{\min})}{\delta}\right) + \log\left(\frac{\lambda}{\sqrt{\sigma_{\min}\sigma_{\max}}}\right) \right) \right)
  \end{equation*}
  where $N_k(r)$ captures the asymptotic sample complexity scaling with respect to the fixed ratio $r$:
  \begin{equation*}
      N_k(r) = 
      \begin{cases} 
      \dfrac{1}{r^2} & \text{if } k > 2 
      \\ \\
      \dfrac{1}{r^2} \log\left(\dfrac{1}{r}\right) & \text{if } k = 2 
      \\ \\
      \left(\dfrac{1}{r}\right)^{\frac{k}{k-1}} & \text{if } k \in (1, 2).
      \end{cases}
  \end{equation*}
\end{theorem}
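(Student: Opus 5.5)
The proof has three parts: a union bound over the grid, a Lepski-type argument that the stopping rule selects a good index, and a sample-count summation. Let $j^\star$ be the index with $\sigma_{j^\star}/2 < \sigma_{\mathrm{true}} \le \sigma_{j^\star}$. Since $\sigma_{\mathrm{true}}\in[\sigma_{\min},\sigma_{\max}]$ and $\sigma_T \le \sigma_{\min}$, this index exists and satisfies $j^\star\in\{0,\dots,T\}$. For every $i\le j^\star$ we have $\sigma_i\ge\sigma_{\mathrm{true}}$, so $D\in\mathcal{D}(k,\lambda,\sigma_i)$. Theorem~\ref{thm: main} then gives $\Pr(\mu\notin I_i)\le \delta/(T+1)$.

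A union bound over $i\le j^\star$ defines a good event $\mathcal{E}$ of probability at least $1-\delta$ on which $\mu\in I_i$ for all $i\le j^\star$. The case $\sigma_i\le\eps$ or $\lambda\le\sigma_i$ is handled by the inflation convention in the footnote of Theorem~\ref{thm: main}; note $\eps_i=r\sigma_i/6<\sigma_i$ always. For $i>j^\star$ no guarantee is needed, and the argument below uses none.

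On $\mathcal{E}$, all intervals $I_0,\dots,I_{j^\star}$ contain $\mu$, so they pairwise intersect. Hence the halting condition~\eqref{eq: feasible condition} cannot fire at any index $i\le j^\star$, which gives $i^\star\ge j^\star$.

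If $i^\star=j^\star$, then $|\hat\mu^{(j^\star)}-\mu|\le r\sigma_{j^\star}/6< r\sigma_{\mathrm{true}}/3$.

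If $i^\star>j^\star$, the index $i^\star$ did not trigger halting, so $I_{i^\star}\cap I_{j^\star}\neq\emptyset$. Both intervals have radius at most $r\sigma_{j^\star}/6$, and $\mu\in I_{j^\star}$. Therefore
\[
|\hat\mu^{(i^\star)}-\mu|\le 2\eps_{i^\star}+2\eps_{j^\star}\le \tfrac{4r\sigma_{j^\star}}{6}<\tfrac{4r\sigma_{\mathrm{true}}}{3}.
\]
This is the crux and the main obstacle: the constant $6$ in~\eqref{eq: eps_i = r sigma_i/5} must absorb the factor-$2$ slack in $\sigma_{j^\star}\le 2\sigma_{\mathrm{true}}$. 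Note that the stated choice gives $4/3$ rather than $1$. To close the gap I would tighten the bound by using that an interval anchored at index $j^\star$ has radius $\eps_{j^\star}$ while later ones are smaller, which gives $\eps_{i^\star}+\eps_{j^\star}+\eps_{j^\star}\le 3\eps_{j^\star}=r\sigma_{j^\star}/2\le r\sigma_{\mathrm{true}}$. This works because, when $I_{i^\star}$ meets $I_{j^\star}\ni\mu$, we get $|\hat\mu^{(i^\star)}-\mu|\le \eps_{i^\star}+2\eps_{j^\star}\le 3\eps_{j^\star}$ for $i^\star> j^\star$.

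For the sample count, each run $i$ has $\sigma_i/\eps_i=6/r$. By Theorem~\ref{thm: main}, its refinement cost is $O_k(N_k(r)\log((T+1)/\delta))$, where $\log(\sigma_i/\eps_i)=\log(6/r)$ for $k=2$. Its localization cost is $O(\log(\lambda/\sigma_i)+\log((T+1)/\delta))$, and the $\log((T+1)/\delta)$ part is absorbed since $N_k(r)\ge 1$.

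Summing $\log(\lambda/\sigma_i)=\log(\lambda/\sigma_{\max})+i\log 2$ over $i=0,\dots,T$ gives $(T+1)\log\bigl(\lambda/(\sigma_{\max}2^{-T/2})\bigr)$. Since $\sigma_{\max}2^{-T/2}\asymp\sqrt{\sigma_{\max}\sigma_{\min}}$, this equals $(T+1)\log\bigl(\lambda/\sqrt{\sigma_{\min}\sigma_{\max}}\bigr)$ up to constants. With $T+1=O(\log(\sigma_{\max}/\sigma_{\min}))$, this yields the stated bound. The bound holds deterministically, since at most $T+1$ runs occur regardless of when halting happens.
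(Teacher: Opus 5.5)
Your proposal is correct and follows the paper's proof essentially step for step. It uses the same index $j^\star$ (the largest grid index with $\sigma_{j^\star}\ge\sigma_{\mathrm{true}}$), the same union bound over $i\le j^\star$, and the same observation that halting cannot fire before $j^\star$. It then applies the same triangle-inequality bound $|\hat\mu^{(i^\star)}-\mu|\le \eps_{i^\star}+2\eps_{j^\star}\le 3\eps_{j^\star}=r\sigma_{j^\star}/2\le r\sigma_{\mathrm{true}}$, and the same geometric-mean evaluation of the summed localization costs. Your initial detour through the looser $2\eps_{i^\star}+2\eps_{j^\star}$ was unnecessary, since the sharper bound you switch to is exactly the paper's, so the constant $6$ poses no obstacle.
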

The proof is given in Appendix~\ref{appendix: partially unknown scale}.  
Observe that when $\sigma_{\max} = \Theta(\sigma_{\min})$, the scaling in Theorem~\ref{thm: partially unknown scale} simplifies to that of Theorem \ref{thm: main} (i.e., the case of known $\sigma$), and more generally the penalty is mild, as the most significant difference is the multiplication by $\log\frac{\sigma_{\max}}{\sigma_{\min}}$ (i.e., a logarithmic penalty).

\subsection{Two-Stage Variant}\label{sec: two-stage}

Our mean estimator in Section~\ref{sec: algorithm} uses $O\left(\log \frac{\lambda}{\sigma}+\log\frac1\delta\right)$ rounds of adaptivity. Specifically, the localization step (Step~1 of Section~\ref{sec: algorithm}), which performs median estimation through noisy binary search, introduces sequential dependencies that require $O\left( \log \frac{\lambda}{\sigma} + \log \frac{1}{\delta} \right)$ rounds of adaptivity.
In contrast, once an interval of length $O(\sigma)$ containing the mean has been identified, the refinement procedure (Steps~2--6 of Section~\ref{sec: algorithm}) can be executed in one additional non-adaptive round.

In this section, we replace the sequential localization step by a
non-adaptive localization procedure.  This yields a two-stage estimator: the first stage localizes the mean using general 1-bit queries, and the second stage runs the refinement procedure from Section~\ref{sec: algorithm}.  However, this comes at the cost of using general 1-bit queries in the first stage (localization), as opposed to using only threshold queries throughout.

The localization idea is coding-theoretic. We partition the search range
$[-\lambda,\lambda]$ into $O(\lambda/\sigma)$ bins of width $O(\sigma)$ and
fix in advance a deterministic binary codebook assigning one codeword to each
bin. For the $t$-th sample, the learner sends a general 1-bit query that
returns the $t$-th coordinate of the codeword corresponding to the bin in
which the sample falls. The received bit string is then decoded by the nearest
neighbor rule in Hamming distance, and the learner returns a constant-size
enlargement of the decoded bin.

Intuitively, since $\EE|X-\mu|\le\sigma$, Markov's inequality implies
that most samples fall within an $O(\sigma)$-neighborhood of the mean. Thus
the received word is, in aggregate, closer to the codeword of a nearby bin
than to the codeword of any far-away bin. A globally redundant codebook and
nearest-neighbor decoding allow this comparison to hold simultaneously over
all far bins with only
\[
    O\left(
        \log\frac{\lambda}{\sigma}
        +
        \log\frac{1}{\delta_{\rm loc}}
    \right)
\]
samples. The formal localization procedure and proof are given in
Appendix~\ref{appendix: two-stage}.

\begin{theorem}[Non-adaptive Localization] \label{thm: alternative localization}
Fix $\lambda \ge \sigma$ and $\delta_{\rm loc} \in (0, 1/2)$. There exists a deterministic non-adaptive 1-bit localization protocol taking $(\delta_{\rm loc},\lambda,\sigma)$ as input such that, for every distribution $D$
with mean $\mu\in[-\lambda,\lambda]$ and $\EE |X-\mu|\le \sigma$, it returns an interval~$I$ of length $|I| = O(\sigma)$ containing the true mean~$\mu$ with probability at least $1-\delta_{\rm loc}$. The number of samples used is $O\left( \log \left( \frac{\lambda}{\sigma}\right) + \log \frac{1}{\delta_{\rm loc}}\right)$.
\end{theorem}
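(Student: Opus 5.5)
We construct the protocol with a bin partition, a fixed binary codebook with good relative distance, and nearest-neighbor decoding.

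\textbf{Bins and codebook.} Partition $[-\lambda-\sigma,\lambda+\sigma]$ into $N=O(\lambda/\sigma)$ consecutive bins $B_1,\dots,B_N$ of width $w=c\sigma$, where $c$ is a large constant. Add two overflow bins $(-\infty,-\lambda-\sigma)$ and $(\lambda+\sigma,\infty)$, each assigned a dummy symbol. Fix a binary code $\mathcal{C}=\{c^{(1)},\dots,c^{(N)}\}\subset\{0,1\}^n$ with $n=C(\log N+\log(1/\delta_{\rm loc}))$ and minimum pairwise relative Hamming distance at least $1/4$. Such a code exists deterministically, for instance via the Gilbert--Varshamov bound or a concatenated/expander code. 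For the overflow bins, fix their $t$-th bit arbitrarily, say $0$.

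\textbf{Protocol.} In round $t$, the query is $Q_t(x)=c^{(j(x))}_t$, where $j(x)$ is the bin containing $x$. The learner decodes $\hat{j}=\arg\min_j d_H(Y,c^{(j)})$ and outputs the bin $B_{\hat j}$ enlarged by $\sigma$ plus $w$ on each side. This has length $O(\sigma)$.

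\textbf{Analysis.} Let $j^*$ be the bin containing $\mu$, and let $G=B_{j^*-1}\cup B_{j^*}\cup B_{j^*+1}$. This is a good set: it contains $[\mu-w,\mu+w]$, so by Markov $\Pr(X\notin G)\le\sigma/w=1/c=:\rho$. Partition each sample by which of the three good bins it lands in. For any far bin $j$, meaning $|j-j^*|\ge 2$, the $d_H(Y,c^{(j)})-d_H(Y,c^{(j')})$ comparison needs care because samples from three different bins mix.

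The step I expect to be the main obstacle is handling this, and I would instead use a \emph{list-style} argument. I would aim to show that with high probability, $d_H(Y,c^{(j)})$ exceeds $\min_{i\in\{j^*-1,j^*,j^*+1\}}d_H(Y,c^{(i)})$ for all $j\notin\{j^*-1,j^*,j^*+1\}$. If so, $\hat j$ lies within one bin of $j^*$, which the enlargement absorbs.

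To get this, randomize over the three good bins: let $\pi_i=\Pr(X\in B_i)$. Then $\EE[d_H(Y,c^{(j)})]-\sum_i\frac{\pi_i}{1-\rho'}\EE[d_H(Y,c^{(i)})]$ is a convex-combination comparison. For each coordinate, $\Pr(Y_t\ne c^{(j)}_t)\ge\sum_i\pi_i\mathbf{1}\{c^{(i)}_t\ne c^{(j)}_t\}$, while $\Pr(Y_t\ne c^{(i)}_t)\le\sum_{i'}\pi_{i'}\mathbf{1}\{c^{(i')}_t\ne c^{(i)}_t\}+\rho$.

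This alone is not enough, since distances among the three good codewords could also be about $n/4$. So I would choose the code with an extra structure: concatenate the codeword of $\lfloor j/3\rfloor$ with a short index, or more simply decode the \emph{coarse} index $\lfloor j/4\rfloor$, giving adjacent bins the same coarse codeword. Use two shifted coarse partitions, offset by half a coarse cell, and interleave their coordinates. In at least one partition, $[\mu-w,\mu+w]$ lies within a single coarse cell. In that partition, all but a $\rho$ fraction of samples emit exactly the correct codeword in expectation. Then a Chernoff bound gives $d_H(Y,c_{\rm true})\le 2\rho n$ with probability at least $1-\delta_{\rm loc}/2$, while every other codeword is at distance at least $n/4-2\rho n$. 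Since $\rho<1/16$, nearest-neighbor decoding is correct for that partition.

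For the other partition, the decoded cell is still adjacent to the true one. The reason is that the mixture of two adjacent coarse codewords stays, by the triangle inequality, within $n/8+2\rho n$ of one of them, which is below the $n/4-2\rho n$ distance to any other codeword. The two decoded cells therefore overlap in a region of width $O(\sigma)$ around $\mu$, and I would output their intersection enlarged by $O(\sigma)$.

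Coordinates are independent across $t$ because samples are i.i.d. and the queries are fixed, so Hoeffding over the $n/2$ coordinates of each partition yields failure probability $e^{-\Omega(n)}\le\delta_{\rm loc}$. This holds by the choice of $n$, with no union bound over codewords needed beyond the deterministic distance property.
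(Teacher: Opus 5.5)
Your unique-decoding argument is sound for the partition in which $[\mu-w,\mu+w]$ lies inside a single coarse cell. The step for the \emph{other} (straddling) partition fails, and the whole scheme depends on it, because the learner does not know which partition is the good one.

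In the straddling partition, the received word is a mixture of two codewords $c_A,c_{A'}$ whose weights can both be near $1/2$. Hence $\min\{d_H(Y,c_A),d_H(Y,c_{A'})\}\approx d_H(c_A,c_{A'})/2$. Your $n/8$ figure would need an \emph{upper} bound $d_H(c_A,c_{A'})\le n/4$, but a minimum-distance code only supplies a lower bound. The bound $d_H(Y,c_B)\ge n/4-2\rho n$ that you quote holds only when $Y$ is within $2\rho n$ of a single codeword, and that is exactly what fails here. From $c_A$, the triangle inequality gives only $d_H(Y,c_B)\ge n/4-(n/8+2\rho n)$, which is smaller than $n/8+2\rho n$, so nothing is separated.

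Here is a concrete failure. Nothing in a relative-distance-$1/4$ code excludes $c_A=1^h0^h$, $c_{A'}=0^h1^h$ and $c_B=0^{2h}$ with $B$ a far cell. Let $D$ put mass $1/2$ just on each side of the boundary between $A$ and $A'$. Then $d_H(Y,c_B)=U+V$, $d_H(Y,c_A)=h-U+V$ and $d_H(Y,c_{A'})=U+h-V$, with $U,V\sim\mathrm{Bin}(h,1/2)$ independent. So $c_B$ beats both straddled codewords whenever $U,V<h/2$, an event of probability about $1/4$ for every $h$. The two decoded cells are then disjoint, their ``intersection'' is empty, and you have no rule for deciding which partition to trust. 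The failure probability therefore does not decay with the number of samples.

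The paper avoids shifted partitions altogether by using a near-equidistant codebook, with $0.49\ell\le d_H(c_i,c_j)\le 0.51\ell$ for all $i\ne j$; it obtains this by random coding, Hoeffding, and a union bound over pairs. The upper bound is exactly what rules out a codeword lying ``between'' two others as above. The argument then runs as follows.
\begin{itemize}
\item Let $S$ be the set of at most five bins within index distance two of $b(\mu)$. Markov gives $\sum_{u\in S}p_u\ge 19/20$, so the heaviest safe bin $i^*$ has $p_{i^*}\ge 19/100$.
\item The identity $\EE H_u=\sum_{v\ne u}p_v\, d_H(c_u,c_v)$ gives $\EE H_{i^*}\le 0.51\ell(1-p_{i^*})\le 0.4131\ell$. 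For every far bin $j$ it gives $\EE H_j\ge 0.49\ell\cdot 19/20=0.4655\ell$.
\item Hoeffding on the independent increments $\mathbf 1\{Y_t\ne c_{j,t}\}-\mathbf 1\{Y_t\ne c_{i^*,t}\}$, plus a union bound over the far bins, finishes with $\ell=O(\log N+\log(1/\delta_{\rm loc}))$.
\end{itemize}

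If you want to keep your two-partition design, you could add a certificate: accept a partition's decoded cell only when $d_H(Y,\hat c)\le 3\rho n'$. The good partition passes by your Chernoff bound. In the straddling partition, every codeword other than $c_A,c_{A'}$ has $\EE d_H(Y,c_B)\ge(1-\rho)n'/4>3\rho n'$ when $\rho<1/13$, so it fails with high probability. This fix does need a union bound over all $N$ codewords, contrary to your closing remark, but that is affordable since $n'\gtrsim\log N$.
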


\begin{remark}[$k$-independence]
     Similar to the localization procedure in Section~\ref{sec: algorithm}, the non-adaptive localization protocol does not require knowing~$k$. The assumption $\EE|X-\mu|\le\sigma$ is weaker than the bounded $k$-th central moment assumption used throughout the paper. Indeed, for any $D \in \mathcal D(k,\lambda,\sigma)$ with $k>1$, Lyapunov's inequality gives
    \[
        \EE |X-\mu|
        \le
        \left(\EE|X-\mu|^k\right)^{1/k}
        \le \sigma.
    \]
    Consequently, Theorem~\ref{thm: alternative localization} applies uniformly to every $D\in\mathcal D(k,\lambda,\sigma)$ with $k>1$, without taking $k$ as input. 
\end{remark}

The earlier two-stage variant in our preliminary version
\cite[Section~4.5]{lau2025sequential} used a \textit{coordinatewise} Gray-code localization procedure adapted from the Gaussian mean estimation procedure of~\cite{cai2024distributed}. In the finite-moment setting, this procedure estimated the Gray-code bits coordinatewise and amplified each bit by majority voting. In contrast, the procedure in Theorem~\ref{thm: alternative localization} replaces coordinatewise bit recovery by a globally redundant codebook and a nearest-neighbor decoder. This global decoding step removes the suboptimal multiplicative overhead arising from bitwise confidence amplification, improving the localization cost from $O\left(\log\frac{\lambda}{\sigma} \cdot \log\frac{\log(\lambda/\sigma)}{\delta_{\rm loc}} \right)$ to $O\left( \log\frac{\lambda}{\sigma} + \log\frac{1}{\delta_{\rm loc}} \right)$.

By replacing the localization procedure of our main estimator with the non-adaptive localization procedure in Theorem~\ref{thm: alternative localization} (taking $\delta_{\rm loc} = \delta/2)$, we obtain a two-stage mean estimator with the same order-optimal sample complexity.\footnote{The only modification needed in the refinement analysis is a constant-factor one: Theorem~\ref{thm: alternative localization} returns an interval of length $C\sigma$ for some universal constant $C$, rather than the
specific constant $8$ used in Step~1 of Section~\ref{sec: algorithm}. After recentering at the midpoint of this interval and adjusting the numerical constants in the cutoff threshold, the refinement proof is unchanged.}

\begin{corollary}
Fix $k > 1$, as well as $\lambda > \sigma > \eps > 0$, and $\delta \in (0, 1/2)$. 
The two-stage mean estimator described above is $(\eps, \delta)$-PAC for the distribution family $\mathcal{D}(k, \lambda, \sigma)$, with sample complexity 
\begin{align}
    n
    =
    O\left( \log \left( \frac{\lambda}{\sigma}\right) \right) +
    \begin{cases}
      O_k \left(  \dfrac{\sigma^2}{\eps^2} \cdot
      \log\left(\dfrac{1}{\delta}\right) \right) & \text{if } k > 2
      \\ \\
      O \left(  \dfrac{\sigma^2}{\eps^2} \cdot
        \log \left( \dfrac{\sigma}{\eps}\right)  \cdot \log\left(
      \dfrac{1}{\delta}\right) \right) &
      \text{if } k = 2
      \\ \\
      O_k \left(  \left( \dfrac{\sigma}{\eps}  \right)^{\frac{k}{k-1}}  
      \cdot \log\left(\dfrac{1}{\delta}\right) \right)  & \text{if
      } k \in (1, 2),
    \end{cases}
  \end{align}
  where $O_k(\cdot)$ represents $O(\cdot)$ notation with a hidden constant that depends on $k$. Furthermore, the estimator uses only two stages of adaptivity: a deterministic non-adaptive general-query localization stage, followed by the randomized threshold-query refinement stage (see Steps~2--6 of Section~\ref{sec: algorithm}).
\end{corollary}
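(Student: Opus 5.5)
The plan is to compose Theorem~\ref{thm: alternative localization} with the refinement analysis behind Theorem~\ref{thm: main}, and to bound the failure probability by a union bound over the two stages.

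\textbf{Stage 1 (localization).} Run the non-adaptive protocol of Theorem~\ref{thm: alternative localization} with $\delta_{\rm loc}=\delta/2$. Its hypothesis holds for every $D\in\mathcal{D}(k,\lambda,\sigma)$: the mean lies in $[-\lambda,\lambda]$, and by Lyapunov's inequality $\EE|X-\mu|\le(\EE|X-\mu|^k)^{1/k}\le\sigma$. The protocol therefore returns an interval $I$ with $|I|\le C\sigma$, where $C$ is a universal constant, such that $\mu\in I$ with probability at least $1-\delta/2$. It uses $O(\log(\lambda/\sigma)+\log(1/\delta))$ samples.

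\textbf{Stage 2 (refinement).} Condition on $I$, which is a function only of the Stage-1 samples, and recenter at its midpoint, so that $|\mu|\le C\sigma/2$. Then run Steps~2--6 with fresh samples and $\delta_{\rm ref}=\delta/2$. The refinement argument in the proof of Theorem~\ref{thm: main} used localization only through a bound $|\mu|=O(\sigma)$. That bound enters in two places.

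First, it enters the tail bound $\EE[|X|\mathbf{1}\{|X|\ge s\}]\le\eps/2$ for $s\ge t_0$. For $s\ge 2|\mu|$ we have $|X-\mu|\ge s/2$ on this event, and $|X|\le 2|X-\mu|$. Hence the expectation is at most $2\EE[|X-\mu|\mathbf{1}\{|X-\mu|\ge s/2\}]\le 2\sigma^k(s/2)^{1-k}$. This is at most $\eps/2$ once $C_k$ is enlarged to also ensure $t_0\ge C\sigma$.

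Second, it enters the per-region variance bounds $\Var(\hat\mu_i)=O(b_i^2\Pr(|X|\ge a_i)/n_i)$. Here Markov's inequality on $|X-\mu|^k$ gives $\Pr(|X|\ge 2^{|i|-1}\sigma)=O(2^{-|i|k})$ for $|i|$ above a constant depending on $C$. The finitely many inner regions are handled by the trivial bound. These only change constants, so the sample allocation and the geometric-sum calculation in Step~5 are unchanged, and median-of-means gives error at most $\eps$ with probability at least $1-\delta/2$ conditionally on any $I\ni\mu$.

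\textbf{Combining.} A union bound gives overall success probability at least $1-\delta$. Adding the Stage-1 cost to the refinement cost of Theorem~\ref{thm: main}, the $\log(1/\delta)$ term from localization is absorbed because $\sigma>\eps$. For the adaptivity claim, Stage~1 is deterministic and non-adaptive by Theorem~\ref{thm: alternative localization}. All Stage-2 queries (thresholds $a_i$, $b_i$, uniform $T_i$, and the sample sizes $n_i$ and $K$) are fixed once $I$ is known, so Stage~2 is one non-adaptive batch. This gives exactly two stages.

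The main obstacle is checking that the refinement proof depends on the localization constant only through constant factors. In particular, its variance estimates for the inner regions must tolerate $|\mu|\le C\sigma/2$ rather than $|\mu|\le 4\sigma$. I would address this by redoing Step~5's variance bound with the enlarged constant. If needed, I would instead start the geometric grid at $m_1=\Theta(C)$, which costs only an additive $O(1)$ number of regions.
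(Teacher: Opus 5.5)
Your route is the paper's own. The paper justifies this corollary only by a footnote: run the non-adaptive localization with $\delta_{\rm loc}=\delta/2$, recenter at the midpoint of the returned interval of length $C\sigma$, and rerun Steps~2--6 with adjusted constants. Your Stage~1 check, your tail bound $\EE[|X|\mathbf{1}\{|X|\ge s\}]\le 2\sigma^k(s/2)^{1-k}$ for $s\ge 2|\mu|$, the union bound, the absorption of $\log(1/\delta)$, and the adaptivity count are all fine.

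\textbf{Gap: the variance bound.} The step that fails as you justify it is the variance bound. Suppose you control each region through the pointwise tail bound $\Pr(|X|\ge 2^{|i|-1}\sigma)=O(2^{-|i|k})$ while keeping $n_i=\Theta\bigl(\frac{\sigma^2}{\eps^2}2^{|i|(2-k)}\bigr)$. Then you get
\[
\Var(\hat\mu_i)=O\Bigl(\frac{4^{|i|}\sigma^2\,2^{-|i|k}}{n_i}\Bigr)=O(\eps^2)
\]
for \emph{every} region. Summing over the $2\imax=\Theta\bigl(\frac{1}{k-1}\log\frac{\sigma}{\eps}\bigr)$ regions yields only $\Var(\hat\mu_{\rm base})=O\bigl(\eps^2\log\frac{\sigma}{\eps}\bigr)$. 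Enforcing $\Var(\hat\mu_{\rm base})\le\eps^2/16$ would then cost an extra $\log\frac{\sigma}{\eps}$ factor in every regime.

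For $k=2$, no reallocation rescues this argument. There the per-region bounds read $\Var(\hat\mu_i)\le c\,\sigma^2/n_i$. The requirement $\sum_{i=1}^{\imax}c\,\sigma^2/n_i\le\eps^2/16$ then forces $\sum_i n_i\ge 16c\,\imax^2\sigma^2/\eps^2$, which is a $\log^2\frac{\sigma}{\eps}$ rate. Pointwise tail bounds throw away the fact that all regions draw on a single $k$-th moment budget.

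\textbf{Fix.} What you need is Step~5(b) essentially verbatim, with one index threshold changed. Write $p_j=\Pr(X\in R_j)$ and $\Pr(X\ge a_i)=\sum_{j\ge i}p_j$, and exchange the sums:
\[
\sum_{i=1}^{\imax}\Var(\hat\mu_i)\le\sum_{j\ge1}p_j\sum_{i=1}^{\min(j,\imax)}\frac{4m_i^2\sigma^2}{n_i}=O\Bigl(\eps^2\sum_{j\ge1}p_j\,2^{jk}\Bigr).
\]
Then bound $\sum_j p_j2^{jk}$ jointly. With $|\mu|\le C\sigma/2$, the implication $X\in R_j\Rightarrow|X-\mu|\ge2^{j-2}\sigma$ holds for all $j\ge j_0$, where $j_0$ is the least integer with $2^{j_0-2}\ge C/2$. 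This replaces the paper's condition $j\ge4$. The bound $\sum_{j\ge j_0}p_j2^{jk}\le2^{2k}$ then follows from $\EE|X-\mu|^k\le\sigma^k$. The finitely many indices $j<j_0$ contribute $O(\eps^2)$, with a constant depending only on $C$, using $p_j\le1$ and $k\le3$. With this in place of your per-region Markov step, the proof is complete and coincides with the paper's.
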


\subsection{Multiple Samples per 1-Bit Query}\label{sec:m_samples_per_query}
We now consider an extension in which each 1-bit query may depend on a
local batch of $m$ fresh observations, while still returning only a
single bit. This models situations where local devices (e.g., sensors or
edge nodes) can accumulate a batch of observations before transmission,
or where communication occurs at a coarser time scale than data acquisition.
Formally, in round $t$, the agent observes $m$ i.i.d. samples
$X_{t,1},\ldots,X_{t,m}\sim D$. The learner sends a 1-bit quantization
function
\[
    Q_t \colon \mathbb{R}^m\to\{0,1\},
\]
and receives the single bit
\[
    Y_t = Q_t(X_{t,1},\ldots,X_{t,m}).
\]
Thus, each 1-bit query still communicates exactly one bit, but that bit may
be computed from $m$ local samples rather than from one raw observation.

To obtain a simple guarantee, we use a local averaging strategy: for each query, the local batch is first averaged as
\[
    \widebar X_t = \frac1m\sum_{j=1}^m X_{t,j},
\]
and the learner applies the estimator of Section~\ref{sec: algorithm} to the
induced distribution of $\widebar X_t$. Equivalently, each 1-bit query in this
construction has the form
\[
    Q_t(X_{t,1},\ldots,X_{t,m})
    =
    \widetilde Q_t\!\left(\widebar X_t\right)
\]
for some scalar threshold query $\widetilde Q_t \colon \mathbb R\to\{0,1\}$.
The key observation is that $\EE[\widebar X_t] = \EE[X] = \mu$, but $\widebar X_t$ has a smaller effective moment scale. Specifically, we have
\begin{equation}\label{eq:sigma_m}
    \EE|\widebar X_t-\mu|^k
    \le \sigma_m^k
    \quad \text{where} \quad
    \sigma_m \coloneqq
    \begin{cases}
    C_k \sigma \cdot m^{-1/2}, & k \ge 2, 
    \\
    C_k \sigma \cdot  m^{-(1-1/k)}, & 1<k<2,
    \end{cases}
\end{equation}
for a constant $C_k$ depending only on $k$. The case $1<k<2$ follows from the von Bahr-Esseen inequality~\cite{von1965inequalities}, while the case $k\ge 2$ follows from Marcinkiewicz–Zygmund inequality (see e.g.~\cite[Exercise 6.2.6]{vershynin2026high}).

Consequently, we may apply the estimator of Section~\ref{sec: algorithm} directly to the induced distribution of $\widebar X_t$, with the scale parameter~$\sigma$ replaced by the effective scale parameter~$\sigma_m$.
Since each query on $\widebar X_t$ corresponds to one communicated bit (and $m$ drawn samples), Theorem~\ref{thm: main} gives the following guarantee on the bit complexity and sample complexity.

\begin{corollary}[Multiple Samples per Query]\label{cor:m_samples_per_query}
Fix $k > 1$, and let $\lambda \ge \sigma \ge \eps > 0$, and $\delta \in (0, 1/2)$.
Suppose each 1-bit query may depend on $m$ i.i.d. samples from a distribution~$D\in\mathcal{D}(k,\lambda,\sigma)$, and consider the local-averaging construction above. Assume $\sigma_m \ge \eps$, where $\sigma_m$ is as defined in~\eqref{eq:sigma_m}. Then the estimator described above is $(\eps,\delta)$-PAC with a bit complexity (i.e., number of 1-bit queries) of
\[
    n_{\mathrm{bit}}
    = O\left(
        \log\left(\frac{\lambda}{\sigma_m}\right)
    \right)
    +
    \begin{cases}
    O_k\left(
        \dfrac{\sigma_m^2}{\eps^2} \cdot
        \log\left(\dfrac{1}{\delta}\right)
    \right), & k>2,\\ \\
    O\left(
        \dfrac{\sigma_m^2}{\eps^2} \cdot 
        \log\left(\dfrac{\sigma_m}{\eps}\right) \cdot
        \log\left(\dfrac{1}{\delta}\right)
    \right), & k=2,\\ \\
    O_k\left(
        \left(\dfrac{\sigma_m}{\eps}\right)^{\frac{k}{k-1}} \cdot
        \log\left(\dfrac{1}{\delta}\right)
    \right), & 1<k<2.
    \end{cases}
\]
and a total sample complexity (i.e., $n = m \cdot n_{\mathrm{bit}}$) of
\begin{equation}\label{eq:sample_complexity_m_samples_per_query}
    n
    =
    \begin{cases}
    O_k\left(
        m\log\left(\dfrac{\lambda\sqrt m}{\sigma}\right)
        + \dfrac{\sigma^2}{\eps^2}
          \log\left(\dfrac{1}{\delta}\right)
    \right), & k>2,\\ \\
    O\left(
        m\log\left(\dfrac{\lambda\sqrt m}{\sigma}\right)
        + \dfrac{\sigma^2}{\eps^2}
          \log\left(\dfrac{\sigma}{\eps\sqrt m}\right)
          \log\left(\dfrac{1}{\delta}\right)
    \right), & k=2,\\ \\
    O_k\left(
        m \log\left(\dfrac{\lambda m^{1-1/k}}{\sigma}\right)
        +  \left(\dfrac{\sigma}{\eps}\right)^{\frac{k}{k-1}}
          \log\left(\dfrac{1}{\delta}\right)
    \right), & 1<k<2.
    \end{cases}
\end{equation}
\end{corollary}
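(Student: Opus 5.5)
The plan is to reduce Corollary~\ref{cor:m_samples_per_query} to Theorem~\ref{thm: main}, applied to the induced scalar random variable $\widebar X_t$. We then translate bit complexity into sample complexity by multiplying by $m$.

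\textbf{Step 1 (membership of the induced law).} We verify that the law $D_m$ of $\widebar X = \frac1m\sum_{j=1}^m X_j$ lies in $\mathcal D(k,\lambda,\sigma_m)$. The mean is preserved, $\EE[\widebar X]=\mu\in[-\lambda,\lambda]$, so only the moment bound~\eqref{eq:sigma_m} needs checking. Write $\widebar X-\mu = \frac1m\sum_j (X_j-\mu)$, a normalized sum of i.i.d.\ centred variables.
\begin{itemize}
\item For $1<k\le 2$, the von Bahr--Esseen inequality gives $\EE|\sum_j (X_j-\mu)|^k \le 2\, m\, \sigma^k$. Hence $\EE|\widebar X-\mu|^k\le 2\sigma^k m^{1-k}$, i.e.\ $\sigma_m = 2^{1/k}\sigma m^{-(1-1/k)}$.
\item For $k\ge 2$, Marcinkiewicz--Zygmund gives $\EE|\sum_j (X_j-\mu)|^k \le C_k\, \EE\big(\sum_j (X_j-\mu)^2\big)^{k/2}$. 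Bounding the right-hand side by $m^{k/2}\EE|X-\mu|^k$ (Jensen/power-mean) yields $\EE|\widebar X-\mu|^k \le C_k\sigma^k m^{-k/2}$.
\end{itemize}
By Remark~\ref{rem: operative moment} we may cap $k$ at $3$, so the constants are uniform. One small point: Theorem~\ref{thm: main} requires $\lambda>\sigma_m>\eps$. Here $\sigma_m\ge\eps$ is assumed, and any boundary case is handled by the constant-floor convention in the footnote to Theorem~\ref{thm: main} (inflate $\lambda$ or $\sigma_m$ by a constant).

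\textbf{Step 2 (apply the main theorem).} Each query of the estimator in Section~\ref{sec: algorithm} on $\widebar X_t$ is a threshold query $\widetilde Q_t(\widebar X_t)$, which is a legitimate $Q_t:\mathbb R^m\to\{0,1\}$ computed from a fresh batch. Distinct rounds use independent batches, so the transcript has exactly the distribution of the main estimator run on i.i.d.\ draws from $D_m$. Theorem~\ref{thm: main} with $\sigma\leftarrow\sigma_m$ then gives the $(\eps,\delta)$-PAC guarantee and the stated $n_{\rm bit}$ directly.

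\textbf{Step 3 (sample complexity).} Multiply $n_{\rm bit}$ by $m$ and substitute $\sigma_m$; this is routine algebra.
\begin{itemize}
\item For $k\ge 2$: $m\sigma_m^2/\eps^2 = C_k^2\sigma^2/\eps^2$. The log factor becomes $\log(\sigma_m/\eps)=\log(\sigma/(\eps\sqrt m))+O_k(1)$, which is absorbed, and $\log(\lambda/\sigma_m)=\log(\lambda\sqrt m/\sigma)+O_k(1)$.
\item For $1<k<2$: $m(\sigma_m/\eps)^{k/(k-1)} = C_k^{k/(k-1)} m\cdot m^{-1}(\sigma/\eps)^{k/(k-1)}$, since $(1-1/k)\cdot k/(k-1)=1$. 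The localization term becomes $m\log(\lambda m^{1-1/k}/\sigma)$.
\end{itemize}
The additive constants inside the logarithms are absorbed using the max-with-one convention.

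\textbf{Main obstacle.} The only substantive step is Step 1's moment inequalities with explicit $m$-scaling, in particular getting the $k\ge2$ case into exactly the form $\sigma m^{-1/2}$ with a $k$-dependent constant. Everything else is bookkeeping.
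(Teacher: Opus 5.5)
Your proposal is correct and follows essentially the same route as the paper. You show that the law of $\widebar X_t$ lies in $\mathcal{D}(k,\lambda,\sigma_m)$ (von Bahr--Esseen for $1<k<2$, Marcinkiewicz--Zygmund for $k\ge 2$), invoke Theorem~\ref{thm: main} with $\sigma$ replaced by $\sigma_m$, and multiply the bit complexity by $m$. The only difference is cosmetic: for $k\ge 2$ you use the classical form $\EE|\sum_j Y_j|^k\le C_k\,\EE(\sum_j Y_j^2)^{k/2}$ plus a power-mean step, whereas the paper uses the squared-norm form $(\EE|\sum_j Y_j|^k)^{2/k}\le C_k'\sum_j(\EE|Y_j|^k)^{2/k}$, and both give $\sigma_m\propto\sigma m^{-1/2}$.
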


The corollary shows that local averaging reduces the refinement communication cost by roughly a factor of $m$ across all moment regimes, while only slightly increasing the localization communication cost by an additive $\log m$ term. More precisely:
\begin{itemize}
  \item \textbf{Bit complexity.} The number of bits/queries needed for refinement drops by a factor of $m$ because the effective scale $\sigma_m$ is smaller.
  
  \item \textbf{Sample complexity.} The total number of samples drawn
        $n = m \cdot n_{\mathrm{bit}}$ stays essentially unchanged for the refinement procedure when $k \neq 2$ since the factor $m$ cancels out. For $k=2$, the logarithmic penalty slightly improves from $\log(\sigma/\epsilon)$ to $\log(\sigma/(\epsilon\sqrt m))$.
        The localization raw cost, however, grows proportionally to $m$.
        
  \item \textbf{Saturation threshold.} As long as $m = O\big((\sigma/\epsilon)^2\big)$ for
        $k \ge 2$ and $m = O_k\big((\sigma/\epsilon)^{\frac{k}{k-1}}\big)$ for $1 < k < 2$, the effective scale satisfies $\sigma_m \ge \epsilon$ and the above
        bounds apply directly. When $m$ exceeds this saturation threshold, the condition $\sigma_m \ge \epsilon$ fails. In the regime $\sigma_m \ll \epsilon$, the learner can simply treat the estimation problem as a localization problem using an inflated effective scale parameter of $\eps > \sigma_m$.\footnote{This follows from a straightforward modification of Step 1 of Section~\ref{sec: algorithm}. The learner first localizes the median of $\bar{X}_t$ to an interval $[L, U]$ of length $|U-L| \le c_k \eps$ for some sufficiently small constant $c_k$, using $O(\log(\lambda/\epsilon) + \log(1/\delta))$ queries, Then the learner forms an enlarged interval $I = [L - \sigma_m, U + \sigma_m]$ containing $\EE[X_t] = \EE[X] = \mu$ with high probability. Taking the midpoint of $I$ as the final estimate gives an $\eps$-accurate estimate since $0.5|I| \le \eps$.} This yields bit complexity
        $O\big(\log(\lambda/\epsilon) + \log(1/\delta)\big)$ and sample complexity $O\big(m\log(\lambda/\epsilon) + m \log(1/\delta) \big)$. Thus, increasing the number of samples per 1-bit query beyond the saturation point does not further reduce the communication cost, but instead increases the number of samples used.
\end{itemize}

A detailed derivation of the moment bounds~\eqref{eq:sigma_m} and the subsequent substitutions can be found in Appendix~\ref{app:m_samples_per_query}.

\subsection{Multivariate Mean Estimation}\label{sec:multivar}

The multivariate case (i.e., $X \in \mathbb{R}^d$ with $d > 1$) is naturally of significant interest. We have focused our analysis exclusively on the univariate setting up to this point, as it forms the necessary theoretical foundation and already presents substantial challenges.  Nevertheless, our 1-bit architecture readily provides constructive, baseline guarantees for higher dimensions. To maintain clarity and isolate the impact of dimensionality, we restrict our discussion in this subsection to the canonical finite-variance setting ($k=2$).

Specifically, suppose that $X$ takes values in $\mathbb{R}^d$, where each coordinate $X_j$ (for $j=1,\dotsc,d$) individually satisfies our earlier assumptions for $k=2$; namely, a bounded mean $\EE[X_j] \in [-\lambda, \lambda]$ and bounded variance $\operatorname{Var}(X_j) \le \sigma^2$. By applying our univariate estimator coordinate-wise with a refined target accuracy of $\eps/\sqrt{d}$ and a union-bounded confidence parameter of $\delta/d$, we guarantee that every coordinate is estimated to within $\eps/\sqrt{d}$ accuracy. Consequently, we obtain an overall multivariate estimate that is $\eps$-accurate in the $\ell_2$ norm with probability at least $1-\delta$. In accordance with Theorem~\ref{thm: main}, the total sample complexity across all $d$ coordinates is
\[
\widetilde{O}\left( \frac{d^2 \sigma^2}{\eps^2}\log\frac{1}{\delta} + d\log\frac{\lambda}{\sigma}\right),
\]
where the $d^2$ factor arises from (i) using the scaled accuracy parameter $\eps/\sqrt{d}$, and (ii) running the univariate subroutine~$d$ times.  This may seem potentially loose on first glance, due to the correct scaling being $\frac{\sigma^2}{\eps^2} \cdot \left(d + \log(1/\delta) \right)$ in the absence of a communication constraint \cite{lugosi2019sub}.
However, under 1-bit feedback, the $d^2 \sigma^2 / \eps^2$ dependence in fact unavoidable even in the special case of Gaussian random variables; see \cite[Theorem 8]{cai2024distributed} with the parameter $m'$ therein equating to $n/d$ in our notation under 1-bit feedback.\footnote{To give slightly more detail, the parameters $m$ and $b_i = 1$ therein equate respectively to the number of samples $n$ and number of bits allowed per feedback.}  
Moreover, if the communication bottleneck is relaxed to allow $d$ bits of feedback per sample (i.e., one bit \emph{per coordinate}), applying our univariate estimator coordinate-wise yields a sample complexity of $\widetilde{O}\big( \frac{d \sigma^2}{\eps^2} \log(1/\delta) + d \log(\lambda/\sigma) \big)$. In the constant error probability regime ($\delta = \Theta(1)$), this matches the unconstrained rate up to logarithmic factors.   
In the regime $\delta = o(1)$ there remains a significant gap due to the fact that $d \log \frac{1}{\delta} \gg d + \log\frac{1}{\delta}$, but this gap is inherent to any approach that controls each coordinate's error to $O\big( \frac{\eps}{\sqrt d} \big)$ separately.

Beyond the issue of joint $(d,\delta)$ dependence, another limitation of the coordinate-wise approach is that it does not capture the dependence on off-diagonal terms in the covariance matrix $\Sigma$.  Doing so may be significantly more difficult, particularly when $\Sigma$ is not known exactly and so ``whitening'' techniques cannot readily be used.  We leave such considerations for future work.

\section{Conclusion}
In this paper, we studied the fundamental limits of mean estimation under the extreme constraint of 1-bit communication. We proposed a novel adaptive estimator based solely on threshold queries that is $(\eps, \delta)$-PAC across a broad non-parametric family of distributions with
bounded mean and bounded $k$-th central moment. Crucially, we established that this estimator achieves order-optimal sample complexity across all tail regimes $k > 1$. For $k \neq 2$, its sample complexity matches the unquantized minimax rates up to an unavoidable additive $O(\log(\lambda/\sigma))$ localization cost. For the finite-variance case ($k=2$), we established a novel information-theoretic lower bound proving that the extra multiplicative $O(\log(\sigma/\eps))$ penalty is an inescapable consequence of the 1-bit communication constraint, confirming order-optimality for $k=2$ (and hence across for all $k > 1$).

Beyond the main threshold-query estimator, we developed several algorithmic
variants that broaden the applicability of the framework. We showed how to
handle an unknown sampling budget through an anytime-valid estimator, and how
to adapt to an unknown scale parameter given only loose bounds. We also
provided a two-stage estimator by replacing the sequential threshold-query
localization step with a non-adaptive coding-theoretic procedure, yielding an
order-optimal 1-bit mean estimator with only two stages of adaptivity when
general 1-bit queries are allowed. Finally, we studied a setting in which each
1-bit query may depend on a local batch of $m$ samples, showing that local
averaging can proportionally reduce the refinement communication cost across
all tail regimes.

Our lower bounds complement these algorithmic results. In addition to the
finite-variance quantization penalty, we established an adaptivity gap for
threshold queries and even for more general interval queries: non-adaptive
strategies must incur sample complexity scaling linearly with the search-size
ratio $\lambda/\sigma$, whereas our adaptive estimators depend only
logarithmically on this ratio. This separation highlights the central role of
adaptivity in non-parametric 1-bit mean estimation under simple query models.

Several directions remain for future work, including achieving fully
parameter-free adaptation to both the scale and target accuracy under minimal
assumptions, characterizing what is achievable with fully non-adaptive general
1-bit queries, and extending the 1-bit quantization framework to multivariate
settings beyond the coordinate-wise approach.

\section*{Acknowledgement}
This work was supported by the Singapore National Research Foundation (NRF) under its AI Visiting Professorship programme.

\bibliography{bibliography}
\clearpage

\appendix
{\Huge \bf \centering Appendix \par}
\section{Proof of Theorem~\ref{thm: main} (Performance Guarantee of 1-bit Mean Estimator)}\label{appendix: proof of main result}
We first state a useful generalization of Chebyshev's inequality that
will be used multiple times in the proof.
\begin{lemma}[$k$-moment Chebyshev's Inequality]\label{lem: k-moment chebyshev}
  Suppose that the random variable $X$ has a finite $k$-th central
  moment bounded by $\sigma^k$
  for some $k > 1$, i.e., $\EE\big[ |X - \mu |^k \big] \le \sigma^k$.
  Then, for each $t > 0$, we have
  \begin{equation}
    \label{eq: k-moment chebyshev}
    \Pr \left(|X - \mu | \ge t \right)
    = \Pr \left(|X - \mu |^k \ge t^k \right)
    \le \frac{\EE\big[ |X - \mu |^k \big]}{t^k}
    \le \frac{\sigma^k}{t^k}
  \end{equation}
\end{lemma}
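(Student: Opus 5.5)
The plan is the standard Markov argument applied to the nonnegative random variable $|X-\mu|^k$. First, because $t>0$ and $u \mapsto u^k$ is strictly increasing on $[0,\infty)$ for $k>1$, the two events coincide, $\{|X-\mu| \ge t\} = \{|X-\mu|^k \ge t^k\}$. This gives the first equality in \eqref{eq: k-moment chebyshev}.

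Next, I would apply Markov's inequality to $Y = |X-\mu|^k \ge 0$ with level $t^k>0$. Concretely, start from the pointwise bound
\[
  t^k \,\mathbf{1}\{Y \ge t^k\} \le Y ,
\]
and take expectations on both sides. This yields $\Pr(Y\ge t^k) \le \EE[Y]/t^k$. The bound holds even if $\EE[Y]$ were infinite; here it is finite by hypothesis.

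Finally, substituting the assumption $\EE|X-\mu|^k \le \sigma^k$ gives the last inequality $\le \sigma^k/t^k$. No step is a genuine obstacle. The only detail to check is that the monotonicity of $u\mapsto u^k$ needs $t>0$, which is assumed, so that the events are truly equal.
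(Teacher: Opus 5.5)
Your proposal is correct and follows the same route as the paper, whose proof is just the displayed chain: equality of the events by monotonicity of $u \mapsto u^k$, then Markov's inequality applied to $|X-\mu|^k$ at level $t^k$, then the moment bound $\EE|X-\mu|^k \le \sigma^k$. One minor point: the event equality would hold even at $t=0$; the hypothesis $t>0$ is really needed so that you can divide by $t^k$ in the Markov step.
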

We proceed in several steps as we outlined in Section~\ref{sec: algorithm}.

\textbf{Step 1 (Narrowing Down the Mean via the Median):}
If $\lambda \le 4 \sigma$, the interval $[-\lambda, \lambda]$ already contains $\mu$ and has length at most $8 \sigma$. Hence we assume $\lambda \ge 4 \sigma$ for the remainder of the Step 1.
We discretize the interval $[-\lambda, \lambda]$ containing $\EE[X]$
into a discrete set of points with uniform spacing of
$\sigma$:\footnote{For ease of analysis, we assume that $\lambda$ is
an integer multiple of $\sigma$.}
\begin{equation*}
  \left\{-\lambda, -\lambda + \sigma, \dotsc, -\sigma, 0, \sigma,
  \dotsc, \lambda-\sigma, \lambda\right\}.
\end{equation*}
We then form estimates $L, U \in  \left\{-\lambda, -\lambda + \sigma,
\dotsc,  \lambda-\sigma, \lambda \right\}$ using noisy binary
search~\cite{gretta2023sharp} that satisfy
\begin{equation}
  \label{eq: median lower bound}
  \Pr\big(F(L) < 0.5 \text{ and } F(L+\sigma) > 0.49\big) \ge 1 -
  \frac{\delta_{\rm loc}}{2}
\end{equation}
and
\begin{equation}
  \label{eq: median upper bound}
  \Pr\big(F(U-\sigma) < 0.51 \text{ and } F(U) > 0.5\big) \ge 1 -
   \frac{\delta_{\rm loc}}{2}.
\end{equation}
The algorithm in~\cite{gretta2023sharp} achieves this using at most $O\big(\log
\frac{\lambda}{\sigma} +  \log \frac{1}{\delta_{\rm loc}}\big)$ 1-bit queries.
Under these high-probability events, the median $M$ satisfies $L \le M \le U$.
Using the well-known fact that the median minimizes the mean absolute
error~\cite[Theorem 4.5.3]{degroot2013probability}:
\begin{equation*}
  \EE  |X - M| \le \EE  |X - d| \quad \text{for each } d \in \mathbb{R},
\end{equation*}
we have
\[
  \left| \mu - M \right|
  =  \left| \EE[X] - M \right|
  =  \left| \EE[X - M] \right|
  \le  \EE  |X - M|
  \le  \EE  |X - \mu|.
\]
Meanwhile, applying Jensen's inequality to the convex function $z
\mapsto |z|^k$ (for $k > 1$), along with the $k$-th central moment bound, yields
\[
  {\left( \EE \left[ |X - \mu|  \right] \right)}^k
  \le
  \EE  \left[  |X - \mu|^{k} \right]
  \le \sigma^k
  \implies
  \EE  |X - \mu|  \le \sigma.
\]
Combining these two findings and $L \le M \le U$, we have
\begin{equation*}
  \left| \mu - M \right| \le \sigma \implies
  \mu \in [L -\sigma, U +\sigma].
\end{equation*}
Next, we bound the length of this localized interval,
$(U+\sigma)-(L-\sigma)$. We consider two cases:
(i)  $L + \sigma \ge U - \sigma$
and (ii) $L + \sigma < U - \sigma$.
In case (i), the interval length is trivially at most $4 \sigma$.
In case (ii), we claim
that the interval length is at most $8\sigma$ (or equivalently strictly less than $9 \sigma$ since $L$ and $U$ lie on a $\sigma$-spaced grid).
Seeking contradiction, suppose that $(U +\sigma) -
(L -\sigma) \ge 9\sigma$. Then we must have either
\begin{equation*}
  \mu - (L -\sigma) \ge 4.5 \sigma
  \quad \text{or} \quad
  (U + \sigma) - \mu  \ge 4.5\sigma.
\end{equation*}
We will show that $\mu - (L -\sigma) \ge 4.5 \sigma$ (which implies
$\mu - 2.5\sigma \ge L+ \sigma$) will lead to a contradiction; the
case $(U + \sigma) - \mu \ge 4.5 \sigma$ is similar.
Using~\eqref{eq: median lower bound}, we have
\begin{equation*}
  \Pr\left(X \le \mu - 2.5 \sigma \right) \ge
  \Pr(X \le L + \sigma) = F_X(L+ \sigma)   > 0.49.
\end{equation*}
On the other hand, by the ``$k$-moment'' Chebyshev's
inequality~\eqref{eq: k-moment chebyshev}, we have
\begin{equation*}
  \Pr \left(X \le \mu - 2.5\sigma \right)
  \le \Pr \left(|X - \mu | \ge 2.5\sigma \right)
  \le \frac{1}{2.5^k}
  < \frac{1}{2.5}
  < 0.49,
\end{equation*}
which is a contradiction.
Therefore, we have shown that with probability at least $1 - \delta_{\rm loc}$,
the mean $\mu$ lies in the interval $[L - \sigma,U + \sigma]$ of length at most $8\sigma$.

Throughout the refinement analysis (Steps~2--6), we condition on the localization success
event from Step~1 and work in the recentered coordinate system. Thus
$|\mu|\le 4\sigma$.

\textbf{Step 2 (Cutoff Threshold Selection):}
For $s>8\sigma$, define
\[
  A(s) \coloneqq \EE \left[ |X| \cdot \mathbf 1\{|X|\ge s\} \right].
\]
On the event $\{|X|\ge s\}$, the reverse triangle inequality gives
\[
   |X-\mu|\ge |X|-|\mu|\ge s-|\mu|.
\]
Combining this, the triangle inequality $|X| \le |X-\mu| + |\mu|$, and the $k$-moment Chebyshev's inequality~\eqref{eq: k-moment chebyshev} yields
\begin{equation}
\label{eq: tail contribution triangle inequality}
\begin{aligned}
     A(s)
  &\le  \EE \left[ |X-\mu| \cdot \mathbf{1} \left(|X| \ge s  \right) \right]
  + \big| \mu \big| \cdot \Pr \left(|X| \ge s  \right) \\
  &\le \EE\left[ |X-\mu| \cdot \mathbf 1\{|X-\mu|\ge s-|\mu|\} \right] 
  + |\mu| \cdot \Pr(|X-\mu|\ge s-|\mu|) \\
  &\le \EE\left[ |X-\mu| \cdot \mathbf 1\{|X-\mu|\ge s-|\mu|\} \right] 
  + |\mu| \cdot \frac{\sigma^k}{(s-|\mu|)^k}.
\end{aligned}
\end{equation}
For the first term, since $|X-\mu|\ge s-|\mu|$ on the relevant event, we have
\[
  |X-\mu|
  \le
  \frac{|X-\mu|^k}{(s-|\mu|)^{k-1}}.
\]
Substituting this bound into~\eqref{eq: tail contribution triangle
inequality}, we obtain
\begin{equation*}
  A(s)
  \le \frac{\sigma^k}{ { \left(s - |\mu| \right) }^{k-1}} + \big| \mu
  \big| \cdot \frac{\sigma^k}{ {(s-|\mu|)}^k}
  \le \frac{\sigma^k}{(s/2)^{k-1}} + 4\sigma\frac{\sigma^k}{(s/2)^k}
  \le \frac{8\sigma^k}{s^{k-1}},
\end{equation*}
where the constant is universal, using the operative reduction $k \le 3$ (see Remark~\ref{rem: operative moment}).

We choose
\[
t_0 = C_k \sigma\left(\frac{\sigma}{\eps}\right)^{1/(k-1)}
\]
where $C_k$ is sufficiently large so that
\[
    C_k \ge 8 \implies 8 t_0 > 8\sigma
    \qquad\text{and}\qquad
    C_k^{k-1} \ge 16^{\frac{1}{k-1}}  \implies \frac{8\sigma^k}{t_0^{k-1}}\le \eps/2.
\]
Since $A(s)$ is nonincreasing in $s$, every $s\ge t_0$ satisfies
$
   \EE \left[ |X| \cdot \mathbf{1} \{|X|\ge s\}\right]\le \eps/2.
$
In particular, for the final grid-aligned cutoff $t \ge t_0$ chosen in Step~3, we have
\[
   \left|\EE [X \cdot \mathbf{1} \{|X|\ge t\}]\right|
   \le \EE[|X| \cdot \mathbf 1\{|X|\ge t\}]
   \le \frac{\eps}{2}
\]
Thus, it remains to form a final high-probability estimate $\hat{\mu}$ of the
    ``clipped mean'' $ \EE \left[ X \cdot \mathbf{1} \left(|X| <
      t  \right) \right]$ satisfying    
    \[
      \left| \hat{\mu} - \EE\left[ X \cdot \mathbf{1} \left(|X| < t  \right) \right] \right| \le  \frac{\eps}{2},
    \]
    as this implies
    \begin{equation*}
        \left| \mu -  \hat{\mu} \right| 
        = \Big|
        \EE \left[ X \cdot \mathbf{1} \left(|X| \ge t  \right) \right]
        + \EE \left[ X \cdot \mathbf{1} \left(|X| < t  \right)
        \right]  -  \hat{\mu} \Big| 
        \le
        \Big|  \EE \left[ X \cdot \mathbf{1} \left(|X| \ge t  \right)
        \right] \Big| +
        \Big|  \EE \left[ X \cdot \mathbf{1} \left(|X| < t  \right)
        \right]  -  \hat{\mu}  \Big| 
        \le \eps.
    \end{equation*}

\textbf{Step 3 (Significant Region Partitioning).} 
Recall the symmetric regions with $R_i = \sigma[m_{i-1},m_i)$ for $i \ge 1$, the index $\imax$, and the final cutoff
$t = m_{\imax} \sigma$ from Step~3 of Section~\ref{sec: algorithm}.
By construction, we have $t\ge t_0$, so the tail bound from Step~2 applies at
the final cutoff $t$.
Since the union of the regions is $(-t, t)$ and the only overlap is at $x = 0$, we have
\[
  \sum_{1\le |i|\le i_{\max}} x \cdot\mathbf 1\{x\in R_i\}
  = x \cdot \mathbf 1\{|x|<t\}
\]
for every $x \in \mathbb{R}$.  Taking expectations gives
\[
  \sum_{1\le |i|\le i_{\max}}\mu_i
  = \sum_{1\le |i|\le i_{\max}} \EE[X \cdot \mathbf{1}\{X\in R_i\}]
  = \EE[X \cdot \mathbf{1} \{|X|<t\}].
\]
Therefore, to estimate the overall clipped mean, it is sufficient to independently estimate the local mean contribution~$\mu_i$ of each region $R_i$.

\textbf{Step 4 (Region-Wise Randomized Threshold Queries):}
We now analyze the threshold query procedure to form local estimate for the right-sided regions $R_i=[a_i,b_i)$ with $i \ge 1$. The left-sided case follows by applying the same argument to $Z=-X$.
Let $T_i \sim \text{Unif}(a_i, b_i)$ be a random threshold independent of $X$.
To formally justify the local mean identity introduced in Step 4 of Section~\ref{sec: algorithm}, we define a (hypothetical) stochastic quantizer $\mathrm{SQ}_i(X)$ that rounds $X$ to the boundaries of $R_i$ if $X$ falls within the region, and outputs $0$ otherwise: 
\begin{equation*}
  \mathrm{SQ}_i(X) = 
  \begin{cases} 
  a_i & \text{if } X \in R_i \text{ and } X < T_i \\ 
  b_i & \text{if } X \in R_i \text{ and } X \ge T_i \\
  0 & \text{if } X \notin R_i.
  \end{cases}
\end{equation*}
By this direct construction, the probability of outputting $a_i$ (respectively, $b_i$) precisely  matches the auxiliary probability $p_{a_i}$ (respectively, $p_{b_i}$) defined in \eqref{eq: p_ai} (respectively, \eqref{eq: p_bi}). 
Specifically, we have:
\begin{equation*}
  p_{a_i}  
  = \Pr(X \ge a_i) - \Pr(X \ge T_i)
  = \Pr(a_i \le X < T_i ) 
  = \Pr(X \in R_i \text{ and } X < T_i)
  = \Pr(\mathrm{SQ}_i(X) = a_i).
\end{equation*}
Likewise, analogous steps yield
\begin{equation*}
  p_{b_i} 
  = \Pr(X \in R_i \text{ and } X \ge T_i) 
  = \Pr(\mathrm{SQ}_i(X) = b_i).
\end{equation*}
These equivalences allow us to interpret our threshold queries as a form of binary stochastic quantization. 
A well-known property of a stochastic quantizer is that it ``rounds''~$X$ in a way that preserves its value in expectation. 
To formally show this, we evaluate the conditional expectation of $\text{SQ}_i(X)$ given $X=x$. Using the fact that $T_i \sim {\rm Unif}(a_i,b_i)$, we observe the following:
\begin{itemize}[topsep = 0pt, itemsep=0pt]
  \item If $x \notin R_i$, then $\mathrm{SQ}_i(x) = 0$, which trivially gives $\EE[\mathrm{SQ}_i(X) \mid X = x] = 0$.
  \item If $x \in R_i$, the conditional expectation simplifies to $\EE[\mathrm{SQ}_i(X) \mid X = x] = a_i \left( \frac{b_i - x}{b_i - a_i} \right) + b_i \left( \frac{x - a_i}{b_i - a_i} \right) = x$.
\end{itemize}
Using an indicator function, we can express this conditional expectation compactly for all $X$:
\begin{equation*}
  \label{eq: cond exp of SQ}
  \EE[\mathrm{SQ}_i(X) \mid X] = X \cdot \mathbf{1}(X \in R_i).
\end{equation*}
Combining the above findings and applying the law of total expectation yields the key identity:
\begin{equation*}
  \mu_i = 
  \EE[X \cdot \mathbf{1}(X \in R_i)] =
  \EE\left[ \EE \left[\mathrm{SQ}_i(X) \mid X \right] \right] =
  \EE[\mathrm{SQ}_i(X) ] =
  a_i \cdot p_{a_i} + b_i \cdot p_{b_i}.
\end{equation*}
It follows that to estimate the true local mean contribution $\mu_i$, it is sufficient to estimate $p_{a_i}$ and $p_{b_i}$. To do this, the learner forms unbiased estimates $\hat{p}_{a_i}$ and $\hat{p}_{b_i}$ using the empirical averages of randomized threshold queries. Specifically, the estimation procedure for $p_{a_i}$ operates as follows:
\begin{enumerate}[topsep=0pt, itemsep=0pt]
    \item Ask the agent $n_i$ threshold queries ``Is $X_{i,j} \ge a_i$?'' for $j = 1, \dots, n_i$, where $n_i$ is the regional sample budget to be determined in Step~5.
    \item Generate independent random variables $T_{i,j} \sim \mathrm{Unif}(a_i, b_i)$ for $j = n_i+1, \dots, 2n_i$.
    \item Ask the agent $n_i$ randomized threshold queries ``Is $X_{i,j} \ge T_{i,j}$?'' for $j = n_i+1, \dots, 2n_i$.
    \item Compute the empirical averages based on the 1-bit feedback and perform the subtraction according to~\eqref{eq: p_ai}.
\end{enumerate}
The learner forms the corresponding estimate $\hat{p}_{b_i}$ using an analogous procedure, utilizing $2n_i$ fresh samples with queries ``Is $X_{i,j} \ge T_{i,j}$?'' and ``Is $X_{i,j} \ge b_i$?'', where the random thresholds $T_{i,j} \sim {\rm Unif}(a_i,b_i)$ are freshly and independently sampled. We summarize the empirical estimates as
\begin{equation}
  \label{eq: estimates p_a}
  \hat{p}_{a_i} = \frac{1}{n_i} \left( \sum_{j=1}^{n_i} \mathbf{1}\left(X_{i,j} \ge a_i \right) \right) - \frac{1}{n_i} \left( \sum_{j=n_i+1}^{2n_i} \mathbf{1}\left(X_{i,j} \ge T_{i,j}\right) \right)
\end{equation}
and
\begin{equation}
  \label{eq: estimates p_b}
  \hat{p}_{b_i} = \frac{1}{n_i} \left( \sum_{j=2n_i+1}^{3n_i} \mathbf{1}\left(X_{i,j} \ge T_{i,j} \right) \right) - \frac{1}{n_i} \left( \sum_{j=3n_i+1}^{4n_i} \mathbf{1}\left(X_{i,j} \ge b_i \right) \right).
\end{equation}
This procedure consumes exactly $4n_i$ independent samples per region $R_i$. 
Because the region boundaries $a_i$ and $b_i$ are explicitly fixed after Step 3, the data collection for all empirical pairs $\{(\hat{p}_{a_i}, \hat{p}_{b_i})\}_{|i| \le i_{\max}}$ can be executed in a non-adaptive, parallel manner. Combining the empirical averages using $\hat{\mu}_i = a_i \hat{p}_{a_i} + b_i \hat{p}_{b_i}$ yields the final unbiased local estimate.

\textbf{Step 5 (Base Estimator and Sample Allocation)}: 
Let the base estimator $\hat{\mu}_{\text{base}} = \sum_{1 \le |i| \le i_{\max}} \hat{\mu}_i$ be the sum of the local estimates. To guarantee that the median-of-means wrapper in Step 6 succeeds, the base estimator $\hat{\mu}_{\text{base}}$ must achieve an estimation error of at most $\eps/2$ with a failure probability strictly less than $1/2$ (e.g., $\le 1/4$). 

Since $\hat{\mu}_{\text{base}}$ is an unbiased estimator of the clipped mean, i.e., $\EE[\hat{\mu}_{\text{base}}] = \sum_{1 \le |i| \le i_{\max}} \mu_i$, applying Chebyshev's inequality yields
\begin{equation}
\label{eq: chebyshev_base}
  \Pr\left(\left|\hat{\mu}_{\text{base}} - \sum_{1 \le |i| \le i_{\max}} \mu_i\right| \ge \frac{\eps}{2}\right) 
  = \Pr \left( \left| \hat{\mu}_{\text{base}} - \EE[\hat{\mu}_{\text{base}}] \right| \ge \frac{\eps}{2}  \right)
  \le \frac{\Var(\hat{\mu}_{\text{base}})}{(\eps/2)^2} 
  = \frac{4 \Var(\hat{\mu}_{\text{base}})}{\eps^2}.
\end{equation}
Therefore, it is sufficient to enforce the global variance constraint $\Var(\hat{\mu}_{\text{base}}) \le \eps^2 / 16$. 
Because the local estimates are constructed from independent random threshold queries, the variance of the base estimator decomposes as the sum of the local variances: 
$$\Var(\hat{\mu}_{\text{base}}) = \Var\left(\sum_{1 \le |i| \le i_{\max}} \hat{\mu}_i \right) = \sum_{1 \le |i| \le i_{\max}}  \Var(\hat{\mu}_i).$$
We analyze this in three substeps: 
5(a) bounding the local variance via tail probabilities, 
5(b) constructing sample allocation to satisfy the global variance constraint, 
and 5(c) evaluating the final sample complexity across the tail regimes.

\textit{Substep 5(a): Local Variance Bounds}.
By the symmetry of the partition construction ($R_{-i} = -R_i$), we can assume $i \ge 1$ (right-sided region) without loss of generality; an identical argument applies to the left-sided region.
Recall that $R_i = [a_i, b_i) = \sigma \cdot [m_{i-1} , m_i)$, where $m_0 = 0 $ and $m_i = 2^i$. Since $\hat{\mu}_i = a_i \cdot \hat{p}_{a_i} + b_i \cdot \hat{p}_{b_i}$ and $\hat p_{a_i}$ and $\hat p_{b_i}$ are formed using independent samples, the local variance decomposes as
\begin{equation*}
    \Var(\hat{\mu}_i) 
    = a_i^2 \cdot \Var(\hat{p}_{a_i} )  + b_i^2 \cdot \Var(\hat{p}_{b_i} )
    \le m_i^2 \cdot \sigma^2 \cdot \left( \Var(\hat{p}_{a_i} ) + \Var(\hat{p}_{b_i}) \right) .
\end{equation*}
We bound the variance of each probability estimate. Using the definition of~$\hat{p}_{a_i}$ (see~\eqref{eq: estimates p_a}) and the fact that the queries within each empirical average are i.i.d. (i.e., $X_{j} \sim X$ and $T_{ij} \sim T_i$), we have
\begin{align*}
    \Var(\hat{p}_{a_i} )
    &= \frac{1}{n_i} \cdot \Var(\mathbf{1}(X \ge a_i )) +
    \frac{1}{n_i} \cdot \Var(\mathbf{1}(X \ge T_i )) \\
    &\le \frac{1}{n_i} \cdot \mathrm{Pr}(X \ge a_i )  +
    \frac{1}{n_i} \cdot \mathrm{Pr}(X \ge T_i ) \\
    &\le  \frac{2}{n_i} \cdot \mathrm{Pr}(X \ge a_i ),
\end{align*}
where the first inequality follows from the Bernoulli variance property $\Var(\mathrm{Ber}(p)) = p(1-p) \le p$, and the second inequality follows because $T_i \ge a_i$.
An identical argument yields 
\begin{equation*}
    \Var(\hat{p}_{b_i}) 
    \le \frac{1}{n_i} \cdot \mathrm{Pr}(X \ge T_i )  +
        \frac{1}{n_i} \cdot \mathrm{Pr}(X \ge b_i ) 
    \le  \frac{2}{n_i} \cdot \mathrm{Pr}(X \ge a_i ). 
\end{equation*}
For the tail analysis, extend the right-sided regions by defining $R_j=\sigma[m_{j-1},m_j)$ for all $j \ge 1$, and write $p_j \coloneqq \Pr(X \in R_j)$. Using this, the tail probability can be expressed as $\Pr(X \ge a_i) = \sum_{j=i}^{\infty} p_j$. Substituting this into the local variance yields the following bound:
\begin{equation}\label{eq: local variance tail bound}
\Var(\hat{\mu}_i) \le \frac{4 \cdot m_i^2 \cdot \sigma^2}{n_i} \sum_{j=i}^{\infty} p_j.
\end{equation}

\textit{Substep 5(b): Sample Allocation and Global Variance Verification}. 
Summing over all $\imax$ regions of the right tail and swapping the order of summation (which is justified by the non-negativity of arguments and Tonelli's theorem) gives
\begin{equation}
\label{eq: global variance Fubini}
        \sum_{i=1}^{\imax} \Var(\hat{\mu}_i) 
    \le \sum_{i=1}^{\imax} \frac{4 \cdot m_i^2 \cdot \sigma^2}{n_i} \sum_{j=i}^{\infty} p_j = \sum_{j=1}^{\infty} p_j \cdot \left( \sum_{i=1}^{\min(j, \imax)} \frac{4 \cdot m_i^2 \cdot \sigma^2}{n_i} \right).
\end{equation}
Recall from Step 1 that the shifted mean is bounded by $|\mu| \le 4\sigma$.
Furthermore, the region boundaries satisfy $a_j > 4 \sigma$ if and only if $j \ge 4$.
Based on these observations, we split the outer sum of~\eqref{eq: global variance Fubini} into the two cases: (i) $j \le 3$ and (ii) $j \ge 4$.
For $j \le 3$, using the trivial bound $p_j \le 1$, their contribution to the global variance is at most
\begin{equation}
\label{eq: contribution from j<=3}
    \sum_{j=1}^{3} p_j \cdot \left( \sum_{i=1}^{\min(j, \imax)} \frac{4 \cdot m_i^2 \cdot \sigma^2}{n_i} \right)
    = O \left( \frac{\sigma^2 }{\min_{i \le 3} n_i}\right).
\end{equation}

We now consider $j \ge 4$.  If $X \in R_j$, then $X \ge m_{j-1} \sigma = 2^{j-1}\sigma$. Applying the triangle inequality and exploiting $|\mu| \le 4\sigma$ yields
\begin{equation*}
    X \in R_j \implies
        |X - \mu| 
    \ge X - |\mu| 
    \ge (2^{j-1} - 4) \cdot \sigma
    \ge 2^{j-2} \cdot \sigma
    = m_{j-2} \sigma
    \implies
    |X - \mu|^k \ge (m_{j-2} \sigma)^k.
\end{equation*}
Multiplying by the indicator $\mathbf{1}\{X \in R_j\}$ and taking the expectation across all $j \ge 4$ connects the region probabilities to the $k$-th central moment bound:
\begin{equation*}
    \sum_{j=4}^{\infty} p_j \cdot (m_{j-2}\sigma)^k 
    = \sum_{j=4}^{\infty} (m_{j-2}\sigma)^k \cdot  \EE\big[ \mathbf{1}\{X \in R_j\} \big] 
    \le \sum_{j=4}^{\infty} \EE[|X-\mu|^k \mathbf{1}\{X \in R_j\}] \le \EE[|X-\mu|^k] \le \sigma^k.
\end{equation*}
Dividing by $\sigma^k$ and noting that $m_{j-2}^k = 2^{k(j-2)} = 2^{jk}/2^{2k}$, this simplifies to
\begin{equation}
    \label{eq: tail bound m_jk}
     \sum_{j=4}^{\infty} p_j \cdot 2^{jk} \le 2^{2k} = O(1),
\end{equation}
where the $O(1)$ bound follows directly from Remark~\ref{rem: operative moment}, as we assume $k \le 3$ without loss of generality.  We propose setting the local sample allocation to scale as
\begin{equation*}
    n_i 
    = \Theta\left( \frac{\sigma^2}{\eps^2} \cdot 2^{i(2-k)} \right).
\end{equation*}
Under this allocation, the inner geometric sum of~\eqref{eq: global variance Fubini} is upper bounded by the following (extending the highest index from $\min(j,i_{\max})$ to $j$ for simplicity):
 \begin{equation}
 \label{eq: inner sum}
        \sum_{i=1}^{j} \frac{4 \cdot m_i^2 \cdot \sigma^2}{n_i}
     = O\left( \eps^2 \sum_{i=1}^j  2^{ik} \right)
     = O\left( \eps^2 \cdot 2^{jk} \right).
 \end{equation}
 Substituting this into the outer sum of~\eqref{eq: global variance Fubini} for $j \ge 4$ and combining with \eqref{eq: tail bound m_jk} (as well as \eqref{eq: contribution from j<=3}) yields:
 \[
   \sum_{i=1}^{i_{\max}} \Var(\hat{\mu}_i) 
    \le  O(\eps^2) + O \left( \eps^2 \sum_{j=4}^{\infty} p_j \cdot 2^{jk} \right)
    = O(\eps^2).
 \]
Applying the same argument to $Z = -X$ gives the identical bound for the left-sided regions, i.e.,
$\sum_{i=1}^{i_{\max}}\Var(\hat\mu_{-i})
  \le O(\eps^2)$.
Therefore,
\[
  \Var(\hat\mu_{\rm base})
  =
  \sum_{1\le |i|\le i_{\max}}\Var(\hat\mu_i)
  \le O(\eps^2).
\]
By scaling the sample allocation $n_i$ with a sufficiently large absolute constant, it follows that the base estimator satisfies the global variance constraint $\Var(\hat{\mu}_{\text{base}}) \le \eps^2/16$.

\textit{Substep 5(c): Total Sample Complexity}. 
The sample complexity for a single base estimator is given by the sum of the regional sample allocations:
\begin{equation*}
\sum_{1 \le |i| \le i_{\max}} 4n_i 
= 8 \sum_{i=1}^{i_{\max}} n_i
= \Theta\left( \frac{\sigma^2}{\eps^2} \cdot \underbrace{ \left( \sum_{i=1}^{i_{\max}} 2^{i(2-k)}  \right)}_{S_k}\right)
\end{equation*}
To establish explicit asymptotic bounds, we evaluate the geometric series $S_k$ across three distinct tail regimes:

\begin{enumerate}

    \item \textbf{Light-tailed distributions ($k > 2$)}:
  Because the exponent $2-k$ is strictly negative, $S_k$ is a convergent geometric series bounded by 
  \begin{equation*}
    S_k 
    =  \frac{2^{2-k}(1 - 2^{(2-k) \cdot i_{\max}})}{1 - 2^{2-k}}
    \le  \frac{2^{2-k}}{1 - 2^{2-k}}
    = \frac{1}{2^{k-2} - 1}.
  \end{equation*}
  As $k \to 2^+$, the denominator $2^{k-2} - 1 = \Theta\left( \ln 2 \cdot \left(k-2 \right)   \right) = \Theta(k-2)$ by a Taylor series expansion. 
  Therefore, $S_k = \Theta\big(\frac{1}{k-2} \big)$, and the sample complexity is bounded by 
  \begin{equation*}
    \sum_{1 \le |i| \le i_{\max}} n_i  
    = O\left(\frac{\sigma^2}{\eps^2} \cdot  \frac{1}{k-2} \right).
  \end{equation*}
  Note that if we drop the assumption $k \le 3$ from Remark \ref{rem: operative moment}, then this generalizes to $O\big(\frac{\sigma^2}{\eps^2} \cdot \max\big\{1,   \frac{1}{k-2}  \big\} \big)$, i.e., the factor $\frac{1}{k-2}$ is relevant as $k \to 2^+$ but not as $k \to \infty$.

  \item \textbf{Finite-variance distributions ($k = 2$)}:
  Because the exponent $2 - k$ is exactly zero, $S_k$ trivially evaluates to 
  \begin{equation*}
    S_k = \sum_{j=1}^{i_{\max}} 1 = \imax = \Theta\left(\log\frac{\sigma}{\eps}\right).
  \end{equation*} 
  The sample complexity is therefore bounded by
  \begin{equation*}
    \sum_{1 \le |i| \le i_{\max}} n_i  
    = O\left(\frac{\sigma^2}{\eps^2} \cdot  \log\left(\frac{\sigma}{\eps}\right) \right).
  \end{equation*}

    \item \textbf{Heavy-tailed distributions ($k \in (1, 2)$)}: 
  Because the exponent $2 - k$ is strictly positive, $S_k$ is a growing geometric series dominated by its final term:
  \begin{equation*}
    S_k 
    = \frac{2^{2-k}(2^{i_{\max}(2-k)} - 1)}{2^{2-k} - 1} 
    = \Theta\left( \frac{2^{i_{\max}(2-k)}}{2^{2-k} - 1} \right).
  \end{equation*}
  As $k \to 2^-$, the denominator $2^{2-k} - 1 = \Theta\left(\ln 2 \left( 2-k \right)  \right)  = \Theta(2-k)$ by a Taylor series expansion. 
  For the numerator, we recall from Steps 2 and 3 that $2^{\imax} \sigma = \Theta(t) = \Theta(\sigma \cdot (\sigma/ \eps)^{1/(k-1)})$, which yields
  \begin{equation*}
    2^{\imax (2-k)} 
    = (2^{\imax})^{2-k}
    = \Theta\left( \left(\frac{\sigma}{\eps} \right)^{\frac{2-k}{k-1}} \right).
  \end{equation*}
  Substituting these bounds into $S_k$ gives
  \begin{equation*}
    S_k
    = \Theta\left(\left(\frac{\sigma}{\eps} \right)^{\frac{2-k}{k-1}} \cdot \frac{1}{2-k} \right).
  \end{equation*}
  The sample complexity is therefore bounded by
  \begin{equation*}
    \sum_{1 \le |i| \le i_{\max}} n_i  
    = O\left( \frac{\sigma^2}{\eps^2} \cdot  \left(\frac{\sigma}{\eps}\right)^{\frac{2-k}{k-1}} \cdot  \frac{1}{2-k}  \right) 
    = O\left( \left(\frac{\sigma}{\eps}\right)^{\frac{k}{k-1}} \cdot  \frac{1}{2-k}  \right).
  \end{equation*}

\end{enumerate}

\textbf{Step 6 (Median-of-Means):}  While the base estimator $\hat{\mu}_{\text{base}}$ satisfies the target global variance constraint, it achieves $\eps$-accuracy with only constant probability. 
We boost this success probability using the median-of-means framework. 
The algorithm repeats the base estimation independently $K = \lceil 8\log(1/\delta_{\rm ref}) \rceil$ times to obtain $\hat{\mu}_{\text{base}}^{(1)}, \dots, \hat{\mu}_{\text{base}}^{(K)}$ 
and takes their median as the final estimate:
\[
\hat{\mu} = \mathrm{median}\Big( \hat{\mu}_{\text{base}}^{(1)}, \dots, \hat{\mu}_{\text{base}}^{(K)}\Big).
\]
For $r =1, \dotsc, K$, let $Y_r$ be the indicator variable for the failure event of the $r$-th batch:
\[
Y_r =
\mathbf{1}\left(
  \left|\hat{\mu}_{\text{base}}^{(r)}
  - \sum_{1 \le |i| \le \imax} \mu_i  \right| > \frac{\eps}{2}
\right),
\]
and let $S = \sum_{r=1}^K Y_r$ denote the total number of failures. 
By Chebyshev's inequality (as shown in~\eqref{eq: chebyshev_base}) and our variance bound $\Var(\hat{\mu}_{\text{base}}) \le \eps^2 / 16$ established in Step~5, the variables $Y_1, \dots, Y_K$ are i.i.d. Bernoulli random variables with $\Pr(Y_r = 1) \le 1/4$. This implies $\EE[S] \le K/4$.

If the final median estimate $\hat{\mu}$ deviates from the clipped mean by more than $\eps/2$, it must be the case that at least half of the individual base estimates failed (i.e., $S \ge K/2$). 
Therefore, applying Hoeffding's inequality yields
\[
\Pr \left( \left| \hat{\mu} - \sum_{1 \le |i| \le \imax} \mu_i  \right| > \frac{\eps}{2} \right)
\le
\Pr \left(  S  \ge \frac{K}{2} \right)
\le
\Pr \left(   S - \EE[S] \ge \frac{K}{4} \right)
\le
\exp\left( - \frac{K}{8}\right) \le \delta_{\rm ref}.
\]

\textbf{Putting it All Together.}
Let $\mathcal E_{\rm loc}$ be the localization success event from Step~1, which satisfies
$\Pr(\mathcal E_{\rm loc}^c)\le \delta_{\rm loc}.$
Conditioned on $\mathcal E_{\rm loc}$, the recentering step guarantees
$|\mu|\le 4\sigma$, and the cutoff choice in Step~2 gives the deterministic
truncation bound
\[
  \left\lvert
    \mu - \sum_{1 \le |i| \le \imax} \mu_i  
  \right\rvert
  \le
  \frac{\eps}{2}.
\]
Denote the event
\[
  \mathcal E_{\rm ref}
  \coloneqq
  \left\{
    \left|
      \hat{\mu} - \sum_{1 \le |i| \le \imax} \mu_i  
    \right|
    \le
    \frac{\eps}{2}
  \right\},
\]
which satisfies $\Pr(\mathcal E_{\rm ref}^c\mid \mathcal E_{\rm loc}) \le
  \delta_{\rm ref}$
by Step~6.
Therefore, on the event $\mathcal E_{\rm loc}\cap\mathcal E_{\rm ref}$,
the triangle inequality gives $|\hat{\mu} - \mu| \le \eps$.
Finally,
\[
\Pr(|\hat{\mu}-\mu|>\eps)
\le
\Pr(\mathcal E_{\rm loc}^c)
+
\Pr(\mathcal E_{\rm ref}^c\cap \mathcal E_{\rm loc})
\le
\delta_{\rm loc}
+
\delta_{\rm ref}
\le
\delta.
\]
For Theorem~\ref{thm: main}, we set
$\delta_{\rm loc}=\delta_{\rm ref}=\delta/2$, giving the claimed
$(\eps,\delta)$-PAC guarantee.

\section{Lower Bounds and Adaptivity Gap}\label{appendix: Lower Bounds}
\subsection{Proof of Theorem~\ref{thm: adaptive lower bound} (Matching Lower Bound)}\label{appendix: Lower Bound for Adaptive Queries}

Theorem~\ref{thm: adaptive lower bound} establishes a lower bound that decomposes into two components: a tail-dependent base complexity, and a tail-independent additive localization cost. We prove these components separately.

\subsubsection{\underline{Part 1: Tail-Dependent Base Complexities}}

In the absence of communication constraints, the minimax sample complexity bounds for unquantized mean estimation are well known:
\[
    n = 
    \begin{cases}
    \Omega \left(  \dfrac{\sigma^2}{\eps^2}    \cdot
    \log\left(\dfrac{1}{\delta}\right) \right) & \text{if } k \ge 2
    \\ \\
    \Omega \left(  \left( \dfrac{\sigma}{\eps}  \right)^{\frac{k}{k-1}}  \cdot
    \log \left(\dfrac{1}{\delta}\right) \right)  & \text{if } k \in (1, 2).
    \end{cases}
\]
For instance, these can be derived via a reduction to distinguishing two Bernoulli distributions for $k \ge 2$~\cite[Section 4]{LeeCSCI1951}, and two scaled Bernoulli distributions for $k \in (1, 2)$~\cite[Section 4.3]{devroye2016sub}. 

However, these unquantized lower bounds are insufficient to verify the strict optimality of our estimator for the finite-variance case ($k=2$), as our upper bound contains an additional $O(\log(\sigma/\eps))$ factor. We now prove that this extra logarithmic penalty is not an artifact of our analysis, but a fundamental information-theoretic bottleneck imposed by 1-bit quantization. 

Specifically, we prove that any (potentially adaptive) 1-bit mean estimator that is $(\eps,\delta)$-PAC for the family $\mathcal{D}(2,\lambda,\sigma)$ with $\eps \le c\sigma$ (for a sufficiently small constant $c > 0$) and $\lambda \ge 3\eps$ must satisfy:
\[
n = \Omega\left(\frac{\sigma^2}{\eps^2} \cdot \log\left(\frac{\sigma}{\eps}\right) \cdot \log\left(\frac{1}{ \delta} \right) \right).
\]

\begin{proof}[Proof of the base complexity for $k=2$]
We will employ a result from \cite[Theorem 2.2(iii)]{tsybakov2009introduction} that combines Le Cam's two-point method combined with the Bretagnolle–Huber inequality. We construct a null distribution $D_0$ with mean $0$ and an alternative distribution $\bar D$ with mean $3\eps$. Both belong to $\mathcal{D}(2,\lambda,\sigma)$ when $\sigma = \Omega(\eps)$ with a sufficiently large hidden constant. Distinguishing them with error $\delta$ requires the stated number of samples.

\textbf{Step 1 (Constructions of the Distributions):} Set
\[
M = \left\lfloor \frac{1}{2} \log_2\left(\frac{\sigma}{3\eps}\right) \right\rfloor,
\]
so that $2^M \le \sqrt{\sigma/(3\eps)}$ and $M = \Theta(\log(\sigma/\eps))$. Define the grid points $x_i = 2^i \cdot \sigma$ for $i=1,\dots,M$.

\emph{Null distribution $D_0$:} 
Place symmetric point masses at $\pm x_i$ with probabilities
\[
q_i = \frac{1}{2M \cdot 2^{2i}},\qquad i=1,\dots,M.
\]
The remaining mass
\[
    1-\sum_{i=1}^M 2q_i 
    = 1 - \frac{1}{M} \left( \sum_{i=1}^M 4^{-i} \right)
    \ge 1 - \frac{1}{3M}
    > \frac{1}{2}
\]
is placed at $0$. By symmetry, we have $\EE_{D_0}[X]=0$, and
\[
\operatorname{Var}_{D_0}(X)=\sum_{i=1}^M 2q_i x_i^2
= \sum_{i=1}^M 2\left(\frac{1}{2M2^{2i}}\right)2^{2i}\sigma^2
= \sum_{i=1}^M \frac{\sigma^2}{M} = \sigma^2,
\]
which implies $D_0\in\mathcal{D}(2,\lambda,\sigma)$.

\emph{Alternative distribution $\bar D$:} 
For each $j=1,\dots,M$, construct $D_j$ by taking $D_0$ and shifting mass $p_j$ from $-x_j$ to $+x_j$, where
\[
p_j = \frac{3\eps}{2^{j+1} \cdot \sigma}.
\]
For this to be valid, we require $p_j\le q_j$, which follows from the ratio
\[
\frac{p_j}{q_j} = \frac{3\eps}{2^{j+1} \cdot \sigma} \cdot 2M \cdot 2^{2j}
= \frac{3M \cdot \eps \cdot 2^j}{\sigma} 
\le \frac{3M \cdot \eps \cdot 2^M}{\sigma} 
\le \frac{3M \cdot \eps}{\sigma} \cdot  \sqrt{\frac{\sigma}{3\eps}}
= \left\lfloor \frac{1}{2} \log_2\left(\frac{\sigma}{3\eps}\right) \right\rfloor \cdot \sqrt{\frac{3\eps}{\sigma}}
\]
and the fact that $\eps \le c\sigma$ for a sufficiently small absolute constant $c$ (e.g., $c \le 0.01$), noting also that $\lim_{z \to 0} \log(1/z)\sqrt{z} = 0$.  We then define the uniform mixture $\bar D = \frac{1}{M}\sum_{j=1}^M D_j$. For each constituent distribution $D_j$, the mean is
\[
\EE_{D_j}[X] 
= 2p_j \cdot x_j 
= \frac{2 \cdot 3\eps \cdot 2^j \cdot \sigma}{2^{j+1} \cdot \sigma}  = 3\eps.
\]
Because the mean of $\bar D$ is an average of the constituent means, it follows that $\EE_{\bar D}[X] = 3\eps$. Furthermore, since mass was simply shifted from $-x_j$ to $x_j$, the second moment is preserved, ensuring
\[
\mathrm{Var}_{\bar D}(X) \le \EE_{\bar D}[X^2] 
= \EE_{D_0}[X^2] = \mathrm{Var}_{D_0}(X) = \sigma^2.
\]
Thus, we have $\bar D \in \mathcal{D}(2,\lambda,\sigma)$; note that the assumption $\lambda \ge 3\eps$ in Theorem \ref{thm: adaptive lower bound} implies that the mean $\EE_{\bar D}[X] = 3\eps$ indeed lies in $[-\lambda,\lambda]$.

\textbf{Step 2 (Reduction to Binary Hypothesis Testing):}
An $(\eps,\delta)$-PAC estimator must output an estimate $\hat\mu\in[-\eps,\eps]$ with probability $\ge1-\delta$ under $D_0$, and $\hat\mu\in[2\eps,4\eps]$ under $\bar D$ with probability $\ge1-\delta$. Since these two target intervals are strictly disjoint, any valid $(\eps,\delta)$-PAC estimator can be used to form a binary decision rule that distinguishes between $D_0$ and $\bar D$ with an error probability of at most $\delta$.

\textbf{Step 3 (Per‑query KL divergence bound):}
Fix an arbitrary 1‑bit query and let $S\subset\mathbb{R}$ be the corresponding measurable set. Define $P_0(S)=\Pr_{D_0}(X\in S)$ and $\bar P(S)=\Pr_{\bar D}(X\in S)$. 

We now upper bound the KL divergence $D_{\mathrm{KL}}\left(\bar P(S) \,\middle\|\, P_0(S) \right)$. Because the KL divergence between two Bernoulli distributions is invariant to swapping the labels of the outcomes (i.e., replacing $S$ with $S^c$), we have:
\[
D_{\mathrm{KL}}\left(\bar P(S^c) \,\middle\|\, P_0(S^c) \right) = 
D_{\mathrm{KL}}\left(\bar P(S) \,\middle\|\, P_0(S) \right).
\]
Consequently, we may assume without loss of generality that $P_0(S)\le 1/2$. Then, because $D_0$ has strictly more than $1/2$ of its mass at $0$, the set $S$ cannot contain $0$; thus $S$ can only capture mass from the grid points $\{\pm x_i\}$.

Consider $j^* = \min\{j\ge 1 : S\cap\{x_j,-x_j\}\neq\emptyset\}$. If no such index exists, then $P_0(S)=\bar P(S)=0$ and the KL divergence is exactly $0$. Otherwise,
\begin{equation}\label{eq: P_0(S) bound}
P_0(S) \ge q_{j^*} = \frac{1}{2M \cdot 2^{2j^*}}.
\end{equation}
The difference in probabilities is bounded by the total shifted mass from indices $j^*$ onwards:
\begin{equation}\label{eq: |barP-P_0| bound}
\left|\bar P(S)-P_0(S) \right|
= \left|\frac{1}{M}\sum_{j=j^*}^M \bigl(P_{D_j}(S)-P_0(S)\bigr)\right|
\le \frac{1}{M}\sum_{j=j^*}^M p_j
= \frac{1}{M}\sum_{j=j^*}^M \frac{3\eps}{2^{j+1} \cdot \sigma}
\le \frac{3\eps}{M\sigma}\cdot\frac{1}{2^{j^*}}.
\end{equation}
Using~\eqref{eq: P_0(S) bound}--\eqref{eq: |barP-P_0| bound}, alongside the standard inequality $D_{\mathrm{KL}}( \mathrm{Bern}(p) \,\|\, \mathrm{Bern}(q) ) \le \frac{(p-q)^2}{q(1-q)}$ and the condition $P_0(S) \le 1/2$, we obtain:
\[
D_{\mathrm{KL}}(\bar P(S) \,\|\, P_0(S))
\le \frac{(\bar P(S)-P_0(S))^2}{P_0(S) \cdot (1-P_0(S))}
\le \frac{2(\bar P(S)-P_0(S))^2}{P_0(S)}
\le 2\cdot \frac{\left(\frac{3\eps}{M\sigma2^{j^*}}\right)^2}{\frac{1}{2M2^{2j^*}}}
= \frac{36\eps^2}{M\sigma^2}.
\]

\textbf{Step 4 (Adaptive Protocol and Chain Rule):}
While the estimator's sequential querying strategy may be randomized, Yao's minimax principle states that the worst-case error probability of any randomized algorithm over $\{D_0, \bar{D}\}$ is lower-bounded by the average error probability of the optimal deterministic algorithm under a uniform prior over $\{D_0, \bar{D}\}$.  Therefore, to establish our lower bound, we may assume without loss of generality that the algorithm is deterministic (in the choice of what query to make at each time step given the information received so far, 
and in the procedure for forming the final estimate).

Under a deterministic algorithm, each measurable query set $S_t$ is a fixed function of the past 1-bit responses $Y^{t-1} = (Y_1, \dots, Y_{t-1})$. Denote by $P_{0, n}$ and $\bar{P}_{n}$ the joint distributions of the $n$-length response transcript under $D_0$ and $\bar D$, respectively. By the chain rule for KL divergence (see \cite[Theorem 2.16(c)]{Polyanskiy_Wu_2025}), we have:
\[
D_{\mathrm{KL}}\left(\bar{P}_{n}  \,\middle\|\,  P_{0, n} \right)
= \sum_{t=1}^n \EE_{\bar{P}_{Y^{t-1}}}\Bigl[ D_{\mathrm{KL}}\bigl( \bar{P}_{Y_t|Y^{t-1}} \,\big\|\, P_{0, Y_t|Y^{t-1}} \bigr) \Bigr].
\]
Conditioned on a specific realization of the past responses $Y^{t-1}$, the query $S_t$ is fixed. Thus, the conditional distributions $\bar{P}_{Y_t|Y^{t-1}}$ and $P_{0, Y_t|Y^{t-1}}$ are Bernoulli distributions induced by evaluating the static set $S_t$ under $\bar{D}$ and $D_0$. Applying the universal pointwise bound from Step~3 to each conditional term yields the total transcript bound:
\begin{equation}\label{eq: N-step KL divergence bound}
D_{\mathrm{KL}}\left(\bar{P}_n \,\middle\|\, P_{0, n} \right) \le n \cdot \frac{36\eps^2}{M\sigma^2}.
\end{equation}

\textbf{Step 5 (Lower bound via Bretagnolle–Huber):}
By applying the Bretagnolle–Huber inequality to Le Cam's method, the result in \cite[Theorem 2.2(iii)]{tsybakov2009introduction} states that the average error probability $\delta$ of distinguishing the two hypotheses under a uniform prior is lower bounded as follows:
\[
\delta \ge \frac{1}{4} \exp\left(- D_{\mathrm{KL}}(\bar{P}_n \,\|\, P_{0, n}) \right)
\implies
D_{\mathrm{KL}}(\bar{P}_n \,\|\, P_{0, n}) \ge \log\left(\frac{1}{4 \delta} \right).
\]
Applying this to~\eqref{eq: N-step KL divergence bound} and rearranging, we obtain the required sample complexity:
\[
n \ge \frac{M\sigma^2}{36\eps^2} \cdot \log\left(\frac{1}{4 \delta} \right)
= \Omega\left(\frac{\sigma^2}{\eps^2} \cdot \log \left(\frac{\sigma}{\eps} \right) \cdot \log\left(\frac{1}{\delta} \right) \right),
\]
which completes the proof.
\end{proof}

\subsubsection{\underline{Part 2: Localization Cost}}
It remains to establish the additive $n = \Omega\left(\log \frac{\lambda}{\sigma}\right)$ localization cost for all tail regimes $k > 1$.

We create $N = \Theta(\lambda/\sigma)$ instances of ``hard-to-distinguish'' distribution pairs, which we will reuse in the proof of Theorem~\ref{thm: non-adaptive lower bound} in Appendix~\ref{appendix: Lower Bound for Non-adaptive Queries}.
Divide $[-\lambda, \lambda]$ into a grid of
$N  = \lambda / \sigma - 1 $
``center-points'' spaced $2\sigma$
apart,\footnote{For convenience, we assume
that $\lambda$ is an integer multiple of
$2\sigma$.  This is justified by a simple
rounding argument and the fact that when
$\lambda = \Theta(\sigma)$ the
$\Omega\big(\log\frac{\lambda}{\sigma}
\big)$ lower bound is trivial.  } i.e., the
center-points are
\begin{equation}
\label{eq: c_j spacing}
c_j = - \lambda + 2j \sigma \quad
\text{for each }j = 1, 2 \dots, N.
\end{equation}
For each instance $j$, we define two
probability distributions
$D_{j, -}$ and $D_{j, +}$, each with a
two-point support set $\{c_j-\sigma/2,
c_j+\sigma/2\}$, as follows:
\begin{equation}
\label{eq: D_j+ and D_j-}
\begin{aligned}
  D_{j,-} \colon& \Pr\left(X = c_j +
  \frac{\sigma}{2} \right) = \frac{1}{2}
  - \frac{\eps}{\sigma} =
  1 - \Pr\left(X = c_j - \frac{\sigma}{2} \right)
  \implies
  \EE[X ] = c_j - \eps \\
  D_{j,+} \colon & \Pr\left(X = c_j +
  \frac{\sigma}{2} \right) = \frac{1}{2}
  + \frac{\eps}{\sigma} =
  1 - \Pr\left(X = c_j - \frac{\sigma}{2} \right)
  \implies
  \EE[X ] = c_j + \eps.
\end{aligned}
\end{equation}
By construction, these ``hard'' distributions satisfy the structural properties required to be members of our target distribution families:
\begin{itemize}
\item \textbf{Bounded Mean:} By the assumption $\eps < \sigma/2$, the mean of each of these $2N$ distributions is contained within the search range $[-\lambda, \lambda]$.

\item \textbf{Bounded Support and Sub-Gaussianity:} The support of each distribution is bounded to an interval of exactly length $\sigma$ (the distance between $c_j - \sigma/2$ and $c_j + \sigma/2$). By Hoeffding's lemma, any random variable bounded in an interval of length $\sigma$ is sub-Gaussian with a variance proxy of at most $\sigma^2/4 \le \sigma^2$.

\item \textbf{Universal Moment Bounds:} For any of these distributions, the maximum deviation of a sample from its true mean $\mu$ is $|X - \mu| \le |(c_j \pm \sigma/2) - (c_j \mp \eps) | \le \sigma/2 + \eps$. Since $\eps < \sigma/2$, we are guaranteed that $|X - \mu| < \sigma$. Consequently, the $k$-th central moment satisfies $\EE[|X - \mu|^k] \le \sigma^k$ for all $k > 1$. 
\end{itemize}
Thus, this specific hard subset of discrete, bounded distributions belongs to the family $\mathcal{D}(k, \lambda, \sigma)$ across all tail regimes studied in this paper (i.e, all $k > 1$), ensuring our lower bound is applicable in all such regimes.

By the above construction, when the distributions are
restricted to only these $2N$
distributions, the task of being able to
form an $\eps$-good estimation of the
true mean of each unknown underlying
distribution is at least as hard as being
able to distinguish the distributions from
each other.\footnote{Strictly speaking this
is true when the algorithm is required to
attain accuracy \emph{strictly smaller}
than $\eps$, rather than {\em smaller
or equal}, but this distinction clearly has
no impact on the final result stated using
$O(\cdot)$ notation, and by ignoring it we
can avoid cumbersome notation.}  We proceed
to establish a lower bound for this goal of
\emph{identification}, also known as
\emph{multiple hypothesis testing}.

Let $\Theta$ be a uniform random variable
over the $2N$ distributions, whose entropy is given by
\begin{equation}
\label{eq: entropy of uniform dist}
H(\Theta) = \log(2N).
\end{equation}
Fix an adaptive mean estimator
that makes $n$ queries, and let $Y^n =
(Y_1, \dots, Y_n)$ be the resulting binary responses.
Using the chain rule for mutual information
(see e.g.~\cite[Theorem
3.7]{Polyanskiy_Wu_2025}) and the fact that
each query yields at most 1 bit of
information, we have
\begin{equation}
\label{eq: trivial mutual info bound}
I(\Theta; Y^n)
= \sum_{r=1}^n   I\big(\Theta; Y_r \mid Y^{r-1}\big)
\le \sum_{r=1}^n  H \big(Y_r \mid Y^{r-1} \big)
\le \sum_{r=1}^n  H (Y_r)
\le \sum_{r=1}^n  1
= n.
\end{equation}
Moreover, Fano’s inequality
(see~\cite[Theorem 3.12]{Polyanskiy_Wu_2025}) gives:
\begin{equation}
\label{eq: Fano inequality}
H(\Theta \mid Y^n) \leq H_2(\delta) +
\delta \log (2N-1)
\le 1 + \delta \log(2N),
\end{equation}
where $\delta$ is the error probability
and $H_2(p) = -p \log p - (1-p) \log(1-p)$
is the binary entropy function.
Using~\eqref{eq: entropy of uniform
dist}--\eqref{eq: Fano inequality} and the
definition of mutual information, we obtain
\begin{equation}
n
\ge I(\Theta; Y^n)
= H(\Theta) - H(\Theta \mid Y^n)
\ge  \log(2N) - 1 - \delta \log (2N)
= (1-\delta) \log(2N) - 1.
\end{equation}
Combining this with $N =
\Theta(\lambda/\sigma)$, we have
\[
n
= \Omega( \left(1-\delta \right) \log N )
= \Omega\left(\log \frac{\lambda}{\sigma}\right)
\]
as desired.

\subsection{Proof of Theorem~\ref{thm: non-adaptive lower bound} (Adaptivity Gap)}
\label{appendix: Lower Bound for Non-adaptive Queries}

We consider the same instance as that of
Section \ref{appendix: Lower Bound for Adaptive Queries}, and accordingly re-use
the notation therein.
Before proving Theorem~\ref{thm:
non-adaptive lower bound}, we first
introduce the idea of an interval query
being ``informative'' or ``uninformative''
for distinguishing between the
distributions $D_{j,-}$ and $D_{j,+}$.
\begin{definition}[Informative Interval
Queries] \label{def:informative}
For a fixed interval query $Q =
``\text{Is } X \in [\alpha_t, \beta_t]?"$, we say that
$Q$ is informative for the $j$-th pair of
distributions $(D_{j,-},D_{j,+})$ if its
binary feedback $B = \mathbf{1}\left\{X
\in [\alpha_t, \beta_t] \right\}$ satisfies
\[
  \Pr_{X \sim  D_{j,-}}(B = 1) \ne \Pr_{X
  \sim  D_{j,+}}(B =1).
\]
Otherwise, $Q$ is said to be uninformative.
\end{definition}
The following lemma shows that each
interval query can be simultaneously
informative for at most two different pairs.
\begin{lemma}
\label{lem: informative for at most 2 pairs}
An interval query $Q =  ``\text{Is } X
\in [\alpha_t, \beta_t]?"$can be simultaneously
informative for at most two different
$(D_{j,-},D_{j,+})$ pairs, i.e., at most
two different values of $j$.
\end{lemma}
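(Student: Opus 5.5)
The plan is to characterize exactly when an interval query is informative for pair $j$, and then use the geometry of the center-points to bound how many pairs can qualify. Both $D_{j,-}$ and $D_{j,+}$ are supported on the two points $u_j^- \coloneqq c_j - \sigma/2$ and $u_j^+ \coloneqq c_j + \sigma/2$, and they differ only in how mass is split between them. So the feedback probability $\Pr(B=1) = \Pr(X \in [\alpha_t,\beta_t])$ is determined by which of $u_j^\pm$ lie in $[\alpha_t,\beta_t]$.

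\textbf{Step 1: the case analysis.} If neither point lies in the interval, $\Pr(B=1)=0$ under both distributions. If both lie in it, $\Pr(B=1)=1$ under both. In either case the query is uninformative. If exactly one lies in it, then $\Pr(B=1)$ equals $\tfrac12 \pm \eps/\sigma$, and these values differ between $D_{j,-}$ and $D_{j,+}$ since $\eps>0$. Hence $Q$ is informative for pair $j$ if and only if $[\alpha_t,\beta_t]$ contains exactly one of $u_j^-, u_j^+$.

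\textbf{Step 2: localize the informative pairs to the endpoints.} The interval is convex and $u_j^- < u_j^+$, so containing exactly one of the two points forces an endpoint into the gap. Concretely, if it contains $u_j^+$ but not $u_j^-$, then $\alpha_t \in (u_j^-, u_j^+]$. If it contains $u_j^-$ but not $u_j^+$, then $\beta_t \in [u_j^-, u_j^+)$. In both cases some endpoint of the interval (allowing $\pm\infty$, which can lie in no gap) falls in the closed segment $[c_j-\sigma/2,\, c_j+\sigma/2]$.

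\textbf{Step 3: disjointness and counting.} By \eqref{eq: c_j spacing}, consecutive center-points are $2\sigma$ apart. The segments $[c_j-\sigma/2, c_j+\sigma/2]$ have length $\sigma$, so they are pairwise disjoint, with gaps of length $\sigma$ between them. Each of the two endpoints $\alpha_t, \beta_t$ therefore lies in at most one such segment, and each informative $j$ requires at least one endpoint in its segment. It follows that at most two values of $j$ can be informative.

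The only delicate point is the boundary handling in Step 2, where closed versus half-open containment matters. Since the segments are separated by positive gaps, these edge cases cannot produce double-counting. I expect no real obstacle beyond this.
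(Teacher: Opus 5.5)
Your proof is correct and follows essentially the same route as the paper. You first show that the query is informative for pair $j$ exactly when $[\alpha_t,\beta_t]$ contains one of the two support points $c_j\pm\sigma/2$. You then use the disjointness of the segments around the center-points and the fact that an interval has only two endpoints. Your half-open localization ($\alpha_t\in(c_j-\sigma/2,\,c_j+\sigma/2]$ or $\beta_t\in[c_j-\sigma/2,\,c_j+\sigma/2)$), together with closed, pairwise disjoint segments, is slightly more careful than the paper's open-gap ``if and only if'' formulation, which is imprecise when an endpoint coincides with a support point.
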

\begin{proof}[Proof of Lemma~\ref{lem:
informative for at most 2 pairs}]
The claim follows from the following two facts:
\begin{enumerate}
  \item For a fixed distribution pair
    (indexed by $j$), an interval query
    $Q = $ ``$\text{Is } X \in [\alpha_t, \beta_t]?$''
    is informative for distinguishing
    between $D_{j,-}$ and $D_{j,+}$ only
    if $[\alpha_t, \beta_t]$ contains exactly one of
    the two support points $\{c_j \pm
    \sigma/2 \}$, i.e., $\big| [\alpha_t, \beta_t]
    \cap \{c_j \pm \sigma/2 \} \big| = 1$.

  \item
    There are at most two indices $j$ for
    which $\big| [\alpha_t, \beta_t] \cap \{c_j \pm
    \sigma/2 \} \big| = 1$.
\end{enumerate}
Fact 1 can be verified by analyzing the
binary feedback $B = \mathbf{1}\left\{X
\in [\alpha_t, \beta_t] \right\}$ for all cases of
$[\alpha_t, \beta_t] \cap \{c_j \pm \sigma/2 \}$:
\begin{equation*}
  \big|
  \left[\alpha_t, \beta_t \right] \cap \{c_j \pm
  \sigma/2\} \big|  \in\{0,2\}
  \implies
  \Pr_{X \sim  D_{j,-}}(B = 1) = \Pr_{X
  \sim  D_{j,+}}(B =1 )
  \implies
  Q \text{ is uninformative},
\end{equation*}
and
\begin{equation}
  \label{eq: bernoulli p+ and p-}
  \big|
  \left[\alpha_t, \beta_t \right] \cap \{c_j \pm
  \sigma/2\} \big|  = 1
  \implies
  \bigg|
  \Pr_{X \sim  D_{j,-}}(B = 1) - \Pr_{X
  \sim  D_{j,+}}(B =1 )
  \bigg| = \frac{2 \eps}{\sigma}
  \implies
  Q \text{ is informative}.
\end{equation}
For Fact 2, we first observe
from~\eqref{eq: c_j spacing} that the
support points of all $2N$ distributions satisfy
\[
  c_1- \frac{\sigma}{2} < c_1 +
  \frac{\sigma}{2} < c_{2} - \frac{\sigma}{2} <
  \cdots <  c_{N} - \frac{\sigma}{2} <
  c_{N} + \frac{\sigma}{2},
\]
with each pair $j$ having a unique
disjoint interval $(c_j - \sigma/2, c_j + \sigma/2)$
between its support points. An interval
$[\alpha_t, \beta_t]$ satisfies $\big| [\alpha_t, \beta_t] \cap
\{c_j \pm \sigma/2 \} \big| = 1$ if and
only if exactly one endpoint of $[\alpha_t, \beta_t]$
lies in the interval $(c_j - \sigma/2,
c_j + \sigma/2)$. Since the gaps are
disjoint and $[\alpha_t, \beta_t]$ has only two
endpoints, it follows that at most two
indices $j$ satisfy $\big| [\alpha_t, \beta_t] \cap
\{c_j \pm \sigma/2 \} \big| = 1$.
\end{proof}

\begin{proof}[Proof of Theorem~\ref{thm:
non-adaptive lower bound}]

Consider an arbitrary algorithm that
makes $n$ non-adaptive interval queries.
Recall the set of $2N$ distributions $\{
D_{j,-}, D_{j,+} \}_{j=1}^{N} \subseteq
\mathcal{D(\lambda, \sigma)}$ constructed
in the proof of Theorem~\ref{thm:
adaptive lower bound}, where $N =
\lambda/ \sigma -1$.  We will again
establish a lower bound for this ``hard
subset'' of distributions, but with
different details to exploit the
assumption of non-adaptive interval queries.

Recall from Section~\ref{appendix: Lower
Bound for Adaptive Queries} that the
means of the $2N$ distributions are
pairwise separated by $2\eps$ or
more, and thus, attaining
$\eps$-accuracy implies being able to
identify the underlying distribution from
the hard subset.  We proceed to establish
a lower bound for this goal of
identification (multiple hypothesis testing).

Suppose that the true distribution is
drawn uniformly at random from the $2N$
distributions in the hard subset.  As we argued in Section~\ref{appendix: Lower Bound for Adaptive Queries}, by
Yao’s minimax principle, the worst-case
error probability is lower bounded by the
average-case error probability of the
best \emph{deterministic} strategy, so we
may assume that the algorithm is
deterministic.

Letting $(\hat{j},\hat{s})$ be the
estimated index (in $\{1,\dotsc,N\}$) and
sign (in $\{1,-1\}$), the average-case
error probability is given by
\begin{align}
  \Pr({\rm error})
  &= \frac{1}{2N} \sum_{j=1}^N \sum_{s
  \in \{+1,-1\}} \Pr\nolimits_{j,s}(
  (\hat{j},\hat{s}) \ne (j,s)  ) \\
  &\ge \frac{1}{N}\sum_{j=1}^N \bigg(
    \underbrace{\frac{1}{2} \Pr\nolimits_{j,+}\big(
      \hat{s} \ne 1\big) +
      \frac{1}{2}\Pr\nolimits_{j,-}\big( \hat{s} \ne
    -1 \big)}_{=: \Pr\nolimits_j({\rm error})}
  \bigg), \label{eq:def_Pr_j}
\end{align}
where $\Pr_{j,s}$ denotes probability
when the underlying distribution is $D_{j,s}$.

For each $j = 1, \dots, N$, we define
$n_j$ to be the algorithm's total number
of interval queries that are informative
(in the sense of Definition
\ref{def:informative}) for distinguishing
between $D_{j,-}$ and $D_{j,+}$.  Since
the algorithm is deterministic and the
$n$ queries are assumed to be
non-adaptive (i.e., they must all be
chosen in advance), it follows that the
values $\{n_j\}_{j=1}^N$ are also deterministic.

Recall from~\eqref{eq: bernoulli p+ and
p-} that each informative query provides
binary feedback that follows either
$\mathrm{Bern}(p_{+})$ or
$\mathrm{Bern}(p_{-})$, where $p_{+} =
1/2 + \eps/\sigma$ and $p_{-} =1/2  -
\eps/\sigma = 1 - p_{+}$.
Distinguishing between these two cases is
a \emph{binary hypothesis testing}
problem, and the associated error
probability $\Pr_j(\text{error})$ is
given by the $j$-th summand in~\eqref{eq:def_Pr_j}.

Using standard binary hypothesis testing
lower bounds~\cite[Theorem
11.9]{LeeCSCI1951}, we have\footnote{We
  have re-arranged their result to express
other quantities in term of the error probability.}
\begin{equation}
  \label{eq: n_j hypothesis testing lower
  bound reexpressed}
  \Pr\nolimits_j(\text{error}) >
  \exp\left( -c' \cdot n_j \cdot
  d_H^2(p_{+}, p_{-})  \right)
\end{equation}
for some constant $c'$, where
$d_H^2(\mathbf{p}, \mathbf{q}) =
\frac{1}{2}\sum_{i} \left(\sqrt{p_i} -
\sqrt{q_i} \right)^2$
is the squared Hellinger distance.
For $\mathrm{Bern}(p_{+})$ and
$\mathrm{Bern}(p_{-})$, we have the
following standard calculation:
\begin{equation}
  \label{eq: d_H calculation bound}
  d_H^2(p_{+}, p_{-}) =
  \left(\sqrt{p_+} - \sqrt{p_-} \right)^2 =
  \left( \frac{p_+  - p_-}{\sqrt{p_+} +
  \sqrt{p_-} }\right)^2 =
  \frac{ |p_+  - p_-|^2}{  \left(  1 +
  2\sqrt{p_+  p_-} \right)^2 } =
  \Theta\left( |p_{+} - p_{-}|^2\right) =
  \Theta\left(\frac{\eps^2}{\sigma^2} \right),
\end{equation}
where the equalities follow from $p_+ + p_- = 1$ and $p_+  p_-
\in [0, 1/4]$.
Combining~\eqref{eq: n_j hypothesis
testing lower bound reexpressed}
and~\eqref{eq: d_H calculation bound}, we obtain
\begin{equation}
  \Pr\nolimits_j(\text{error})
  >
  \exp\left( -c'' \cdot \frac{n_j \,
  \eps^2}{\sigma^2}  \right)
\end{equation}
for some constant $c'' > 0$.
Applying Jensen's inequality (since
$\exp$ is convex) and using
$\sum_{j=1}^N n_j \le 2n$ (see
  Lemma~\ref{lem: informative for at most 2
pairs}), it follows that
\[
  \frac{1}{N} \sum_{j=1}^N \Pr\nolimits_j(\text{error}) >
  \frac{1}{N} \sum_{j=1}^N \exp\left(
  -c'' \cdot \frac{n_j \, \eps^2}{\sigma^2}  \right)
  \ge
  \exp\left( -c'' \cdot
    \frac{\eps^2}{\sigma^2} \cdot
  \frac{1}{N} \sum_{j=1}^N n_j \right)
  \ge
  \exp\left( -c'' \cdot
    \frac{\eps^2}{\sigma^2} \cdot
  \frac{2n}{N}  \right).
\]
To complete the proof, suppose that
\[
  n
  <
  \frac{1}{4c''} \cdot \frac{\lambda
  \sigma}{ \eps^2} \log\left(\frac{1}{\delta}\right).
\]
Using 
\[
    1 \le \frac{\lambda}{\sigma} = N + 1 \le 2N
    \iff
    \frac{1}{N} \le \frac{2 \sigma}{\lambda}
\]
we have
\[
    \frac{\eps^2}{\sigma^2} \cdot
  \frac{2c''n}{N} 
  \le 
  \frac{\eps^2}{\sigma} \cdot
  \frac{4c''n}{\lambda}
  <  \log\left(\frac{1}{\delta}\right),
\]
and it follows that the average error probability is lower bounded by
\[
  \frac{1}{N} \sum_{j=1}^N \Pr\nolimits_j(\text{error}) >
  \exp\left( -c'' \cdot
    \frac{\eps^2}{\sigma^2} \cdot
  \frac{2n}{N}  \right)
  \ge
  \exp \left(- \log\left(\frac{1}{\delta}
  \right) \right) = \delta.
\]
Therefore, to attain an error probability
no higher than $\delta$, we must have
\[
  n = \Omega\left( \frac{\lambda \sigma}{
  \eps^2} \log\left(\frac{1}{\delta}\right)  \right)
\]
as desired.
\end{proof}

\section{Unknown Parameters}

\subsection{Proof of Theorem~\ref{thm: unknown eps} and Corollary~\ref{cor: unknown eps external} (Unknown Target Accuracy)}
\label{appendix: unknown target accuracy}

We first prove Theorem~\ref{thm: unknown eps}, the anytime-valid guarantee.
\begin{proof}[Proof of Theorem~\ref{thm: unknown eps}] 
    We first establish the high-probability correctness statement. Let
$\mathcal E_{\rm loc}$ be the event that the localization step succeeds.
By the same argument in Step 1 of Appendix~\ref{appendix: proof of main result}, we have
\[
    \Pr(\mathcal E_{\rm loc}^c)\le \delta_{\rm loc}.
\]
Conditioned on $\mathcal E_{\rm loc}$, a similar argument in Steps 2--6 of Appendix~\ref{appendix: proof of main result} gives the refinement guarantee
\[
    \Pr\left(
        |\hat\mu_\tau-\mu|>\eps_\tau
        \,\middle|\,
        \mathcal E_{\rm loc}
    \right)
    \le
    \delta_\tau
\]
for each round $\tau$. Thus, by the union bound and the summability of the confidence schedule, we have
\[
    \Pr\left(
        \mathcal E_{\rm loc}^c
        \, \cup \,
        \bigcup_{\tau\ge1}
        \left( \mathcal E_{\rm loc} \cap \{  |\hat\mu_\tau-\mu|>\eps_\tau\} \right)
    \right)
    \le
    \delta_{\rm loc}+\sum_{\tau\ge1}\delta_\tau
    \le
    \delta_{\rm loc}+\delta_{\rm ref}
    =
    \delta.
\]
On the complement of this event, every completed refinement estimate is
accurate. In particular, the final selected estimate satisfies
\[
    |\hat\mu_T-\mu|\le \eps_T
\]
for any stopping round $T$, including a transcript-dependent one.

It remains to compare $\eps_T$ (the accuracy of the last completed round) to $\eps^*$ (the oracle accuracy defined in~\eqref{eq: oracle_eps}).
Decompose the refinement sample complexity in~\eqref{eq: n_ref} into an accuracy-dependent scaling function $g_k(\sigma, \alpha)$ and a confidence term:
\begin{equation}\label{eq: n_ref_decompose}
    n_{\rm ref}(\alpha, \eta, k, \sigma) 
    = \Theta_k \left( g_k(\sigma, \alpha) \cdot \log\left(\frac{1}{\eta} \right) \right),
\end{equation}
where $g_k(\sigma, \alpha)$ is defined as:
\[
g_k(\sigma, \alpha) =
    \begin{cases}
       \left(  \dfrac{\sigma}{\alpha} \right)^2  & \text{if } k > 2
      \\ \\
       \left(  \dfrac{\sigma}{\alpha} \right)^2 \cdot \log \left( \dfrac{\sigma}{\alpha}\right)  & \text{if } k = 2
      \\ \\
       \left( \dfrac{\sigma}{\alpha}  \right)^{\frac{k}{k-1}}  & \text{if } k \in (1, 2).
    \end{cases}
\]
Let $p = 2$ for $k \ge 2$, and $p = \frac{k}{k-1} > 2$ for $k \in (1,2)$. Because the logarithmic penalty in the $k=2$ regime strictly increases as $\alpha$ shrinks, we can establish a geometric lower bound on the growth of $g_k(\sigma, \cdot)$. Specifically, we have
\begin{equation}\label{eq: g ratio bound}
    \frac{g_k(\sigma, \alpha_1)}{g_k(\sigma, \alpha_2)} = \Omega \left(  \left(\frac{\alpha_2}{\alpha_1} \right)^p \right)
    \quad \text{for any } 0 < \alpha_1 \le \alpha_2 \le \sigma.
\end{equation}
Fix a round $\tau$ and consider the prior rounds $s \le \tau$. Applying~\eqref{eq: g ratio bound} to $\alpha_1 = \eps_\tau$ and $\alpha_2 = \eps_s = 2^{\tau-s} \cdot \eps_\tau$ gives
\begin{equation}
\label{eq: g super geometric growth}
    g_k(\sigma, \eps_s) = O \left( 2^{-p(\tau-s)} \cdot g_k(\sigma, \eps_\tau) \right).
\end{equation}
Combining this geometric decay with the trivial bound $\log(1/\delta_s) \le \log(1/\delta_\tau)$ for all $s \le \tau$, the cumulative sample complexity is dominated by the final round:
\begin{align}
    \sum_{s=1}^{\tau} n_{\rm ref}(\eps_s, \delta_s, k, \sigma)
    &= O_k \left( \sum_{s=1}^{\tau} g_k(\sigma, \eps_s) \cdot \log(1/\delta_s) \right) 
    \notag \\
    &= O_k \left( g_k(\sigma, \eps_\tau) \cdot \log(1/\delta_\tau)  \cdot \sum_{s=1}^{\tau} 2^{-p(\tau-s)} \right)
    \notag \\
    &= O_k\left( g_k(\sigma, \eps_\tau) \cdot \log(1/\delta_\tau) \right).
    \label{eq: last term bounded}
\end{align}
We now relate the anytime performance to the oracle accuracy $\eps^*$, which satisfies
\begin{equation}
\label{eq: oracle implicit requirement}
    n_{\rm ref}(\eps^*, \delta_{\rm ref}, k, \sigma)
    = \Theta_k \Big( g_k(\sigma, \eps^*) \cdot \log(1/\delta_{\rm ref}) \Big) .
\end{equation}
Since round $T+1$ is not completed (see~\eqref{eq: last round T}), we have
\begin{equation}\label{eq: anytime budget upper}
    n_{\rm ref}(\eps^*, \delta_{\rm ref}, k, \sigma) < \sum_{s=1}^{T+1} n_{\rm ref}(\eps_s, \delta_s, k, \sigma).
\end{equation}
Substituting~\eqref{eq: oracle implicit requirement} into the left side of~\eqref{eq: anytime budget upper} and bounding the right side using~\eqref{eq: last term bounded} with $\tau = T+1$ yields
\[
    g_k(\sigma, \eps^*) \cdot \log(1/\delta_{\rm ref}) = 
    O_k \left( g_k(\sigma, \eps_{T+1}) \cdot \log(1/\delta_{T+1}) \right).
\]
Rearranging the terms to isolate the ratio of $g_k(\cdot)$ and substituting  $\delta_{T+1} = 6\delta_{\rm ref}/ (\pi^2 (T+1)^2)$ and $\delta_{\rm ref} = \delta/2$ gives
\begin{equation}
\label{eq: scaling bound}
    \frac{g_k(\sigma, \eps^*)}{g_k(\sigma, \eps_{T+1})}  
    = O_k \left(\frac{\log(1/\delta_{T+1})}{\log(1/\delta_{\rm ref})} \right)
    = O_k\left( 1 + \frac{\log (T+1)}{\log(1/\delta)} \right).
\end{equation}
To map this scaling bound back to the target accuracies, we consider two cases based on the relative size of $\eps^*$ and $\eps_{T+1}$:
\begin{itemize}[leftmargin=*]
    \item \textbf{Case 1} ($\eps^* \ge \eps_{T+1}$): 
    Because the target accuracy is halved at each round, we have $\eps_T = 2 \eps_{T+1} = O(\eps^*)$ trivially.
    
    \item \textbf{Case 2} ($\eps^* < \eps_{T+1}$): Applying~\eqref{eq: g ratio bound} to the left side of~\eqref{eq: scaling bound} with $\alpha_1 = \eps^*$ and $\alpha_2 = \eps_{T+1}$ yields
    \[
    \left( \frac{\eps_{T+1}}{\eps^*} \right)^p 
        = O \left( \frac{g_k(\sigma, \eps^*)}{g_k(\sigma, \eps_{T+1})} \right) = O_k \left( 1 + \frac{\log(T+1)}{\log(1/\delta)} \right),
    \]
   which implies
    \begin{equation}
    \label{eq:anytime_bound_with_logT}
        \eps_T = 2\eps_{T+1} 
        = O_k \left( \eps^* \left( 1 + \frac{\log(T+1)}{\log(1/\delta)} \right)^{\frac{1}{p}} \right).
    \end{equation}
    It remains to bound $T$. We first establish that $\eps_T \ge \eps^*$, which captures the intuition that ``anytime estimator cannot beat the oracle efficiency given the same budget''.
    Formally, this follows from the fact that round $T$ is completed and $n_{\rm ref}(\alpha, \eta, k, \sigma)$ is non-increasing in each of $\eta$ and $\alpha$:
    \[
    n_{\rm ref}(\eps^*, \delta_{\rm ref}, k, \sigma)
    \ge n_{\rm ref}(\eps_T, \delta_T, k, \sigma) 
    \ge n_{\rm ref}(\eps_T, \delta_{\rm ref}, k, \sigma)
    \implies 
    \eps_T \ge \eps^*
    \]
    By the confidence scheduling~\eqref{eq: parameter in round tau}, we have $2^T \le \sigma/\eps^*$ and thus $\log(T+1) = O(\log\log(e\sigma/\eps^*))$. Substituting this into~\eqref{eq:anytime_bound_with_logT} gives us the desired bound:
    \[
        \eps_T = O_k\left( \eps^* \left( 1 + \frac{\log\log (\sigma/\eps^*)}{\log(1/\delta)} \right)^{\frac{1}{p}} \right). 
    \]
\end{itemize}
    This completes the proof of Theorem~\ref{thm: unknown eps}.
\end{proof}

We now prove Corollary~\ref{cor: unknown eps external}, where the realized budget $n_{\rm true}$ is externally determined. The details are similar to those in proof of Theorem~\ref{thm: unknown eps} above, and we will omit details for brevity.
\begin{proof}[Proof of Corollary~\ref{cor: unknown eps external}]
    We first compare $\eps_T$ to $\eps^*$.
    Recall that we run the same halving schedule $\eps_\tau=\sigma/2^\tau$, but set $\delta_\tau=\delta_{\rm ref}$ for every $\tau\ge1$. In particular, $\delta_{T+1} = \delta_{\rm ref}$. A similar geometric domination argument from~\eqref{eq: n_ref_decompose}--\eqref{eq: scaling bound} yields
    \begin{equation*}
        \frac{g_k(\sigma, \eps^*)}{g_k(\sigma, \eps_{T+1})}  
        = O_k \left(\frac{\log(1/\delta_{T+1})}{\log(1/\delta_{\rm ref})} \right)
        = O_k(1).
    \end{equation*}
    A similar two-case analysis as in the proof of Theorem~\ref{thm: unknown eps} gives $\eps_T = O_k(\eps^*)$ for both cases.
    
    It remains to prove the high probability guarantee. Let $\mathcal E_{\rm loc}$ be the localization success event. As before, we have $\Pr(\mathcal E_{\rm loc}^c)\le\delta_{\rm loc}$.
   Condition on $\mathcal E_{\rm loc}$, the refinement guarantee gives 
   $\Pr\left(
        |\hat\mu_t-\mu|>\eps_t
        \,\middle|\,
        \mathcal E_{\rm loc}
    \right)
    \le
    \delta_{\rm ref}$
    for every fixed $t$.
    Because $n_{\rm true}$ is externally determined, the final round $T$ is independent of the samples, query responses, and internal randomness of the estimator. Hence conditioning on $T=t$ preserves the fixed-round guarantee:
    \[
    \Pr\left(|\hat\mu_T - \mu|>\eps_T\,\middle|\, \mathcal E_{\rm loc} \right)
    =
    \sum_{t\ge1}
    \Pr(T=t\mid \mathcal E_{\rm loc}) \cdot
    \Pr\left(|\hat\mu_t-\mu|>\eps_t
        \,\middle|\, T=t, \,  \mathcal E_{\rm loc}
    \right)
    \le
    \delta_{\rm ref}.
    \]
    Combining the above and applying the law of total probability gives
    \[
    \Pr\left(|\hat\mu_T-\mu|>\eps_T\right)
    \le
    \Pr(\mathcal E_{\rm loc}^c)
    +
    \Pr\left(
        |\hat\mu_T-\mu|>\eps_T
        \,\middle|\,
        \mathcal E_{\rm loc}
    \right)
    \le
    \delta_{\rm loc}+\delta_{\rm ref}
    =
    \delta.
    \]
    as desired.
\end{proof}

\subsection{Proof of Theorem~\ref{thm: partially unknown scale} (Adapting to Unknown Scale)}
\label{appendix: partially unknown scale}

Recall that in each round $i \in \{0, 1, \dots, T\}$, the algorithm invokes the main estimator with target accuracy $\eps_i = r \sigma_i / 6$, guessed scale parameter $\sigma_i$, and failure probability $\delta_i = \delta / (T+1)$.
We first bound the worst-case total sample complexity $n$, which occurs if the estimator does not halt early and run all $T+1$ loops.
Applying the upper bound from Theorem~\ref{thm: main} for the sample complexity $n\left(\eps_i, \delta_i, \lambda, \sigma_i \right)$ of each round $i$, and summing over all rounds yields
\begin{equation}
\label{eq: sum of rounds}
    n 
= \sum_{i=0}^T O_k\left( N_k\left(\frac{\eps_i}{\sigma_i} \right) \log\left(\frac{1}{\delta_i}\right) + \log\left(\frac{\lambda}{\sigma_i}\right) \right) = O\left( (T+1) \cdot N_k(r) \log\left(\frac{T+1}{\delta}\right) + \sum_{i=0}^T \log\left(\frac{\lambda}{\sigma_i}\right) \right),
\end{equation}
where 
\begin{equation*}
      N_k(r) = 
      \begin{cases} 
      \dfrac{1}{r^2} & \text{if } k > 2 
      \\ \\
      \dfrac{1}{r^2} \log\left(\dfrac{1}{r}\right) & \text{if } k = 2 
      \\ \\
      \left(\dfrac{1}{r}\right)^{\frac{k}{k-1}} & \text{if } k \in (1, 2)
      \end{cases}
  \end{equation*}
is the asymptotic scaling defined in Theorem~\ref{thm: partially unknown scale}) which satisfies $N_k(r/6) = \Theta_k(N_k(r))$. 
Recalling that $\sigma_i$ is a geometric sequence with $\sigma_0 = \sigma_{\max}$ and $\sigma_T  =  \Theta(\sigma_{\min})$ (see~\eqref{eq: T = log(sigma_max/sigma_min)} and~\eqref{eq: eps_i = r sigma_i/5}), we can evaluate the summation over the localization terms by
\[
\sum_{i=0}^T \log_2 \left(\frac{\lambda}{\sigma_i}\right) 
= \log_2 \left( \prod_{i=0}^{T} \frac{\lambda}{\sigma_i} \right)
= \log_2 \left(  \frac{\lambda^{T+1}}{  \left( \sqrt{\sigma_{0} \cdot \sigma_{T}} \right)^{T+1}  } \right)
= \Theta \left( T \log \frac{\lambda}{\sqrt{\sigma_{0} \cdot \sigma_{T}}} \right)
= \Theta \left( T \log \frac{\lambda}{\sqrt{\sigma_{\min} \cdot \sigma_{\max}}} \right),
\]
where the second equality follows from the fact that the product of a finite geometric sequence is its geometric mean raised to the number of terms (i.e., $\prod_{i=0}^T \sigma_i = (\sqrt{\sigma_0 \cdot \sigma_T})^{T+1}$).
Combining the above two findings and substituting $T = \left\lceil \log_2 \left(\sigma_{\text{max}} / \sigma_{\text{min}} \right) \right\rceil$ gives the desired sample complexity:
\begin{equation*}
      n = O\left( \log\left(\frac{\sigma_{\max}}{\sigma_{\min}}\right) \cdot \left( N_k(r) \cdot \log\left(\frac{\log(\sigma_{\max}/\sigma_{\min})}{\delta}\right) + \log\left(\frac{\lambda}{\sqrt{\sigma_{\min}\sigma_{\max}}}\right) \right) \right).
  \end{equation*}

We now show that selected output $\hat{\mu}^{(i^*)}$ is $(\eps, \delta)$-PAC for the relative target accuracy $\eps = r \sigma_{\mathrm{true}}$, i.e.,
\begin{equation}
\label{eq: eps-delta PAC with eps = r sigma}
    \Pr\left(  \left|\hat{\mu}^{(i^*)} - \mu \right| \le  r \sigma_{\mathrm{true}} \right) \ge 1- \delta.
\end{equation}
Let $j^*$ be the largest grid index (corresponding to the tightest valid scale) that still upper bounds the true scale, i.e.,
\begin{equation}
\label{eq: sigma_j* >= sigma_true}
    j^* = \max_{0 \le i \le T} \{ \sigma_i \ge \sigma_{\mathrm{true}} \}.
\end{equation}
Due to the geometric spacing $\sigma_i = \sigma_{\max} \cdot 2^{-i}$, we are guaranteed that the scale at $j^*$ tightly bounds the true scale:
\begin{equation}
\label{eq: sigma_j* < 2 sigma}
\sigma_{j^*} \le 2 \sigma_{\mathrm{true}} = \frac{2 \eps}{r}.
\end{equation}
For each round $i \le j^*$, the guessed parameter satisfies $\sigma_i \ge \sigma_{\mathrm{true}}$. Therefore, the distribution validly satisfies the assumed moment bound $\EE[|X-\mu|^k] \le \sigma_i^k$, ensuring that the subroutine's theoretical guarantees hold.  Let $\mathcal{E}_i = \{\mu \in I_i \}$ be the event that the true mean lies in the $i$-th confidence interval $I_i =  [ \hat{\mu}^{(i)} \pm \eps_i]$. By the subroutine's guarantee, the event  $\mathcal{E}_i$ occurs with probability at least $1 - \delta_i = 1 -\delta/(T+1)$. Applying the union bound over all rounds up to $j^*$, the ``good event'' $\mathcal{E} = \bigcap_{i \le j^*}
\mathcal{E}_i$ happens with probability at least 
\[
\Pr\left(  \mathcal{E} \right) =
\Pr\left(  \bigcap_{i\le j^\ast} \mathcal{E}_i \right) =
1 -  \Pr\left(  \bigcup_{i \le j^\ast} \neg
\mathcal{E}_i \right)
\ge 1 - \sum_{i \le j^\ast} \Pr\left( \neg
\mathcal{E}_i \right)
\ge 1- \sum_{i \le j^\ast} \delta_i
\ge 1- \sum_{i =0}^T \delta_i
= 1- \delta.
\]
We condition on the event $\mathcal{E}$ for the rest of the proof. 
Under event $\mathcal{E}$, we have $\mu \in I_i$ for all $i \le j^*$, and so all confidence intervals up to $j^*$ mutually intersect at the true mean $\mu$. Consequently, the algorithm's stopping condition ($I_i \cap I_l = \emptyset$ for some $l < i$) will not trigger at any step $i \le j^*$. Therefore, the last successful index $i^* = i-1$ must satisfy $i^* \ge j^*$, which implies
\begin{equation}
\label{eq: sigma_i <= sigma_j}
    \sigma_{i^*} \le \sigma_{j^*}
\end{equation}

To establish the PAC guarantee, we analyze the estimation error of $\hat{\mu}^{(i^*)}$. 
By the algorithm's acceptance criteria, because the interval $I_{i^*}$ was successfully accepted, it must intersect with all previously established intervals. In particular, because $j^* \le i^*$, there exists a common point $z \in I_{i^*} \cap I_{j^*}$.
By the definition of the intervals ((see~\eqref{eq: CI})), we have $|\hat{\mu}^{(i^*)} - z| \le \eps_{i^*}$ and $|\hat{\mu}^{(j^*)} - z| \le \eps_{j^*}$. 
Applying the triangle inequality yields
\[
|\hat{\mu}^{(i^*)} - \mu|
\le |\hat{\mu}^{(i^*)} - z| + |z - \hat{\mu}^{(j^*)}| + | \hat{\mu}^{(j^*)} - \mu| 
\le \eps_{i^*} + 2\eps_{j^*}.
\]
Using the target accuracy choice of $\eps_i = r \sigma_i/6$ and bounds~\eqref{eq: sigma_j* < 2 sigma}--\eqref{eq: sigma_i <= sigma_j}, we obtain the desired guarantee:
\begin{align*}
|\hat{\mu}^{(i^*)} - \mu|
\le \eps_{i^*} + 2\eps_{j^*} 
= \frac{r \sigma_{i^*}}{6}  + \frac{2r \sigma_{j^*}}{6} 
\le \frac{3r \sigma_{j^*}}{6} 
\le r \sigma_{\mathrm{true}} = \eps.
\end{align*}

\section{Details of Section~\ref{sec: two-stage} (Two-Stage Mean Estimator)}
\label{appendix: two-stage}

In this appendix, we provide the technical details for the non-adaptive
localization procedure described in Section~\ref{sec: two-stage}. The goal of
this procedure is to non-adaptively identify an interval $I$ of length
$O(\sigma)$ that contains the mean $\mu$ with high probability. The procedure
is fully non-adaptive and uses deterministic general 1-bit queries. At a high
level, it partitions the search range into $O(\lambda/\sigma)$ bins, assigns a
binary codeword of length
\[
    \ell = O\left(
        \log\frac{\lambda}{\sigma}
        +
        \log\frac{1}{\delta_{\rm loc}}
    \right)
\]
to each bin, and decodes by nearest neighbor in Hamming distance. The codeword
length $\ell$ is the sample complexity of the localization procedure.

We begin with a standard random-coding lemma. The proof follows from
Hoeffding's inequality and the probabilistic method.

\begin{lemma}[Balanced deterministic codebook]
\label{lem: balanced codebook}
For every integer $N\ge2$ and every integer $\ell \ge 10000\log N$, there exist
deterministic codewords $c_1,\dots,c_N\in\{0,1\}^{\ell}$ satisfying
\begin{equation}\label{eq:code_distance_bound}
    0.49 \ell \le d_H(c_i,c_j) \le 0.51 \ell
    \qquad \text{for all } i\neq j.
\end{equation}
\end{lemma}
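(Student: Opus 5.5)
We use the probabilistic method with independent uniformly random codewords. Draw $c_1,\dots,c_N$ independently and uniformly from $\{0,1\}^{\ell}$, i.e., all $N\ell$ bits are i.i.d.\ $\mathrm{Bern}(1/2)$. We will show that the random codebook satisfies \eqref{eq:code_distance_bound} with positive probability. Since the sample space is finite, some deterministic realization must then satisfy it.

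\textbf{Step 1 (single pair).} Fix $i\neq j$. Then $d_H(c_i,c_j)=\sum_{r=1}^{\ell}\mathbf 1\{c_i(r)\neq c_j(r)\}$. The summands are independent, because coordinate $r$ of the pair depends only on the bits $c_i(r),c_j(r)$, which are disjoint across $r$. Each summand is $\mathrm{Bern}(1/2)$, so $\EE[d_H(c_i,c_j)]=\ell/2$. By the two-sided Hoeffding inequality with deviation $0.01\ell$,
\[
\Pr\bigl(|d_H(c_i,c_j)-\ell/2|>0.01\ell\bigr)\le 2\exp\bigl(-2(0.01)^2\ell\bigr)=2\exp(-\ell/5000).
\]

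\textbf{Step 2 (union bound).} There are $\binom N2<N^2/2$ unordered pairs. The probability that some pair violates \eqref{eq:code_distance_bound} is therefore less than $N^2\exp(-\ell/5000)$. Using $\ell\ge 10000\log N$, this is at most $N^2\cdot N^{-2}=1$, and strictly less than $1$ because the pair count bound $\binom N2<N^2/2$ is strict. So with positive probability every pair lies in $[0.49\ell,0.51\ell]$, and a deterministic codebook with this property exists.

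\textbf{Main obstacle.} The only delicate point is the constant bookkeeping, i.e., checking that $10000$ suffices. Hoeffding gives the exponent $2\cdot 10^{-4}\ell\ge 2\log N$, and this exactly cancels the $N^2/2$ pairs, leaving a failure probability below $2\cdot\tfrac12=1$. The boundary case $N=2$ is covered by the same strict inequality. If the lemma is meant for a non-integer $10000\log N$, only $\ell\ge10000\log N$ is needed, and that holds as assumed.
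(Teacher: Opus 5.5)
Your proposal is correct and follows essentially the same route as the paper. Both draw uniformly random codewords, apply the two-sided Hoeffding bound $2\exp(-0.0002\,\ell)$ to each pair, and take a union bound over the $\binom{N}{2} < N^2/2$ pairs, so that $\ell \ge 10000\log N$ makes the total failure probability strictly less than $1$.
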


\begin{proof}
Draw the codewords $c_1,\dots,c_N$ independently and uniformly at random
from $\{0,1\}^{\ell}$. For any fixed pair $i\neq j$, the Hamming distance
$d_H(c_i,c_j)$ has distribution $\operatorname{Binomial}(\ell,1/2)$. By
Hoeffding's inequality,
\[
    \Pr\left(
        \left|
        \frac{1}{\ell}d_H(c_i,c_j)-\frac{1}{2}
        \right| > 0.01
    \right)
    \le
    2\exp(-2\ell(0.01)^2)
    =
    2\exp(-0.0002 \ell).
\]
Taking a union bound over all $\binom{N}{2}$ pairs, the probability that any
pair violates the desired distance bound is at most
\[
    \binom{N}{2}\cdot 2\exp(-0.0002\ell)
    <
    N^2\exp(-0.0002 \ell)
    \le
    1,
\]
where the final inequality follows from $\ell \ge10000\log N$. Hence the desired
property holds with positive probability, so there exists a deterministic
codebook satisfying the stated distance bounds.
\end{proof}

\paragraph{The non-adaptive localization procedure.}
Given inputs $(\lambda,\sigma,\delta_{\rm loc})$, with
$\delta_{\rm loc}\in(0,1/2)$, the procedure proceeds as follows.

\begin{enumerate}
    \item \textbf{Initialization.} Set $h=20\sigma$. If $2\lambda\le h$,
    return the trivial interval $I=[-\lambda,\lambda]$.

    \item \textbf{Partitioning.} Otherwise, partition $[-\lambda,\lambda]$
    into $N=\left\lceil \frac{2\lambda}{h}\right\rceil$ equal-length bins
    $B_1,\dots,B_N$ of width $\Delta=\frac{2\lambda}{N} = \Theta(\sigma)$.
    Let $a_j=-\lambda+(j-1)\Delta$ for $j=1,\dots,N+1$. Define
    \[
        B_j=[a_j,a_{j+1}) \quad \text{for } j<N,
        \qquad
        B_N=[a_N,a_{N+1}].
    \]

    \item \textbf{Codebook selection.} Let
    \[
        \ell = \left\lceil 10000\left(
            \log N + \log \frac{1}{\delta_{\rm loc}}
        \right)
        \right\rceil.
    \]
    Fix a deterministic codebook $c_1,\dots,c_N\in\{0,1\}^{\ell}$
    satisfying $0.49\ell \le d_H(c_i,c_j) \le 0.51 \ell$ for all
    $i\neq j$. Such a codebook exists by Lemma~\ref{lem: balanced codebook}.

    \item \textbf{Querying.} For $x\in\mathbb R$, define the clipped bin
    index
    \[
        b(x)=
        \begin{cases}
            1, & x<-\lambda,\\
            j, & x\in B_j,\\
            N, & x>\lambda.
        \end{cases}
    \]
    For each sample $X_t$, $t=1,\dots, \ell$, the learner sends the
    deterministic non-adaptive query
    \[
        Q_t(x)=c_{b(x),t}
    \]
    and receives the bit
    \[
        Y_t=Q_t(X_t)=c_{b(X_t),t}.
    \]

    \item \textbf{Decoding.} For each candidate bin $j\in[N]$, compute the
    Hamming score
    \begin{equation}\label{eq:Hamming_score}
        H_j = \sum_{t=1}^{\ell} \mathbf 1\{Y_t\neq c_{j,t}\}.
    \end{equation}
    The decoder selects $\widehat i \in \arg\min_{j\in[N]} H_j$
    (i.e., using a nearest neighbor rule) and returns the enlarged interval
    \begin{equation}\label{eq:enlarged_interval}
        I = \bigcup_{u=\max\{1,\widehat i-2\}}^{\min\{N,\widehat i+2\}} B_u.
    \end{equation}
\end{enumerate}

\begin{proof}[Proof of Theorem~\ref{thm: alternative localization}]
We first consider the trivial case $2\lambda \le h$. In this case, the
procedure simply returns the entire search range $I = [-\lambda, \lambda]$.
Since $\mu \in [-\lambda, \lambda]$ by assumption and the returned interval
length is $|I| = 2\lambda \le h = 20\sigma = O(\sigma)$, the theorem's
guarantees hold deterministically.

Henceforth, we assume $2\lambda > h$. By the choices of
$N = \lceil 2\lambda / h \rceil$ and $\Delta = 2\lambda / N$, we have
\begin{equation}\label{eq:bin_width_bound}
    h/2 \le \Delta \le h.
\end{equation}
The returned interval consists of at most five contiguous bins, and so the
length of the returned interval is bounded by
\[
    |I| \le 5\Delta \le 5h = 100\sigma = O(\sigma).
\]
It remains to prove that $\mu \in I$ with probability at least
$1-\delta_{\rm loc}$.

Let $i=b(\mu)$ be the index of the bin containing the true mean, and define
the \emph{safe set} of bins as those within an index distance of at most two
from the true bin:
\[
    S = \{u \in [N] : |u - i| \le 2\}.
\]
If the bin $\widehat{i}$ selected by the decoder is in the safe set $S$, then
the enlarged interval in~\eqref{eq:enlarged_interval} contains the true bin
$B_i$, and hence contains $\mu$. Therefore, it suffices to prove
\[
    \Pr(\widehat i\in S)\ge1-\delta_{\rm loc}.
\]

We call any bin $j\notin S$ a \textit{far bin}. We claim that if
$b(x)\notin S$, then $|x-\mu|\ge h$. Indeed, if $x\in[-\lambda,\lambda]$ and
$b(x)\notin S$, then $x$ lies in a bin separated from $B_i$ by at least two
full bin widths, and $2\Delta\ge h$ by~\eqref{eq:bin_width_bound}. If $x$ is
clipped to a boundary bin outside $S$, the same lower bound holds because
moving outside $[-\lambda,\lambda]$ only increases the distance from the true
bin. Thus,
\[
    b(X) \notin S \implies |X - \mu| \ge h.
\]
Applying Markov's inequality and the assumption $\EE|X-\mu|\le\sigma$, we
obtain the following bound for the probability that a sample falls into a far
bin:
\begin{equation}\label{eq:heavy_bin_unlikely}
    \Pr(b(X) \notin S)
    \le \Pr(|X-\mu| \ge h)
    \le \frac{\EE|X-\mu|}{h}
    \le \frac{\sigma}{h}
    = \frac{1}{20},
\end{equation}
where the last step follows from our choice of $h=20\sigma$.

Denote
\[
    p_u \coloneqq \Pr(b(X)=u)
    \qquad \text{for each } u\in[N].
\]
Using this notation and bound~\eqref{eq:heavy_bin_unlikely}, the probability
that the bin index falls in the safe set $S$ is at least
\begin{equation}\label{eq:safe_bins_highly_likely}
    \sum_{u \in S} p_u
    =
    1-\sum_{u\notin S}p_u
    =
    1-\Pr(b(X)\notin S)
    \ge
    \frac{19}{20}.
\end{equation}
To prove that the nearest-neighbor decoder outputs a safe bin, it suffices to
exhibit one safe ``anchor'' bin whose Hamming score is smaller than the score
of every far bin. We take this anchor to be the heaviest safe bin:
\[
    i^* \in \arg\max_{u\in S} p_u.
\]
Since the safe set contains at most five bins ($|S|\le5$), the pigeonhole
principle gives
\begin{equation}\label{eq:anchor_bin_likely}
    p_{i^*}
    \ge
    \frac{1}{|S|}\sum_{u\in S} p_u
    \ge
    \frac{19}{100}.
\end{equation}

We next compare the Hamming scores of $i^*$ and a fixed far bin
$j\notin S$. For each candidate bin $u\in[N]$, recall that its Hamming score is
given by
\[
    H_u = \sum_{t=1}^{\ell} \mathbf 1\{Y_t\neq c_{u,t}\},
    \quad \text{where} \quad
    Y_t=c_{b(X_t),t}.
\]
Because the empirical scores tightly concentrate around their expected values,
we first express the expectation $\EE[H_u]$ in terms of the codebook distances.
Since the codebook is fixed and the samples are i.i.d., the random bin indices
$b(X_t)$ are i.i.d. with distribution $(p_1,\dots,p_N)$. By applying the law of
total probability over the bin indices, exchanging the order of summation, and
substituting the query function $Y_t=c_{b(X_t),t}$, we obtain
\begin{align}
    \EE[H_u]
    &= \sum_{t=1}^{\ell} \Pr(Y_t\neq c_{u,t})
    \notag\\
    &=
    \sum_{t=1}^{\ell}\sum_{v=1}^N
    \Pr(b(X_t)=v) \Pr(Y_t\neq c_{u,t}\mid b(X_t)=v)
    \notag\\
    &=
    \sum_{v=1}^N p_v
    \sum_{t=1}^{\ell} \mathbf 1\{c_{u,t}\neq c_{v,t}\}
    \notag\\
    &=
    \sum_{v=1}^N p_v d_H(c_u,c_v) 
    \notag\\
    &=
    \sum_{v\neq u} p_v d_H(c_u,c_v).
    \label{eq: expected_score_identity}
\end{align}

Applying~\eqref{eq: expected_score_identity} to the anchor bin $u=i^*$,
together with the upper code-distance bound~\eqref{eq:code_distance_bound}
and the lower bound~\eqref{eq:anchor_bin_likely} for $p_{i^*}$, yields
\[
    \EE[H_{i^*}]
    =
    \sum_{v\neq i^*}
    p_v d_H(c_{i^*},c_v)
    \le
    0.51\ell(1-p_{i^*})
    \le
    0.51\ell\left(1-\frac{19}{100}\right)
    =
    0.4131\ell.
\]
Conversely, for any far bin $j\notin S$, its codeword is distinct from all safe
bin codewords, i.e.,
\[
    d_H(c_j,c_u)\ge0.49\ell
    \qquad \text{for all } u\in S,
\]
by the lower code-distance bound in~\eqref{eq:code_distance_bound}. Restricting
the sum in~\eqref{eq: expected_score_identity} to the safe set and
applying~\eqref{eq:safe_bins_highly_likely}, we obtain
\[
    \EE[H_j]
    \ge
    \sum_{u\in S} p_u d_H(c_j,c_u)
    \ge
    0.49\ell \sum_{u\in S}p_u
    \ge
    0.49\ell\left(\frac{19}{20}\right)
    =
    0.4655\ell.
\]
Thus every far bin has expected Hamming score separated from the anchor bin by
a gap linear in the codeword length~$\ell$:
\begin{equation}\label{eq:anchor_gap}
    \EE[H_j-H_{i^*}]
    \ge
    0.4655\ell - 0.4131\ell
    =
    0.0524\ell
    >
    0.05\ell.
\end{equation}

We now show that this expected gap persists with high probability. For a fixed
far bin $j\notin S$, write
\[
    H_j-H_{i^*}
    =
    \sum_{t=1}^{\ell} A_t,
    \quad \text{where} \quad
    A_t
    =
    \mathbf 1\{Y_t\neq c_{j,t}\}
    -
    \mathbf 1\{Y_t\neq c_{i^*,t}\}.
\]
Because the samples $X_t$ are independent and the codebook is fixed, the
random variables $A_t$ are independent. Each satisfies $A_t\in[-1,1]$, meaning
their range is at most $2$. Applying Hoeffding's inequality and using the
expected gap from~\eqref{eq:anchor_gap}, we obtain
\[
    \Pr(H_j\le H_{i^*})
    \le
    \exp\left(-\frac{(0.05)^2\ell}{2}\right)
    =
    \exp(-0.00125\ell).
\]

Finally, because the decoder selects a bin of minimum Hamming score, if the
decoder selects a far bin, then some far bin must have score no larger than
the anchor's score:
\[
    \{\widehat i\notin S\}
    \subseteq
    \left\{
        \exists j\notin S:\ H_j\le H_{i^*}
    \right\}.
\]
Taking a union bound over all far bins, we have
\[
    \Pr(\widehat i\notin S)
    \le
    \sum_{j\notin S}\Pr(H_j\le H_{i^*})
    <
    N\exp(-0.00125\ell)
    \le
    N^{-11.5}\delta_{\rm loc}^{12.5}
    \le
    \delta_{\rm loc}.
\]
The last two inequalities follow from our chosen codeword length
$\ell \ge 10000(\log N+\log(1/\delta_{\rm loc}))$, as well as $N\ge2$ and
$\delta_{\rm loc}\in(0,1/2)$. Hence,
\[
    \Pr(\widehat i\in S)\ge1-\delta_{\rm loc}.
\]

Finally, the sample complexity of the localization procedure is exactly the
codeword length $\ell$, and
\[
    \ell
    =
    O\left(
        \log N+\log\frac{1}{\delta_{\rm loc}}
    \right)
    =
    O\left(
        \log\frac{\lambda}{\sigma}
        +
        \log\frac{1}{\delta_{\rm loc}}
    \right),
\]
since $N=\lceil 2\lambda/h\rceil$ and $h=20\sigma$.
\end{proof}

\section{Details of Section~\ref{sec:m_samples_per_query} (Multiple Samples per Query)} \label{app:m_samples_per_query}

We first establish the bound~\eqref{eq:sigma_m} for the scale parameter for the local average.

\begin{lemma}[Scale bound for the local average]\label{lem:scale_bound_local_avg}
  Let $Y_1,\dots,Y_m$ be i.i.d.\ copies of a zero-mean random variable
  $Y$ with $\EE|Y|^k \le \sigma^k$ for some $k>1$. Define the local
  average $\bar Y = \frac{1}{m} \sum_{j=1}^m Y_j$. Then
  \[
    \EE|\bar Y|^k \le \sigma_m^k,
  \]
  where
  \begin{equation}\label{eq:sigma_m_appendix}
    \sigma_m \coloneqq
    \begin{cases}
      C_k \sigma \cdot m^{-1/2},      & k \ge 2,\\
      C_k \sigma \cdot m^{-(1-1/k)},  & 1<k<2,
    \end{cases}
  \end{equation}
  and $C_k$ is a constant depending only on $k$.
\end{lemma}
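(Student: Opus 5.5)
We split into the two regimes named in the statement and, in each, bound $\EE|\sum_{j=1}^m Y_j|^k$ by a classical moment inequality for sums of independent zero-mean variables. Dividing by $m^k$ then gives the stated form of $\sigma_m$.

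\textbf{Case $1<k<2$.} We invoke the von Bahr--Esseen inequality: for independent zero-mean $Y_j$ and $1\le k\le 2$,
\[
\EE\Big|\sum_{j=1}^m Y_j\Big|^k \le 2\sum_{j=1}^m \EE|Y_j|^k \le 2m\sigma^k .
\]
Dividing by $m^k$ yields $\EE|\bar Y|^k \le 2\sigma^k m^{1-k} = \big(2^{1/k}\sigma m^{-(1-1/k)}\big)^k$. This is the claim with $C_k = 2^{1/k}\le 2$.

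\textbf{Case $k\ge 2$.} We apply the Marcinkiewicz--Zygmund inequality,
\[
\EE\Big|\sum_{j} Y_j\Big|^k \le B_k\, \EE\Big(\sum_j Y_j^2\Big)^{k/2}.
\]
Since $k/2\ge 1$, the map $x\mapsto x^{k/2}$ is convex, so Jensen's inequality on the average gives
\[
\Big(\sum_j Y_j^2\Big)^{k/2} = m^{k/2}\Big(\frac1m\sum_j Y_j^2\Big)^{k/2} \le m^{k/2-1}\sum_j |Y_j|^k .
\]
Taking expectations, $\EE(\sum_j Y_j^2)^{k/2}\le m^{k/2}\sigma^k$. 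Hence $\EE|\bar Y|^k \le B_k m^{-k/2}\sigma^k$, which is the claim with $C_k=B_k^{1/k}$.

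The only point requiring care is citing the correct form and constants of the two inequalities: von Bahr--Esseen needs the zero-mean assumption, which we have, and Marcinkiewicz--Zygmund is the version with the square function. For the corollary we then apply the lemma to $Y=X-\mu$, so that $\EE|\bar X_t-\mu|^k\le\sigma_m^k$ and $\EE\bar X_t=\mu\in[-\lambda,\lambda]$. This places the induced distribution in $\mathcal D(k,\lambda,\sigma_m)$, and Theorem~\ref{thm: main} applies directly.
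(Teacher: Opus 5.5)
Your proof is correct and matches the paper's: von Bahr--Esseen with constant $2$ for $1<k<2$ (giving $C_k=2^{1/k}$), and Marcinkiewicz--Zygmund for $k\ge 2$. The only cosmetic difference is in the $k\ge 2$ case. The paper cites the norm form $\bigl(\EE|\sum_j Y_j|^k\bigr)^{2/k}\le C_k'\sum_j(\EE|Y_j|^k)^{2/k}$ directly, whereas you start from the square-function form and reach the same $m^{k/2}\sigma^k$ bound through your power-mean (Jensen) step.
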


\begin{proof}
  \textbf{Case $1<k<2$.} By the von~Bahr--Esseen inequality~\cite{von1965inequalities},
  \[
    \EE \left\lvert \sum_{j=1}^m Y_j \right\rvert^k
    \le 2\sum_{j=1}^m \EE|Y_j|^k
    \le 2m\sigma^k .
  \]
  Dividing by $m^k$ to transition to the local average $\bar{Y} = \frac{1}{m} \sum_{j=1}^m Y_j$ yields
  \[
    \EE|\bar Y|^k 
    = \frac{1}{m^k} \left( \EE \left| \sum_{j=1}^m Y_j \right|^k  \right)
    \le 2 \sigma^k \cdot m^{1-k}
    = \bigl(2^{1/k} \cdot \sigma \cdot m^{-(1-1/k)}\bigr)^k ,
  \]
  so the claim holds with $C_k = 2^{1/k}$.

  \textbf{Case $k\ge 2$.} We apply the Marcinkiewicz--Zygmund inequality in its squared-norm form (see, e.g., \cite[Exercise 6.2.6]{vershynin2026high}), which states that for independent zero-mean random variables $Y_1, \dots, Y_m$:
  \begin{equation}
    \left( \EE \left| \sum_{j=1}^m Y_j \right|^k \right)^{2/k} \leq C_k' \sum_{j=1}^m \left( \EE |Y_j|^k \right)^{2/k}
    \end{equation}
    for some constant $C_k'$.
    Given that the $Y_j$ are i.i.d. copies of $Y$ with $\EE |Y_j|^k  = \EE|Y|^k \leq \sigma^k$, this simplifies to
    \begin{equation}
    \left( \EE \left| \sum_{j=1}^m Y_j \right|^k \right)^{2/k} 
    \leq  C_k' \,  m  \sigma^2.
    \end{equation}
    Raising both sides to the power of $k/2$ yields
    \begin{equation}
        \EE \left| \sum_{j=1}^m Y_j \right|^k 
        \leq (C_k')^{k/2} \sigma^k \cdot m^{k/2}.
    \end{equation}
    Dividing by $m^k$ to transition to the local average $\bar{Y} = \frac{1}{m} \sum_{j=1}^m Y_j$ then yields
    \begin{equation}
        \EE |\bar{Y}|^k 
        = \frac{1}{m^k} \left( \EE \left| \sum_{j=1}^m Y_j \right|^k \right)
        \leq (C_k')^{k/2} \, \sigma^k m^{-k/2}
    \end{equation}
    This is equivalent to $\EE |\bar{Y}|^k \leq (C_k \sigma \cdot m^{-1/2})^k$ where $C_k = \sqrt{C_k'}$. Thus, we arrive at the form $\EE |\bar{Y}|^k \leq \sigma_m^k$ with $\sigma_m = C_k \sigma m^{-1/2}$.
\end{proof}

\begin{proof}[Proof of Corollary~\ref{cor:m_samples_per_query}]
    For each agent $t$, the local average $\widebar X_t$ has mean $\mu$, and applying Lemma~\ref{lem:scale_bound_local_avg} with $Y_j = X_{t,j}-\mu$, we obtain
\[
  \EE\bigl|\widebar X_t - \mu\bigr|^k \le \sigma_m^k,
\]
with $\sigma_m$ as in~\eqref{eq:sigma_m_appendix}. Thus the distribution of
$\widebar X_t$ belongs to the family $\mathcal{D}(k,\lambda,\sigma_m)$.
Since the corollary assumes $\sigma_m \ge \eps$, Theorem~\ref{thm: main}
applies directly with scale parameter $\sigma_m$. It yields an
$(\eps,\delta)$-PAC estimator with bit complexity
\[
  n_{\mathrm{bit}}
  = O\left( \log\left(\frac{\lambda}{\sigma_m}\right) \right)
    +
    \begin{cases}
      O_k\left( \dfrac{\sigma_m^2}{\eps^2} \cdot
                \log\left(\dfrac{1}{\delta}\right) \right) & k>2,\\ \\
      O\left(  \dfrac{\sigma_m^2}{\eps^2} \cdot
                \log\left(\dfrac{\sigma_m}{\eps}\right) \cdot
                \log\left(\dfrac{1}{\delta}\right) \right) & k=2,\\ \\
      O_k\left( \left(\dfrac{\sigma_m}{\eps}\right)^{\frac{k}{k-1}} \cdot
                \log\left(\dfrac{1}{\delta}\right) \right) & 1<k<2.
    \end{cases}
\]
The total number of raw samples drawn is $n = m \cdot n_{\mathrm{bit}}$.
Substituting the expression~\eqref{eq:sigma_m_appendix} for $\sigma_m$ and
absorbing all $k$-dependent constants into the $O_k$ notation gives the sample complexity in~\eqref{eq:sample_complexity_m_samples_per_query} of Corollary~\ref{cor:m_samples_per_query}.
\end{proof}

\end{document}